\newtheorem{proposition}{Proposition} 
\begin{document}
%
\title{Deep Cross-modal Hashing via Margin-dynamic-softmax Loss}
%
%
%

\author{Rong-Cheng~Tu, Xian-Ling~Mao, Rong-Xin Tu, Binbin Bian, Wei~Wei and Heyan Huang
	\IEEEcompsocitemizethanks{\IEEEcompsocthanksitem R. Tu, X. Mao, B. Bian, and H. Huang are with the Department of Computer Science and Technology, Beijing Institute of Technology, Beijing 100081, China.\protect\\
		E-mail: \{tu\_rc, maoxl, 3220180838, hhy63\}@bit.edu.cn.
		\IEEEcompsocthanksitem R.X. Tu is with the school of Information Management, Jiangxi University of Finance and Economics, Nanchang 330013, China.\protect\\
		E-mail: rongxin\_tu@163.com.
		\IEEEcompsocthanksitem W. Wei is with School of Computer Science, Huazhong University of Science and Technology, Wuhan 430074, China.\protect\\
		E-mail: weiw@hust.edu.cn}
	\thanks{}
}

%
%

\markboth{Journal of \LaTeX\ Class Files,~Vol.~14, No.~8, August~2015}%
{Shell \MakeLowercase{\textit{et al.}}: Bare Demo of IEEEtran.cls for IEEE Journals}
%



\IEEEtitleabstractindextext{
\begin{abstract}
	Due to their high retrieval efficiency and low storage cost for cross-modal search task, cross-modal hashing methods have attracted considerable attention.  For the  supervised cross-modal hashing methods, how to make the learned hash codes preserve semantic information sufficiently contained in the label of datapoints is the key to further enhance the retrieval performance. Hence, almost all supervised cross-modal hashing methods usually depends on defining a similarity between datapoints with the label information to guide the hashing model learning fully or partly. However, the defined similarity between datapoints can only capture the label information of datapoints partially and misses abundant semantic information,   then hinders the further improvement of retrieval performance.  Thus, in this paper, different from previous works, we propose a novel cross-modal hashing method without defining the similarity between datapoints, called Deep Cross-modal Hashing via \textit{Margin-dynamic-softmax Loss} (DCHML). Specifically, DCHML first trains a proxy hashing network to transform each category information of  a dataset into a semantic discriminative hash code, called proxy hash code. Each proxy hash code can preserve the semantic information of its corresponding category well. Next, without defining the  similarity between datapoints to supervise the training process of the modality-specific hashing networks , we propose a novel \textit{margin-dynamic-softmax loss} to directly utilize the proxy hashing codes as supervised information. Finally, by minimizing the novel \textit{margin-dynamic-softmax loss}, the modality-specific hashing networks can be trained to generate hash codes which can simultaneously preserve the cross-modal similarity and abundant semantic information well. Extensive experiments on three benchmark datasets show that the proposed method outperforms the state-of-the-art baselines in cross-modal retrieval task.
\end{abstract}

\begin{IEEEkeywords}
Cross-modal retrieval, deep supervised hashing, margin-dynamic-softmax loss.
\end{IEEEkeywords}
}

\maketitle


%
\IEEEpeerreviewmaketitle

\section{Introduction}
With the fast development of the Internet, tremendous amounts of multimedia data such as images, texts and videos have been generated every day. Commonly, relevant data from different modalities may have semantic correlations. Thus, the cross-modal search techniques become more and more important. The goal of cross-modal search is to search semantically similar
instances from one modality by using a query
from another modality, for example, searching images (texts) with texts (images) as queries. Among the existing cross-modal search techniques, similarity-preserving cross-modal hashing methods are advantageous due to their high retrieval efficiency and low storage cost.  The core of similarity-preserving cross-modal hashing methods is to map datapoints from different modalities into a common Hamming space with their original semantic similarity preserved well.

Generally speaking, existing cross-modal hashing methods can be divided into two categories: unsupervised methods and supervised methods. Unsupervised cross-modal hashing methods \cite{da2018nonlinear,hu2020creating,jiang2019discrete,li2016linear,liu2018fast} mainly learn hashing functions from unlabeled data, typical methods including  Cross View Hashing (CVH) \cite{kumar2011learning}, Inter-Media Hashing (IMH) \cite{song2013inter}, Collective Matrix Factorization Hashing (CMFH) \cite{ding2014collective} and Deep Joint Semantics Reconstructing Hashing (DJSRH) \cite{su2019deep}. Compared with unsupervised cross-modal hashing methods, supervised methods \cite{cao2017collective,cao2016correlation,chen2018dual,lin2020semantic,liong2018deep,tu2020deep} can achieve better performance in the retrieval task by using the semantic labels to supervise the training process of hashing models. Recently, by integrating the feature learning and hash codes learning into end-to-end deep networks, deep supervised cross-modal hashing methods
\cite{li2018self,lin2020semantic,shen2017deep,shi2019equally} have shown state-of-the-art performance in cross-modal retrieval task.

For deep supervised cross-modal hashing methods, how to make the learned hash codes preserve semantic information sufficiently is a key point to further enhance the retrieval performance. 
As far as we know, almost all supervised cross-modal hashing methods usually depends on defining a similarity between datapoints with the label information to guide the hashing model learning fully or partly. However, the defined similarity between datapoints can only capture the label information of datapoints partially and misses abundant semantic information,   then hinders the further improvement of retrieval performance. Furthermore, with the defined similarity between datapoints, existing supervised methods usually need to construct  pairwise or triplet loss function to optimize the hashing networks, which leads to the following two problems. First, they cannot make full use of all available datapairs, because it is hard to exhaustively use all the similarities of datapoints to train the hashing networks. Second, it is hard to find the informative datapair or triplet to construct the loss function to train the hashing models.  Thus, based on the similarity of datapoints as guide information, existing supervised methods cannot generate hash codes with the cross-modal similarity and abundant semantic label information preserved well.

Thus, to tackle this problem, in this paper, we propose a novel \textit{margin-dynamic-softmax loss}, then we propose a novel Deep Cross-Modal Hashing via Margin-dynamic-softmax Loss, called DCHML. Specifically, inspired by the class-level code based single-modal hashing methods \cite{huang2016class,luo2019discrete}, DCHML first trains a proxy hashing network to learn a hash code for each category, and the learned hash code contains sufficiently semantic information of its corresponding category. Thus, the discriminative hash code can be treated as a proxy of the category, called proxy hash code. Then, the novel \textit{margin-dynamic-softmax loss} utilizes the learned proxy hash codes as supervised information to train the modality-specific hashing networks without defining the semantic similarity between datapoints. By minimizing the novel \textit{margin-dynamic-softmax loss}, the modality-specific hashing networks will be trained to map their corresponding modal datapoints into a common Hamming space where the cross-modal similarity and abundant semantic label information will be preserved well.
Extensive experiments on three benchmark datasets show that the proposed method outperforms the state-of-the-art baselines in cross-modal retrieval task. 
\section{Related Work}
In general, cross-modal hashing methods can be roughly divided into two categories: unsupervised cross-modal hashing methods and supervised cross-modal hashing methods.

\subsection{Unsupervised Cross-modal Hashing Methods}
Unsupervised cross-modal hashing methods \cite{song2013inter,hu2020creating,su2019deep,wang2015semantic,xu2017learning} mainly use unlabeled data to learn hashing functions, such as Inter-Media Hashing (IMH) \cite{song2013inter}, Collective Matrix Factorization Hashing (CMFH) \cite{ding2014collective},  Cross View Hashing (CVH) \cite{kumar2011learning}, Robust and Flexible Discrete Hashing (RFDH) \cite{wang2017robust}, and Deep Joint Semantics Reconstructing Hashing (DJSRH) \cite{su2019deep}. IMH maps heterogeneous
multimedia data into hash codes by constructing graphs. It learns the hash functions by linear regression for new instances. Its joint learning scheme can effectively preserve the inter- and intra- modality consistency.
CMFH, which fuses the multi-modal information to improve the retrieval accuracy, learns unified binary hash codes by doing matrix factorization with latent factor model in the training phase. 
CVH is a typical graph-based unsupervised cross-modal hashing method extended from the single-modal spectral hashing, and it learns hash functions to map the original cross-modal datapoints into binary hash codes with cross-modality similarity preserved by minimizing the weighted Hamming distances.  
RFDH first learns unified hash codes for the training datapoints by using discrete collaborative matrix factorization, and then it enhance the robustness and flexibility of the hash codes by jointly adopting l2,1-norm and adaptively weight of each modality.
DJSRH proposes a joint-semantics affinity matrix to learns the binary hash codes.

\subsection{Supervised Cross-modal Hashing Methods}
Supervised cross-modal hashing methods \cite{wang2015semantic,zhang2014large,cao2018cross,erin2017cross,lin2020semantic,luo2018sdmch,shi2019equally,wang2017adversarial} mainly exploit the label information of datapoints to learn hashing functions to improve performance. Early works mainly utilize the hand-crafted features of datapoint to learn shallow architecture based hashing functions to project them into  binary hash codes, such as Semantic Correlation Maximization (SCM) \cite{zhang2014large}, Semantic Topic Multimodal hashing (STMH) \cite{wang2015semantic}, Discrete Cross-modal Hashing (DCH) \cite{xu2017learning}, Label Consistent Matrix Factorization Hashing \cite{wang2018label} and Discrete Latent Factor hashing (DLFH) \cite{jiang2019discrete}. SCM learns to project datapoints into hash codes with maximum semantic information preserved by taking semantic labels into consideration.
STMH leverages semantic modeling to capture different semantic topics for texts and images respectively, and then projects the captured semantic features to the binary hash codes.
DCH  learns a set of modality-dependence hash projections as well as discriminative binary codes to keep the classification consistent with the label for multi-modal data.
LCMFH utilizes a auxiliary matrix to map the original datapoints to the low-dimensional latent space, and then quantifies them to the binary hash codes with semantic label supervised.
DLFH directly optimizes hash codes without continuous relaxation by proposing an efficient hash learning algorithm based on the discrete latent factor model.

Recently, by integrating the feature learning and hash codes learning into end-to-end deep networks, deep supervised cross-modal hashing methods
\cite{cao2018cross,jiang2017deep,li2018self,shi2019equally,xu2019graph,lin2020semantic} have shown state-of-the-art performance. For example, Deep cross-modal hashing (DCMH) \cite{jiang2017deep} utilizes a negative log-likelihood loss to generate cross-modal similarity preserving hash codes by an end-to-end deep learning framework. 
Cross-modal deep variational hashing (CMDVH) \cite{erin2017cross} uses a two step framework. In the first step the method learns unified hash code for image-text pair in a database, and utilize the learned unified hash codes to learn hashing functions in the second step. SSAH \cite{li2018self} generates binary hash codes by utilizing two adversarial networks to jointly model different modalities and capture their semantic relevance under the supervision of the learned semantic feature. 
EGDH \cite{shi2019equally} cooperates hashing-based retrieval with classification to generate hash codes with the intra-class aggregated and inter-class relationship preserved. 
SDCH \cite{lin2020semantic} utilizes a semantic label branches to preserve semantic information of the learned features by integrating with inter-modal pairwise loss, cross-entropy loss and quantization loss.

\begin{figure*}[tb]
	\centering
	\includegraphics[width=\textwidth]{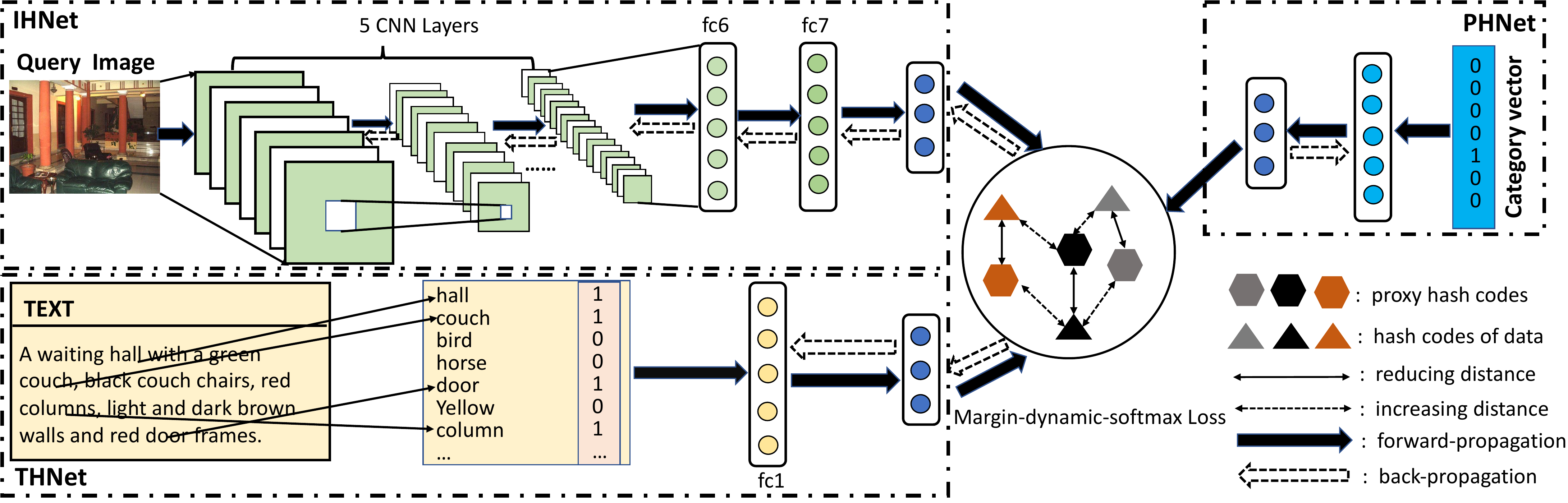}
	\caption{The learning framework of DCHML. It consists of three modules:  a proxy hashing network (PHNet), an image hashing network (IHNet) and a text hashing network (THNet). Specifically, the PHNet is used to map each category in the dataset into compact proxy hash code which preserve the semantic information of its corresponding category well. Then with the learned proxy hash codes as the supervised information, the modality-specific hashing networks, IHNet and THNet, can be trained well to map their corresponding modal datapoints into hash codes with the semantic structure information preserved well.  
	}
	\label{fig_architecture} 
\end{figure*}

Almost all deep supervised cross-modal hashing methods learn hashing functions depending on defining a similarity between datapoints with the label information fully \cite{cao2018cross,jiang2017deep} or partly \cite{shi2019equally}. However, the defined similarity between datapoints can only capture the label information of datapoints partially and misses abundant semantic information,  then hinders the further improvement of retrieval performance. Different from these previous works, inspired by the class-level code based single-modal hashing methods \cite{huang2016class,luo2019discrete}, our proposed method first transforms each category information into proxy hash code; then, by using the proxy hash codes as supervised information, a novel \textit{margin-dynamic-softmax loss} is first proposed to optimize hashing model to generate sufficiently semantic structure information preserved hash codes. Please note that our proposed method DCHML is really different from the class-level code based single-modal hashing methods \cite{huang2016class,luo2019discrete}: First, such class-level code based single-modal hashing methods are shallow hashing methods, but our proposed method DCHML is deep architecture that both the proxy hash codes and image hash codes are generated by deep hashing network; Second, DCHML proposes a novel \textit{margin-dynamic-softmax loss}. By minimizing the \textit{margin-dynamic-softmax loss}, the modality-specific hashing networks can generate hash codes which preserve semantic structure information sufficiently.

\section{Our Method}
In this section, we will first give the problem definition in subsection \ref{pf}. The whole architecture of DCHML will be introduced in subsection \ref{ar}. Then, in subsection \ref{pm}, we will introduce how the PHNet generates proxy hash code for each category in detail. Furthermore, we will introduce how to train the modality-specific hashing networks by utilizing the proxy hash codes as supervised information in subsection \ref{sm}. Finally, we will introduce the out-of-sample extension in subsection \ref{oe}. 
\subsection{Problem Formulation}
\label{pf}
Without loss of generality, we focus on cross-modal retrieval for image-text datasets. Suppose we have a dataset contained $n$ instances with image-text pairs, denoted by $\boldsymbol{O}=\{\boldsymbol{o}_i\}_{i=1}^n$, and the corresponding label matrix is given as $\boldsymbol{L} = \{\boldsymbol{l}_i\}_{i=1}^n \in \{0, 1\}^{c\times n}$, where $c$ is the total number of categories in the dataset. For each instance $\boldsymbol{o}_i = (\boldsymbol{x}_i^v, \boldsymbol{x}_i^t)$, $\boldsymbol{x}_i^v$ and $\boldsymbol{x}_i^t$ denote the image and text datapoints in the $i^{th}$ instance $\boldsymbol{o}_i$ respectively, and the corresponding label vector is $\boldsymbol{l}_i = [l_{i1}, l_{i2}, \dots, l_{ic}]^T$. If $\boldsymbol{o}_i$ belongs to the $j^{th}$ category, then $l_{ij} = 1$, otherwise $l_{ij} = 0$. Furthermore, we use $\boldsymbol{Y} = \{\boldsymbol{y}\}_{i = 1}^c \in \{0, 1\}^{c\times c}$ to define the representation of categories, where $\boldsymbol{y}_i$ is the one-hot vector of the $i^{th}$ category, i.e., only $y_{ii} = 1$, and the other elements of $\boldsymbol{y}_i$ are $0$.

The goal of deep cross-modal hashing is to learn two modality-specific hashing functions $H^v:\boldsymbol{x}_i^v \rightarrow \boldsymbol{b}_i^v \in \{-1,1\}^k$ and $H^t:\boldsymbol{x}_i^t \rightarrow \boldsymbol{b}_i^t \in \{-1,1\}^k$, where $k$ is the length of hash codes, to map image and text datapoints into hash codes with the cross-modal semantic similarity preserved well.
\subsection{Architecture}
\label{ar}
As shown in Figure 1, our framework consists of three components: a proxy hashing network (PHNet), an image hashing network (IHNet) and a text hashing network (THNet). 

The PHNet is introduced to generate a discriminative compact proxy hash code for each category. The input of PHNet is the one-hot vector $\boldsymbol{y}$ of each category. PHNet contains two fully-connected layers: the first fully-connected layer has 512 hidden units, and the second fully-connected layer has $k$ hidden units where $k$ is the length of hash codes. The activation function for the first fully-connected layer is RELU \cite{krizhevsky2012imagenet}, and for the second fully-connected layer, a $tanh(\cdot)$ function is used as an activation function.

The IHNet is used to map images to informative hash codes. IHNet contains five convolutional layers and three fully-connected layers. Moreover, the five convolutional layers and the first two fully-connected layers are the same with the first seven layers of AlexNet \cite{krizhevsky2012imagenet}. The third fully-connected layer consists of $k$ hidden units where $k$ is the length of hash codes, and a $tanh(\cdot)$ function is used as an activation function.

The THNet is used to map text datapoints to informative hash codes. The THNet is a two-fully-connected neural network. The first fully-connected layer consists of $2,048$ hidden units, and the activation function for it is RELU \cite{krizhevsky2012imagenet}. The second layer consists of $k$ nodes where $k$ is the length of hash codes, and a $tanh(\cdot)$ function is used as an activation function.

\subsection{PHNet Learning}
\label{pm}
The goal of PHNet is to project each category $\boldsymbol{y}_i$ into a proxy hash code $\boldsymbol{g}_i\in \{-1,1\}^k$. $k$ is the length of hash codes. To make the proxy hash code $\boldsymbol{g}_i$  be the proxy of category $\boldsymbol{y}_i$, the proxy hash codes of different categories should be dissimilar, i.e., the hamming distance between different proxy hash codes should be large. As the Hamming distance between $\boldsymbol{g}_i$ and $\boldsymbol{g}_j$ is defined as $d_H(\boldsymbol{g}_i, \boldsymbol{g}_j) = \frac{1}{2}(k - \boldsymbol{g}_i^T\boldsymbol{g}_j)$, thus the smaller inner products of $\boldsymbol{g}_i$ and $\boldsymbol{g}_j$ is, the larger $d_H(\boldsymbol{g}_i, \boldsymbol{g}_j)$ is. Thus, we can formalize the loss function of PHNet as follows:
\begin{equation}
\begin{aligned}
\min\limits_{\boldsymbol{\Theta}} \mathcal{L} &=  \sum\limits_{i = 1}^c \sum\limits_{j \not= i}\max(0, \boldsymbol{g}_i^T\boldsymbol{g}_j) + \alpha\sum\limits_{i = 1}^k\left\| \boldsymbol{a}_i\right\|_F^2.\\
&\ \ \ \ s.t. \ \ \ \ \boldsymbol{g}_i = sgn(\mathcal{F}(\boldsymbol{y}_i;\boldsymbol{\Theta}))
\end{aligned}
\end{equation}
where $\mathcal{F}(\boldsymbol{y}_i;\boldsymbol{\Theta})$ is the output of PHNet with the input category $\boldsymbol{y}_i$, and $\boldsymbol{\Theta}$ denotes the set of parameters of PHNet; $sgn(\cdot)$ is an element-wise function which returns $``1"$ if the element is positive and returns $``-1"$ otherwise; $\alpha$ is a hyper-parameter; and $\boldsymbol{a}_i$ is formulated as:
\begin{equation}
\begin{aligned}
\forall i = 1, 2,\cdots, k \ \ \ \ \boldsymbol{a}_i = \sum\limits_{j = 1}^c \boldsymbol{b}_j(i)
\end{aligned}
\end{equation}
where $ \boldsymbol{b}_j(i)$ denotes the $i^{th}$ element of the hash code $ \boldsymbol{b}_j$.

Specifically, it can be found that by minimizing the first term of $\mathcal{L}$, the inner product of $\boldsymbol{g}_i$ and $\boldsymbol{g}_j$ is smaller than zero, i.e., the Hamming distance $d_H(\boldsymbol{g}_i, \boldsymbol{g}_j)$ is larger than $\frac{k}{2}$. It means each proxy hash code $\boldsymbol{g}_i$ is distinguished to the other proxy hash codes, thus  the proxy of category $\boldsymbol{y}_i$ can be denoted as $\boldsymbol{g}_i$. Moreover, the second term of $\mathcal{L}$ is a widely used regular term which can make the learned hash codes express more information.

Furthermore,as the $sgn(\cdot)$ function is non-differentiable at zero and the derivation of it will be zeros for a non-zero input, thus the parameters of PHNet will not be updated with the back-propagation algorithm when minimizing the loss function $\mathcal{L}$. Similar to previous methods \cite{shi2019equally,tumls3rduh}, we directly discard the $sgn(\cdot)$ function to ensure the parameters of PHNet can be updated, and add a quantization loss to make each element of output of PHNet can be close to $``+1"$ or $``-1"$. Then, the final loss function can be formulated as follows:
\begin{equation}
\begin{aligned}
\min\limits_{\boldsymbol{\Theta}}\mathcal{L}_{PHNet} &=  \sum\limits_{i = 1}^c \sum\limits_{j \not= i}\max(0, \boldsymbol{\hat{g}}_i^T\boldsymbol{\hat{g}}_j)  + \alpha\sum\limits_{i = 1}^k\left\| \boldsymbol{\hat{a}}_i\right\|_F^2\\
&\ \ \ \ \ + \beta\sum\limits_{i = 1}^c\left\| \boldsymbol{\hat{g}}_i - sgn(\boldsymbol{\hat{g}}_i)\right\|_F^2\\
&\ \ \ s.t. \ \ \ \boldsymbol{\hat{g}}_i = \mathcal{F}(\boldsymbol{y}_i;\boldsymbol{\Theta}).
\end{aligned}
\label{ploss}
\end{equation}
where $\boldsymbol{\hat{g}}_i = \mathcal{F}(\boldsymbol{y}_i;\boldsymbol{\Theta})$ is the output of PHNet with the input $\boldsymbol{y}_i$, and the proxy hash code $\boldsymbol{g}_i=sgn(\boldsymbol{\hat{g}}_i)$; $\beta$ is a hyper-parameter.

Finally, by minimizing the loss function $\mathcal{L}_{PHNet}$, the parameters of PHNet can be updated by the back propagation algorithm. Thus, after training the PHNet, we can use it to generate the proxy hash codes $\boldsymbol{G} =\{\boldsymbol{g}_i\}_{i=1}^c$ according $\boldsymbol{g}_i = sgn(\mathcal{F}(\boldsymbol{y}_i;\boldsymbol{\Theta}))$.

\subsection{Modality-specific Hashing Network Learning}
\label{sm}
As the learned proxy hash codes preserve abundant semantic information of their corresponding categories, thus these proxy hash codes can be explored as the novel supervised information and be incorporated into the training process of modality-specific hashing networks, i.e., IHNet and THNet. Specifically, when the average of the Hamming distances between $\boldsymbol{b}_i^m$ ($m=v$ or $t$) and the proxy hash codes of categories that $\boldsymbol{x}_i^m$ belongs to is smaller than the Hamming distance between $\boldsymbol{b}_i^m$ and each proxy hash code of category that $\boldsymbol{x}_i^m$ does not belong to, then the learned hash code $\boldsymbol{b}_i^m$ will simultaneously preserve the cross-modal similarity and abundant semantic information well. Moreover, it can be formulated as follows:
\begin{equation}
\forall\ \  q \in \Omega_i, \ \ \ \ \ \frac{1}{|\Upsilon_i|}\sum\limits_{e\in\Upsilon_i}d_H(\boldsymbol{b}_i^m, \boldsymbol{g}_e) + \mu k \leq d_H(\boldsymbol{b}_i^m, \boldsymbol{g}_q).
\label{mutil}
\end{equation}
where $\mu \in [0, 1]$ is a predefined margin, and $k$ is the length of a hash code. $\Upsilon_i$ is the set of indexes of categories which the datapoint $\boldsymbol{x}_i^m$ belongs to, i.e., the indexes of element $``1"$ in the label vector $\boldsymbol{l}_i$, and $\Omega_i$ is the set of indexes of categories which the datapoint $\boldsymbol{x}_i^m$ does not belong to, i.e., the indexes of element $``0"$ in the label vector $\boldsymbol{l}_i$; $d_H(\boldsymbol{b}_i^m, \boldsymbol{g}_e)$ is the Hamming distance between $\boldsymbol{b}_i^m$ and $ \boldsymbol{g}_e$, which is defined as $d_H(\boldsymbol{b}_i^m, \boldsymbol{g}_e) = \frac{1}{2}(k - \boldsymbol{b}_i^{mT}\boldsymbol{g}_e)$.

\begin{figure}[tb]
	\centering
	\includegraphics[width=\linewidth]{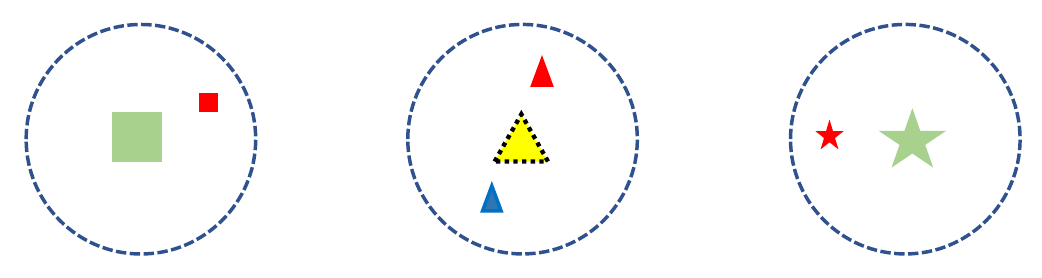}
	\caption{Suppose there are two categories: \textit{``square"} and \textit{``pentagram"}, and their corresponding proxy hash codes are the \textit{``green square"} and \textit{``green pentagram"}, respectively. The \textit{``red square"} is the hash code of an image point that only belongs to the \textit{``square"} category, and \textit{``red pentagram"} is the hash code of an image point that only belongs to the \textit{``pentagram"} category. Both the \textit{``red triangle"} and \textit{``blue triangle"} denote the hash codes of images that belong to the two categories: \textit{``square"} and \textit{``pentagram"}, and the \textit{``yellow triangle "} is the surrogate proxy point of \textit{``green square"} and \textit{``green pentagram"}.
	}
	\label{fig_illustrate} 
\end{figure} 
Here, we give the reason why the learned hash code $\boldsymbol{b}_i^m$ will simultaneously preserve the cross-modal similarity and abundant semantic structure information well, when Formula (\ref{mutil}) holding. First,  the Formula (\ref{mutil}) can be further reformulated as follows:
\begin{equation}
\forall\ \  q \in \Omega_i, \ \ \ \ \ \boldsymbol{b}_i^{mT}(\frac{1}{|\Upsilon_i|}\sum\limits_{e\in \Upsilon_i} \boldsymbol{g}_e) - \boldsymbol{b}_i^{mT}\boldsymbol{g}_q \geq \mu k.
\label{mutil1}
\end{equation}
where $\frac{1}{|\Upsilon_i|}\sum\limits_{e\in \Upsilon_i} \boldsymbol{g}_e$ is the average of the similar proxy hash codes of $\boldsymbol{b}_i^m$, and it can be treated as surrogate proxy point which $\boldsymbol{b}_i^m$ should be similar to. Furthermore, if $\boldsymbol{b}_i^m$ can be more similar with its corresponding surrogate proxy point $\frac{1}{|\Upsilon_i|}\sum\limits_{e\in \Upsilon_i} \boldsymbol{g}_e$ than the proxy hash codes in the set $\{\boldsymbol{g}_q|q\in \Upsilon_i\}$, $\boldsymbol{b}_i^m$ will simultaneously preserve the semantic similarity and semantic label information well. For example, as shown in Figure \ref{fig_illustrate}, when Formula (\ref{mutil1}) is true, \textit{``red square"} is similar to \textit{``blue triangle"} and \textit{``blue triangle"},  and is dissimilar to the \textit{``red pentagram"}, i.e., the hash codes preserve the semantic similarity; \textit{``red triangle"} and \textit{``blue triangle"} are both similar to the \textit{``yellow triangle "} with the \textit{``green square"} and \textit{``green pentagram"} information preserved, i.e., the hash codes preserve semantic label information well.

Now the key challenge is how to  make hash codes $\boldsymbol{b}_i^m$ to satisfy Formula (\ref{mutil1}).
Inspired by \cite{qian2019softtriple,tu2021partial}, we have the following proposition. 
\begin{proposition}
	Given a hash codes $\boldsymbol{b}_i^m$, a set of proxy hash codes $\boldsymbol{G} = \{ \boldsymbol{g}_i\}_{i=1}^c$ and the following functions:
	\begin{equation}
	\begin{aligned}
	\mathcal{L}_{S}^m(\boldsymbol{b}_i^m)= -log\frac{exp(\eta \boldsymbol{u}_i^m)}{exp(\eta \boldsymbol{u}_i^m) + \sum\limits_{q\in\Omega_i}exp(\eta \boldsymbol{b}_i^{mT}\boldsymbol{g}_q)}
	\end{aligned}
	\label{ls}
	\end{equation}
	\begin{equation}
	\begin{aligned}
	\mathcal{J}^m(\boldsymbol{b}_i^m) &= \max\limits_{\boldsymbol{p} \in \Phi}(\sum\limits_{q\in\Omega_i} \eta\boldsymbol{p}_q(\boldsymbol{b}_i^{mT}\boldsymbol{g}_q - \boldsymbol{u}_i^m) \\
	&+ \eta\boldsymbol{p}_{0}(\boldsymbol{u}_i^m - \boldsymbol{u}_i^m) + H(\boldsymbol{p}))
	\end{aligned}
	\label{p1}
	\end{equation}
	where $\boldsymbol{u}_i^m = \boldsymbol{b}_i^{mT}\boldsymbol{\bar{g}}_i - \mu k$, $\boldsymbol{\bar{g}}_i = \frac{1}{|\Upsilon_i|}\sum\limits_{e\in \Upsilon_i} \boldsymbol{g}_e$ and suppose $\boldsymbol{\bar{g}}_i$ is the proxy hash code of a new category that $\boldsymbol{x}_i^m$ belong to, called surrogate category ``0''; $\boldsymbol{p} \in \mathcal{R}^{c +1}$ is a distribution over the surrogate category ``0'' and the categories of the dataset, where $c$ is the total number of categories of the dataset; $\Phi$ is the simplex as $\Phi = \{\boldsymbol{p}|\sum\limits_{j \not\in \Omega_i \cup\{0\}} \boldsymbol{p}_j = 0, \sum\limits_{j \in \Omega_i \cup\{0\}} \boldsymbol{p}_j = 1,\  \forall  j, \boldsymbol{p}_j\geq0\}$; $H(\boldsymbol{p})$ denotes the entropy of the distribution $\boldsymbol{p}$.
	Then the following equation holds:
	\begin{equation}
	\mathcal{L}_{S}^m(\boldsymbol{b}_i^m) = \mathcal{J}(\boldsymbol{b}_i)
	\end{equation}
	\label{proposition}
\end{proposition}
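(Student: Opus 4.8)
The plan is to recognize this identity as the variational (Fenchel--Young) characterization of the log-sum-exp function: the entropy-regularized maximum of a linear functional over the probability simplex equals the log-sum-exp of the linear coefficients, attained at the softmax distribution. Accordingly, I would show that both $\mathcal{L}_{S}^m(\boldsymbol{b}_i^m)$ and $\mathcal{J}^m(\boldsymbol{b}_i^m)$ reduce to one and the same log-sum-exp expression, and then conclude by matching them.

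First I would rewrite the left-hand side. Using $-\log\frac{a}{a+b} = -\log a + \log(a+b)$, the loss becomes
\begin{equation}
\mathcal{L}_{S}^m(\boldsymbol{b}_i^m) = -\eta\boldsymbol{u}_i^m + \log\Big(\exp(\eta\boldsymbol{u}_i^m) + \sum_{q\in\Omega_i}\exp(\eta\boldsymbol{b}_i^{mT}\boldsymbol{g}_q)\Big),
\end{equation}
and after factoring $\exp(\eta\boldsymbol{u}_i^m)$ out of the logarithm this equals $\log\big(1 + \sum_{q\in\Omega_i}\exp(\eta(\boldsymbol{b}_i^{mT}\boldsymbol{g}_q - \boldsymbol{u}_i^m))\big)$. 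I would then define the scores $s_0 = 0$ and $s_q = \eta(\boldsymbol{b}_i^{mT}\boldsymbol{g}_q - \boldsymbol{u}_i^m)$ for $q\in\Omega_i$, so that the left-hand side is precisely $\log\sum_{j\in\{0\}\cup\Omega_i}\exp(s_j)$. Turning to $\mathcal{J}^m$, I would note that the middle term $\eta\boldsymbol{p}_0(\boldsymbol{u}_i^m - \boldsymbol{u}_i^m)$ vanishes identically, so its objective is $\sum_{q\in\Omega_i}\eta\boldsymbol{p}_q(\boldsymbol{b}_i^{mT}\boldsymbol{g}_q - \boldsymbol{u}_i^m) + H(\boldsymbol{p}) = \sum_{j\in\{0\}\cup\Omega_i} s_j\boldsymbol{p}_j + H(\boldsymbol{p})$, i.e., an entropy-regularized linear objective over $\Phi$ with exactly the same scores.

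The key technical step is to solve this constrained maximization in closed form. The constraints defining $\Phi$ force $\boldsymbol{p}_j = 0$ for $j\notin\Omega_i\cup\{0\}$, leaving only the normalization $\sum_{j\in\Omega_i\cup\{0\}}\boldsymbol{p}_j = 1$. The nonnegativity constraints are never active: since the gradient of the entropy contribution, $-\log\boldsymbol{p}_j - 1$, diverges to $+\infty$ as $\boldsymbol{p}_j\to 0^+$, the maximizer is pushed strictly into the interior. I would therefore form the Lagrangian with a single multiplier for the normalization constraint, set the partial derivatives to zero, and recover the softmax maximizer $\boldsymbol{p}_j^\star = \exp(s_j)/\sum_{l}\exp(s_l)$. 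Substituting it back and simplifying via $-\sum_j \boldsymbol{p}_j^\star\log\boldsymbol{p}_j^\star = \log\sum_l\exp(s_l) - \sum_j \boldsymbol{p}_j^\star s_j$ collapses the whole objective to $\log\sum_{j\in\{0\}\cup\Omega_i}\exp(s_j)$, which is identical to the expression obtained for $\mathcal{L}_{S}^m$; matching the two then gives $\mathcal{L}_{S}^m(\boldsymbol{b}_i^m) = \mathcal{J}^m(\boldsymbol{b}_i^m)$.

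I expect the main obstacle to lie entirely in the rigor of the optimization step rather than in any algebra: specifically, arguing that the stationary point is the global maximum (which follows from strict concavity of the entropy-regularized objective on the relative interior of $\Phi$) and that the interior maximizer is attained, so that the KKT multipliers for the nonnegativity constraints genuinely vanish. Once that variational identity for log-sum-exp is established cleanly, the remaining reduction of both sides to a common log-sum-exp is a routine rearrangement.
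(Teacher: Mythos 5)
Your proposal is correct and follows essentially the same route as the paper's own proof: both solve the entropy-regularized maximization over $\Phi$ in closed form via the KKT/Lagrangian conditions, obtain the softmax distribution as the maximizer, and substitute it back to collapse $\mathcal{J}^m(\boldsymbol{b}_i^m)$ to the log-sum-exp expression that equals $\mathcal{L}_{S}^m(\boldsymbol{b}_i^m)$. Your treatment is somewhat more careful than the paper's (explicitly arguing that the nonnegativity constraints are inactive and that strict concavity guarantees the stationary point is the global maximum), but the underlying argument is the same.
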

\begin{proof}
	According to the \textbf{K.K.T.} condition \cite{boyd2004convex}, the distribution $\boldsymbol{p}$ in Equation (\ref{p1}) has the closed-form solution:
	\begin{equation}
	\boldsymbol{p}_j = \left\{
	\begin{array}{lrc}
	\frac{exp(\eta (\boldsymbol{b}_i^{mT}\boldsymbol{g}_j - \boldsymbol{u}_i^m))}{1 + \sum\limits_{j\in\Omega_i}exp(\eta (\boldsymbol{b}_i^{mT}\boldsymbol{g}_j - \boldsymbol{u}_i^m))},   && {j \in \Omega_i},\\
	\frac{1}{1 + \sum\limits_{j\in\Omega_i}exp(\eta (\boldsymbol{b}_i^{mT}\boldsymbol{g}_j - \boldsymbol{u}_i^m))},    && {j=0}, \\
	0, &&{otherwise}
	\end{array} \right.
	\label{p}
	\end{equation}
	Thus, we have:
	\begin{equation}
	\begin{aligned}
	\mathcal{J}^m(\boldsymbol{b}_i^m) &= \sum\limits_{q\in\Omega_i} \eta\boldsymbol{p}_q(\boldsymbol{b}_i^{mT}\boldsymbol{g}_q - \boldsymbol{u}_i^m) + \eta\boldsymbol{p}_0(\boldsymbol{u}_i^m - \boldsymbol{u}_i^m) + H(\boldsymbol{p})\\
	&= log(1 + \sum\limits_{q\in\Omega_i}exp(\eta(\boldsymbol{b}_i^{mT}\boldsymbol{g}_q - \boldsymbol{u}_i^m)))\\
	&=-log\frac{exp(\eta \boldsymbol{u}_i^m)}{exp(\eta \boldsymbol{u}_i^m) + \sum\limits_{q\in\Omega_i}exp(\eta \boldsymbol{b}_i^{mT}\boldsymbol{g}_q)}\\
	&=\mathcal{L}_{S}^m(\boldsymbol{b}_i^m).
	\end{aligned}
	\end{equation}
\end{proof}
Based on Proposition \ref{proposition},  it can be found that minimizing the Formula (\ref{ls}) is equivalent to minimizing the Formula (\ref{p1}). Furthermore, the Formula (\ref{p1}) and Formula (\ref{mutil1}) have the following relationship: Primarily, when the Formula (\ref{p1}) without the entropy regularizer, it becomes the follows:
\begin{equation}
\begin{aligned}
\mathcal{J}_1^m(\boldsymbol{b}_i^m) &= \max\limits_{\boldsymbol{p} \in \Phi}(\sum\limits_{q\in\Omega_i} \eta\boldsymbol{p}_q(\boldsymbol{b}_i^{mT}\boldsymbol{g}_q - \boldsymbol{u}_i^m) + \eta\boldsymbol{p}_{0}(\boldsymbol{u}_i^m - \boldsymbol{u}_i^m))
\end{aligned}
\end{equation}
which is equivalent to:
\begin{equation}
\mathcal{J}_1^m(\boldsymbol{b}_i^m)= \left\{
\begin{array}{lrc}
0,   && {if\ condition1},\\
\max\limits_{q\in\Omega_i}(\eta\boldsymbol{b}_i^{mT}\boldsymbol{g}_q)-\eta\boldsymbol{u}_i^m,    && {otherwise}.
\end{array} \right.
\end{equation}
where $condition1$ is: $\forall\ \  q \in \Omega_i,  \boldsymbol{b}_i^{mT}\boldsymbol{g}_q - \boldsymbol{u}_i^m \leq 0$. Thus, the goal of minimizing $ \mathcal{J}_1^m(\boldsymbol{b}_i^m)$ is to make  $\forall\ \  q \in \Omega_i, \  \boldsymbol{b}_i^{mT}\boldsymbol{\bar{g}}_i  - \boldsymbol{b}_i^{mT} \boldsymbol{g}_q \geq \mu k$, i.e., Formula (\ref{mutil1}) hold. Furthermore, to reduce the influence from outliers and makes the $\mathcal{J}_1^m(\boldsymbol{b}_i^m)$ more robust, an entropy regularizer $H(\boldsymbol{p})$ is added to $\mathcal{J}_1^m(\boldsymbol{b}_i^m)$ that it becomes Formula (\ref{p1}).  
Thus, minimizing Formula (\ref{p1}) is to make the Formula (\ref{mutil1}) hold. Finally, it means minimizing the Formula (\ref{ls}) can make hash codes $\boldsymbol{b}_i^{m}$ satisfy Formula (\ref{mutil1}).

Thus, to  make hash codes $\boldsymbol{b}_i^m$ to satisfy Formula (\ref{mutil1}), we propose the following  novel $margin$-$dynamic$-$softmax$ loss $\mathcal{L}_S^m$ as the loss function for the hashing network of modality $m $ ($m = t$ or $v$, i.e., text modality or image modality), which can be formulated as:
\begin{equation}
\begin{aligned}
\min\limits_{\boldsymbol{W}^m}\mathcal{L}_{S}^m &= \sum\limits_{i=1}^n -log\frac{exp(\eta \boldsymbol{u}_i^m)}{exp(\eta \boldsymbol{u}_i^m) + \sum\limits_{q\in\Omega_i}exp(\eta \boldsymbol{b}_i^{mT}\boldsymbol{g}_q)} \\
&\ \ \ s.t. \ \ \ \ \boldsymbol{u}_i^m = \boldsymbol{b}_i^{mT}\boldsymbol{\bar{g}}_i -\mu k,\\
& \ \ \ \ \ \ \ \ \ \ \ \ \ \boldsymbol{b}_i^m = sgn(\mathcal{H}^m(\boldsymbol{x}_i^m;\boldsymbol{W}^m)).
\end{aligned}
\label{f1}
\end{equation}
Where $\mathcal{H}^m(\boldsymbol{x}_i^m;\boldsymbol{W}^m)$ is the output of modality-specific hashing network  with a datapoint $\boldsymbol{x}_i^m$ as input, and $\boldsymbol{W}^m$ represents the set of parameters in the hashing network of modality $m$; $\mu $ is a predefined margin; $k$ is the hash code length; $\eta$ is a hyper-parameter; $\boldsymbol{\bar{g}}_i = \frac{1}{|\Upsilon_i|}\sum\limits_{e\in \Upsilon_i} \boldsymbol{g}_e$. As  the denominator of softmax-like loss of a datapoint is dynamic according to the label information of the datapoint, and there is a margin for the positive categories of the datapoint, thus we called the Formula (\ref{f1}) as $margin$-$dynamic$-$softmax$ loss.

Furthermore, it can be found the Formula (\ref{f1}) only make the hashing codes of datapoints from different modality keep the semantic label information separately without the inter-modal interaction. Considering that for an instance $\boldsymbol{o}_i$, its image modal datapoint $\boldsymbol{x}_i^v$ and its text modal datapoint $\boldsymbol{x}_i^t$ describe the same object, thus their corresponding hash codes should be  as similar as possible.  Then, we further add a inter-modal loss which can be formula as follows:
\begin{equation}
\begin{aligned}
\min\limits_{\boldsymbol{W}^v, \boldsymbol{W}^t}\mathcal{L}_{I} &= \sum\limits_{i=1}^n -log\frac{exp(\eta \boldsymbol{u}_i^v)}{exp(\eta \boldsymbol{u}_i^t) + \sum\limits_{q\in\Omega_i}exp(\eta \boldsymbol{b}_i^{tT}\boldsymbol{g}_q)} \\
&+ \sum\limits_{i=1}^n -log\frac{exp(\eta \boldsymbol{u}_i^t)}{exp(\eta \boldsymbol{u}_i^v) + \sum\limits_{q\in\Omega_i}exp(\eta \boldsymbol{b}_i^{vT}\boldsymbol{g}_q)} \\
&\ \ \ s.t. \ \ \ \ \boldsymbol{u}_i^v = \boldsymbol{b}_i^{vT}\boldsymbol{\bar{g}}_i -\mu k,\\
&\ \ \ \ \ \ \ \ \ \ \ \ \ \boldsymbol{u}_i^t = \boldsymbol{b}_i^{tT}\boldsymbol{\bar{g}}_i -\mu k,\\
& \ \ \ \ \ \ \ \ \ \ \ \ \ \boldsymbol{b}_i^v = sgn(\mathcal{H}^v(\boldsymbol{x}_i^v;\boldsymbol{W}^v)),\\
& \ \ \ \ \ \ \ \ \ \ \ \ \ \boldsymbol{b}_i^t = sgn(\mathcal{H}^t(\boldsymbol{x}_i^t;\boldsymbol{W}^t)).
\end{aligned}
\label{f2}
\end{equation}
It can be found the Formula (\ref{f2}) is evolved from the Formula (\ref{f1}), that for an instance, its corresponding $margin$-$dynamic$-$softmax$ loss is constructed with its two different modal hash codes, i.e., if the numerator is about the hash code of its image (text), then then the denominator is about the hash code of its text (image). Then the objective function is the combination of the intra-modal $margin$-$dynamic$-$softmax$ loss and inter-modal loss, which can be formulated as follows:
\begin{equation}
\mathcal{L} = \mathcal{L}_{S}^v + \mathcal{L}_{S}^t + \lambda\mathcal{L}_{I} 
\label{f3}
\end{equation}
where $\lambda$ is a hyper-parameter.

Moreover, similar to PHNet,  we add a quantization loss with the $sgn(\cdot)$ discarded to make sure that the parameters of modality-specific hashing networks can be updated by the back-propagation algorithm when minimizing the objective function, i.e., the Formula (\ref{f3}), then the final objective function can be formulated as follows:
\begin{equation}
\begin{aligned}
\min\limits_{\boldsymbol{W}^v,\boldsymbol{W}^t}\mathcal{L} &= \mathcal{L}_{S}^v + \mathcal{L}_{S}^t + \lambda\mathcal{L}_{I} + \gamma\mathcal{L}_{quantization}\\
&=\sum\limits_{i=1}^n -log\frac{exp(\eta\boldsymbol{\hat{u}}_i^v)}{exp(\eta\boldsymbol{\hat{u}}_i^v) + \sum\limits_{q\in\Omega_i}exp(\eta\boldsymbol{\hat{b}}_i^{vT}\boldsymbol{g}_q)} \\
&\ \ \ \ +\sum\limits_{i=1}^n -log\frac{exp(\eta\boldsymbol{\hat{u}}_i^t)}{exp(\eta\boldsymbol{\hat{u}}_i^t) + \sum\limits_{q\in\Omega_i}exp(\eta\boldsymbol{\hat{b}}_i^{tT}\boldsymbol{g}_q)} \\
&\ \ \ \ +\lambda\sum\limits_{i=1}^n -log\frac{exp(\eta\boldsymbol{\hat{u}}_i^v)}{exp(\eta\boldsymbol{\hat{u}}_i^t) + \sum\limits_{q\in\Omega_i}exp(\eta\boldsymbol{\hat{b}}_i^{tT}\boldsymbol{g}_q)} \\
&\ \ \ \ +\lambda\sum\limits_{i=1}^n -log\frac{exp(\eta\boldsymbol{\hat{u}}_i^t)}{exp(\eta\boldsymbol{\hat{u}}_i^v) + \sum\limits_{q\in\Omega_i}exp(\eta\boldsymbol{\hat{b}}_i^{vT}\boldsymbol{g}_q)} \\
& \ \ \ \ + \gamma(\sum\limits_{i=1}^n\left\|\boldsymbol{\hat{b}}_i^t - \boldsymbol{c}_i \right\|_F^2 + \sum\limits_{i=1}^n\left\|\boldsymbol{\hat{b}}_i^v - \boldsymbol{c}_i \right\|_F^2)\\
&\ \ \ s.t. \ \ \ \ \boldsymbol{\hat{u}}_i^v = \boldsymbol{\hat{b}}_i^{vT}\boldsymbol{\bar{g}}_i -\mu k, \\
&\ \ \ \ \ \ \ \ \ \ \ \ \boldsymbol{\hat{u}}_i^t = \boldsymbol{\hat{b}}_i^{tT}\boldsymbol{\bar{g}}_i -\mu k, \\
&\ \ \ \ \ \ \ \ \ \ \ \ \boldsymbol{\hat{b}}_i^v = \mathcal{H}^v(\boldsymbol{x}_i^v;\boldsymbol{W}^v),\\
&\ \ \ \ \ \ \ \ \ \ \ \ \boldsymbol{\hat{b}}_i^t = \mathcal{H}^t(\boldsymbol{x}_i^t;\boldsymbol{W}^t),\\
&\ \ \ \ \ \ \ \ \ \ \ \ \boldsymbol{c}_i = sgn(\boldsymbol{\hat{b}}_i^v + \boldsymbol{\hat{b}}_i^t).
\end{aligned}
\label{iloss}
\end{equation}
where $\gamma$ is a hyper-parameter.
The details of the algorithm are shown in Algorithm \ref{alg}.

\subsection{Out-of-Sample Extension}
\label{oe}
For any data point which is not in the training dataset, we can obtain its hash code by the learned modality-specific hashing networks. Specifically, given an unseen query data point of image modal $\boldsymbol{x}_i^v$, we can adopt forward propagation to generate the hash code $\boldsymbol{r}$:
\begin{equation}
\boldsymbol{r} = sgn(\boldsymbol{H}^v(\boldsymbol{x}_i^v;\boldsymbol{W}^v)
\end{equation}
Similarly, we can also use the learned hashing model to generate the hash code $\boldsymbol{r}$ for an unseen query data point of text modal $\boldsymbol{x}_i^t$:
\begin{equation}
\boldsymbol{r} = sgn(\boldsymbol{H}^t(\boldsymbol{x}_i^t;\boldsymbol{W}^t)
\label{lb}
\end{equation}

\begin{algorithm}[t]
	\caption{Learning algorithm for DCHML}
	\label{alg}
	\begin{algorithmic}[1]
		\REQUIRE
		Images $\boldsymbol{X}$, label matrix $\boldsymbol{L}$, category matrix $\boldsymbol{Y}$, and the length of  hash codes $k$.
		\ENSURE 
		Proxy hash codes $\boldsymbol{G}$, parameters of PHNet $\boldsymbol{\Theta}$, parameters of IHNet $\boldsymbol{W}^v$, and parameters of THNet $\boldsymbol{W}^t$.
		\STATE Initialize parameters:  $\boldsymbol{\Theta}$, $\boldsymbol{W}^v$, $\boldsymbol{W}^t$, $\alpha$, $\beta$, $\eta$, $\mu$ and $\gamma$. learning rate: $lr$, mini-batch size  $z$ (see Section \ref{bd}).
		\REPEAT
		\STATE Generate $\boldsymbol{\hat{G}} = \{\boldsymbol{\hat{g}}_i\}_{i=1}^c$ with $\boldsymbol{Y}$ as input.
		\STATE Update parameter $\boldsymbol{\Theta}$ by minimizing Formula (\ref{ploss})
		\UNTIL{Convergence}
		\STATE Generate $\boldsymbol{G}$ by PHNet with $\boldsymbol{Y}$ as input.
		\REPEAT
		\FOR{$j=1:\frac{n}{z}$}
		\STATE Randomly sample $z$ image from database as a mini-batch.
		\STATE Generate $\boldsymbol{\hat{b}}_i^v$ with image $\boldsymbol{x}_i^v$ as input by IHNet.
		\STATE Update parameters of IHNet $\boldsymbol{W}^v$ by minimizing Formula (\ref{iloss}) with $\boldsymbol{W}^t$ fixed by back propagation algorithm.
		\ENDFOR
		\FOR{$j=1:\frac{n}{z}$}
		\STATE Randomly sample $z$ text from database as a mini-batch.
		\STATE Generate $\boldsymbol{\hat{b}}_i^t$ with text $\boldsymbol{x}_i^t$ as input by THNet.
		\STATE Update parameters of THNet $\boldsymbol{W}^t$ by minimizing Formula (\ref{iloss}) with $\boldsymbol{W}^v$ fixed by back propagation algorithm.
		\ENDFOR
		\STATE Update $\boldsymbol{c}_i$ with $\boldsymbol{c}_i = sgn(\boldsymbol{\hat{b}}_i^v + \boldsymbol{\hat{b}}_i^t)$.
		\UNTIL{Convergence}
	\end{algorithmic}
\end{algorithm}

\begin{table*}[t]
	\begin{minipage}{\textwidth}
		\setlength{\abovecaptionskip}{0pt}
		\setlength{\belowcaptionskip}{0pt}
		\caption{MAP of Hamming Ranking for All the Methods with Different Number of Bits on the Three Datasets. The best accuracy is shown in boldface.}
		\label{map1}
		\resizebox{\linewidth}{!}{
			\begin{tabular}{|l|l||l|l|l|l||l|l|l|l||l|l|l|l|}
				\hline
				\multirow{2}{*}{Task} & \multirow{2}{*}{Method}                 & \multicolumn{4}{c|}{NUS-WIDE}  & \multicolumn{4}{c||}{MS COCO}                                & \multicolumn{4}{c|}{IAPR TC-12}                                                      \\ \cline{3-14} 
				&                         & 32bits & 64bits & 96bits & 128bits & 32bits & 64bits & 96bits & 128bits & 32bits & 64bits & 96bits & 128bits         \\ \hline \hline
				\multirow{8}{*}{$T \rightarrow I$}  
				& SCM                    & 0.520          & 0.488          & 0.459          & 0.447        & 0.748          & 0.710          & 0.703          & 0.685          & 0.343          & 0.328          & 0.322          & 0.318             \\  
				& DJSRH                  & 0.645          & 0.647          & 0.653          & 0.616          & 0.641          & 0.690          & 0.707          & 0.717          & 0.498          & 0.515          & 0.523          & 0.526            \\  
				& DCMH                 & 0.713          & 0.726          & 0.735           & 0.715          & 0.638          & 0.647          & 0.657       & 0.639          & 0.603          & 0.601          & 0.591          & 0. 579           \\ 
				& CMHH                   & 0.694          & 0.710          & 0.637          & 0.631          & 0.616          & 0.620          & 0.615          & 0.613          & 0.515          & 0.520          & 0.514          &  0.508           \\ 
				& SSAH                   & 0.719          & 0.729          & 0.722           & 0.707        & 0.679          & 0.697          & 0. 718         & 0.706          & \textbf{0.646 }         & 0.669          & 0. 662          & 0.653            \\ 
				& EGDH                    & 0.733          & 0.734          & 0.743          & 0.742         & 0.774          & 0.821          & 0.850          & 0.856          & 0.641          & 0.671          & 0.688          & 0.692           \\ 
				& SCAHN                  & 0.711          & 0.721          & 0.715          & 0.717            & 0.765          & 0.819          & 0.805          & 0.780          & 0.554          & 0.569          & 0.604          & 0.603          \\ 
				& DCHML                 & \textbf{0.763}        & \textbf{0.783} & \textbf{0.782} & \textbf{0.790}   & \textbf{0.840} & \textbf{0.865} & \textbf{0.870} & \textbf{0.872} & 0.608 & \textbf{0.675} & \textbf{0.693} & \textbf{0.704} \\ \hline \hline
				\multirow{8}{*}{$I \rightarrow T$}  
				& SCM                    & 0.505          & 0.468          & 0.437          & 0.418            & 0.699          & 0.670          & 0.657          & 0.643          & 0.412          & 0.370          & 0.353          & 0.339         \\ 
				& DJSRH                    & 0.615          & 0.676          & 0.692          & 0.698     & 0.629          & 0.655          & 0.670          & 0.673          & 0.499          & 0.516          & 0.519          & 0.527          \\ 
				& DCMH                   & 0.690          & 0.710           & 0. 734     & 0.748            & 0.649          & 0.656          & 0.673     & 0.660          & 0.619          & 0.624          & 0. 610          & 0.604        \\ 
				& CMHH                    & 0.676          & 0.658          & 0.642          & 0.625        & 0.589          & 0.522          & 0.487          & 0.459          & 0.549          & 0.527          & 0.522          & 0.515            \\ 
				& SSAH                    & 0.734          & 0.744          & 0.709          & 0.685            & 0.665          & 0.679          & 0.702       & 0.698          & 0.645          & 0.665          & 0.604          & 0.550          \\ 
				& EGDH                  & 0.764          & 0.766          & 0.782          & 0.762            & 0.728          & 0.763          & 0.790          & 0.806          & \textbf{0.651}          & 0.667          & 0.682          & 0.687           \\ 
				& SCAHN                    & 0.733          & 0.744          & 0.736          & 0.736              & 0.749          & 0.799          & 0.785          & 0.761          & 0.550          & 0.574          & 0.608          & 0.606         \\ 
				& DCHML                  & \textbf{0.773} & \textbf{0.787} & \textbf{0.794} & \textbf{0.786}   & \textbf{0.798} & \textbf{0.817} & \textbf{0.816} & \textbf{0.819} & 0.605 & \textbf{0.672} & \textbf{0.695} & \textbf{0.706}\\ \hline
		\end{tabular}}
	\end{minipage}
\end{table*}
\section{Experiments}
\subsection{Datasets} 
To evaluate our proposed DCHML, we conduct experiments on three widely used benchmark datasets, i.e., \textit{\textbf{NUS-WIDE}} dataset \cite{chua2009nus}, \textit{\textbf{MS COCO}} \cite{lin2014microsoft} and \textit{\textbf{IAPR TC-12}} \cite{escalante2010segmented}, which are described below.

The \textit{\textbf{NUS-WIDE}} dataset contains 269,648 image-text pairs crawled from Flickr. Each image-text pair is annotated with one or multiple labels from 81 concept labels. Only 195,834 image-text pairs that belong to the 21 most frequent labels are selected for our experiment. Moreover, each instance in the text modality is represented by a 1,000-dimensional BoW vector. 

The \textit{\textbf{MS COCO}} contains about 82,014 training and 40,204 validation image-text pairs. In total, 122,218 image-text pairs are selected for our experiments. Each instance in the text modality is represented by a 2,000-dimensional bag of words (BoW) vector. 

The \textit{\textbf{IAPR TC-12}} \cite{escalante2010segmented} consists of 20,000 image-text pairs which are annotated using 255 labels. After pruning the image-text pair that is without any text information, a subset of 19999 image-text pairs are select for our experiment. The text modality for each instance is represented as a 2000-dimensional BoW vector. 

For the NUS-WIDE and MS COCO datasets, we randomly selected 5,000 image-text pairs as test set, with the rest of the image-text pairs as retrieval set, and we randomly select 10,000 image-text pairs from the retrieval set as the training set.  Moreover, for the IAPR TC-12 dataset, we randomly sampled 2,000 image-text pairs as test set, with the rest as retrieval set. We randomly select 10,000 image-text pairs from retrieval set for training deep cross-modal baselines.


\subsection{Baselines and Implementation Details}
\label{bd}
We compare our  proposed DCHML with seven classical or state-of-the-art methods, including one shallow
architecture based cross-modal hashing method \textbf{SCM} \cite{zhang2014large}, and six deep network based cross-modal hashing methods, i.e., \textbf{DJSRH} \cite{su2019deep}, \textbf{CMHH} \cite{cao2018cross}, \textbf{DCMH} \cite{jiang2017deep}, \textbf{SSAH} \cite{li2018self}, \textbf{EGDH} \cite{shi2019equally} and \textbf{SCAHN} \cite{wang2020self}.

For the proposed method, the parameters of the first seven layers of IHNet  are initialized with the first seven layers of pre-trained AlexNet model on ImageNet \cite{russakovsky2015imagenet}. All the parameters of PHNet, THNet and the last layer of IHNet are initialized by Xavier initialization \cite{glorot2010understanding}. The image inputs are the $224 \times 224$ raw pixels, and the text inputs are the BoW vectors. The values of hyper-parameters $\alpha, \beta, \eta, \mu, \lambda$ and $\gamma$ in DCHML will discussed at subsection \ref{sp} in detail, respectively. We adopt SGD with a mini-batch size of 128 and a learning rate within $10^{-4}$ to $10^{-3}$ as our optimization algorithm.

\subsection{Evaluation Protocol} 
For hashing based cross-modal retrieval task, Hamming ranking and hash lookup are two widely used retrieval protocols to evaluate the performance of hashing methods \cite{li2018self,shi2019equally}. In our experiments, similar to \cite{tu2021partial}, we evaluate the retrieval quality based on three evaluation metrics: Mean Average Precision (\textbf{MAP}), Precision curves with respect to the number of top N returned results (\textbf{P@N}), Precision-Recall curves (\textbf{PR}). MAP, P@N are used to measure the accuracy of the Hamming ranking protocol. PR curve is used to evaluate the accuracy of the hash lookup protocol. Moreover, for the MAP, P@n and PR curve, an image data point $\boldsymbol{x}_i$ and a text data point $\boldsymbol{y}_j$ will be defined as a similar pair if $\boldsymbol{x}_i$ and $\boldsymbol{y}_j$ share at least one common label. Otherwise, they will be defined as a dissimilar pair.  

Specifically, given a query datapoint, the Average Precision (AP) score of top n retrieved datapoints is defined as:
\begin{equation}
AP = \sum_{i=1}^n \frac{I(i)}{N}\sum_{j=1}^i \frac{I(j)}{i}
\end{equation}
where $I(i)$ is an indicator function, if the $i^{th}$ retrieved datapoint is the relevant datapoint, i.e., sharing at least one common category with the query datapoint, $I(i)$ is 1; otherwise $I(i)$ is 0. $N$ represents the number of relevant datapoints in the returned top $n$ datapoints. Then, the Mean Average Precision (MAP)  is then defined as the average of APs for all queries. Moreover, for all the three datasets, we set n as 5000.

\subsection{Experimental results}
\subsubsection{Hamming Ranking Protocol}
Table \ref{map1} shows the MAP  results of all baselines and DCHML on MS COCO, NUS-WIDE and IAPR TC-12 datasets, respectively. The P@N curves on 128-bits over the three dataset are shown in Figure \ref{fig_p}. $``$I2T$"$ denotes retrieving texts with image queries, and $``$T2I$"$ denotes retrieving images with text queries. From the table and figures, it can be observed that our method outperforms all state-of-the-art baselines on all the evaluate metrics.  For instance, compared with the baseline SCAHN, the MAP results of DCHML for $``$I2T$"$ have an average increase of 4.8\%, 3.9\% and 8.5\% on datasets NUS-WIDE, MS COCO and IAPR TC-12, respectively. Moreover, compared with the baseline EGDH, the MAP results of DCHML for  $``$T2I$"$  have an average increase of 4.2\%,  and 3.7\% on datasets MS COCO and NUS-WIDE.  Furthermore, as shown in Figure \ref{fig_p} (a) - (f), the P@N curves of our method are the highest.
These results indicate that the hash codes generated by our proposed method DCHML can preserve not only more semantic similarity information, but also more semantic label information than state-of-the-art baselines.

\begin{figure*}[!t]
	\centering
	
	\subfigure[]{
		\begin{minipage}[]{0.33\textwidth}
			\centering
			\includegraphics[width=\linewidth]{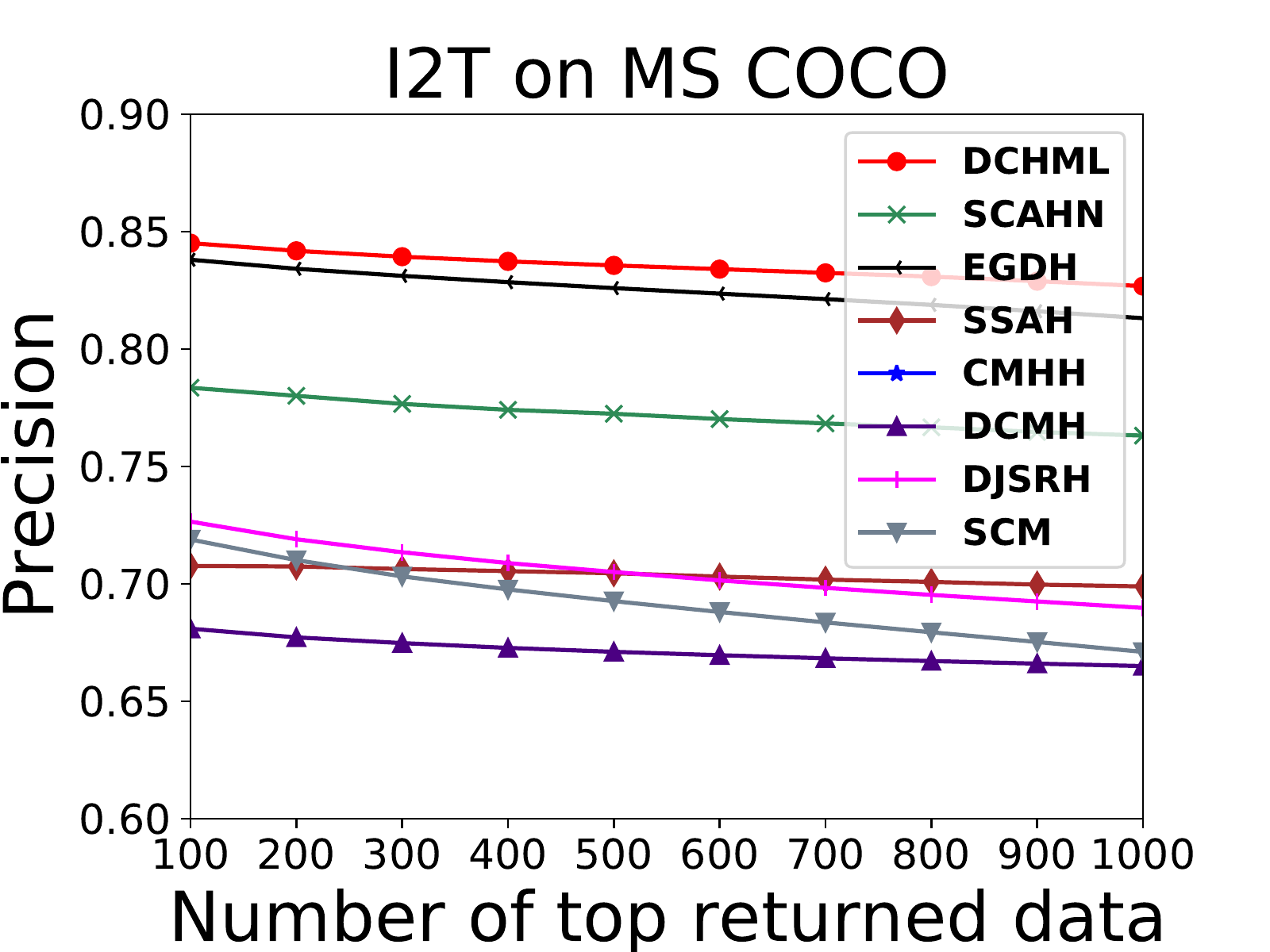}
		\end{minipage}%
	}%
	\subfigure[]{ 
		\begin{minipage}[]{0.33\textwidth}
			\centering
			\includegraphics[width=\linewidth]{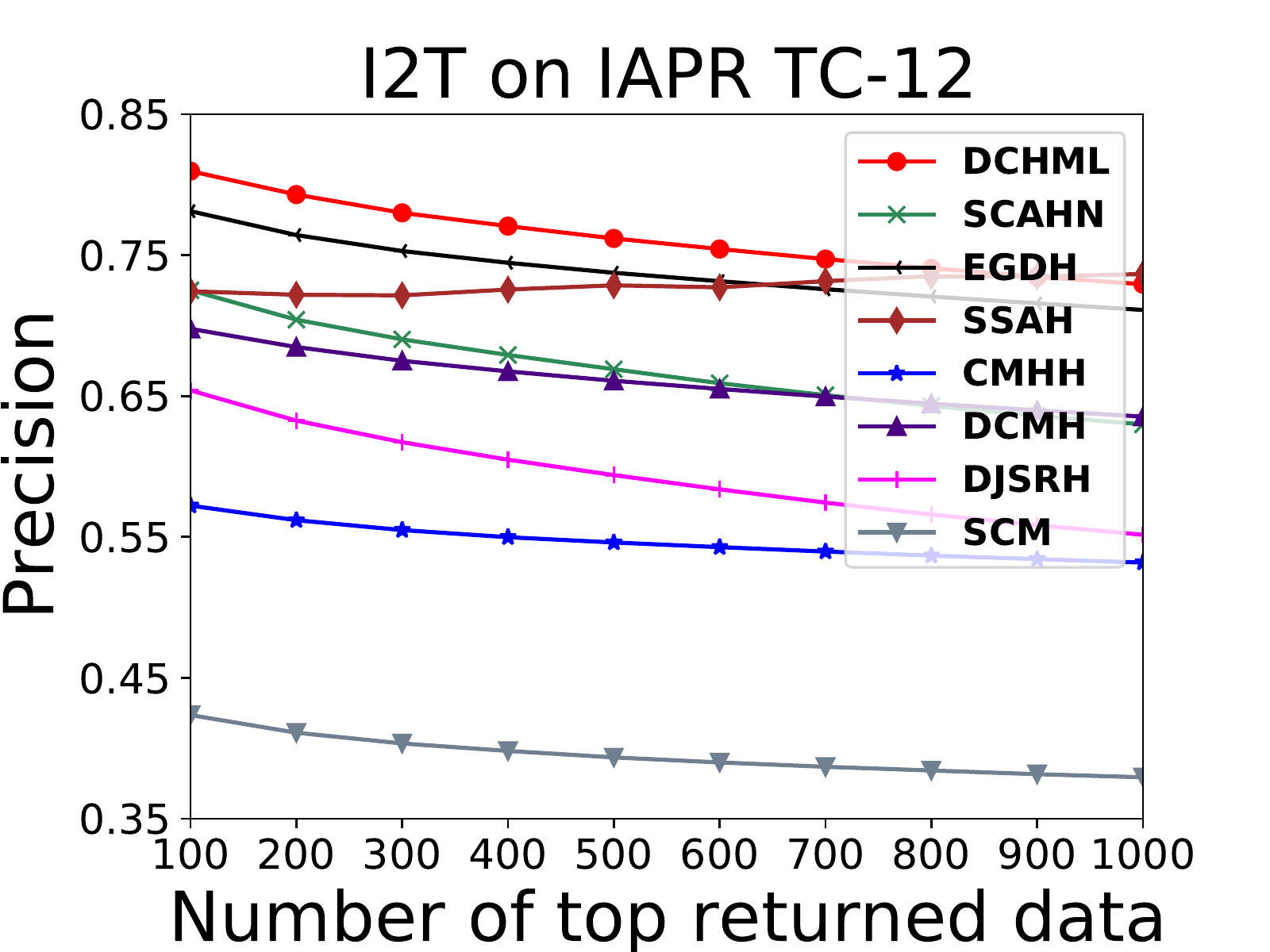}
		\end{minipage}%
	}%
	\subfigure[]{
		\begin{minipage}[]{0.33\textwidth}
			\centering
			\includegraphics[width=\linewidth]{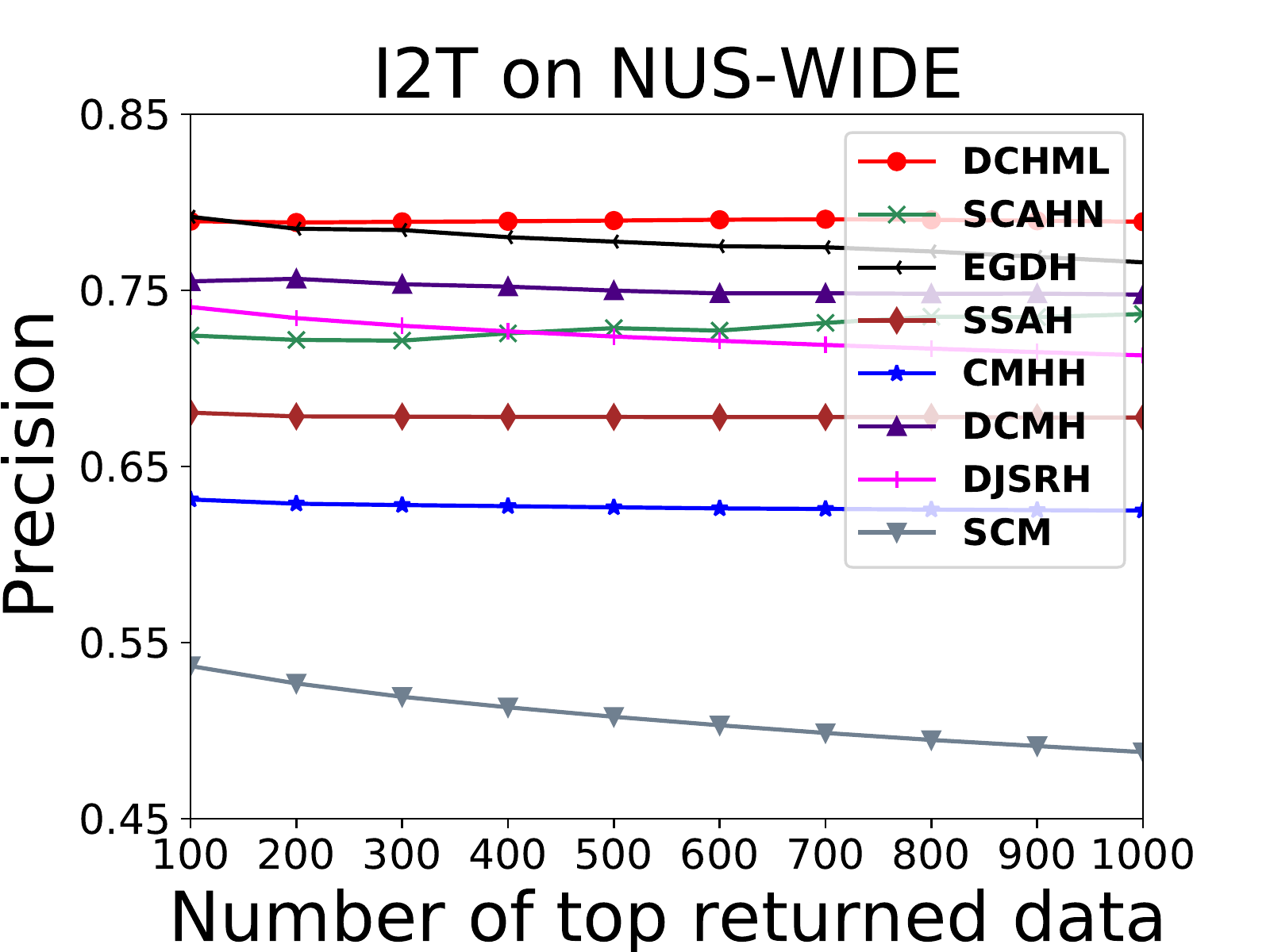}
		\end{minipage}%
	}%
	\quad
	\subfigure[]{
		\begin{minipage}[]{0.33\textwidth}
			\centering
			\includegraphics[width=\linewidth]{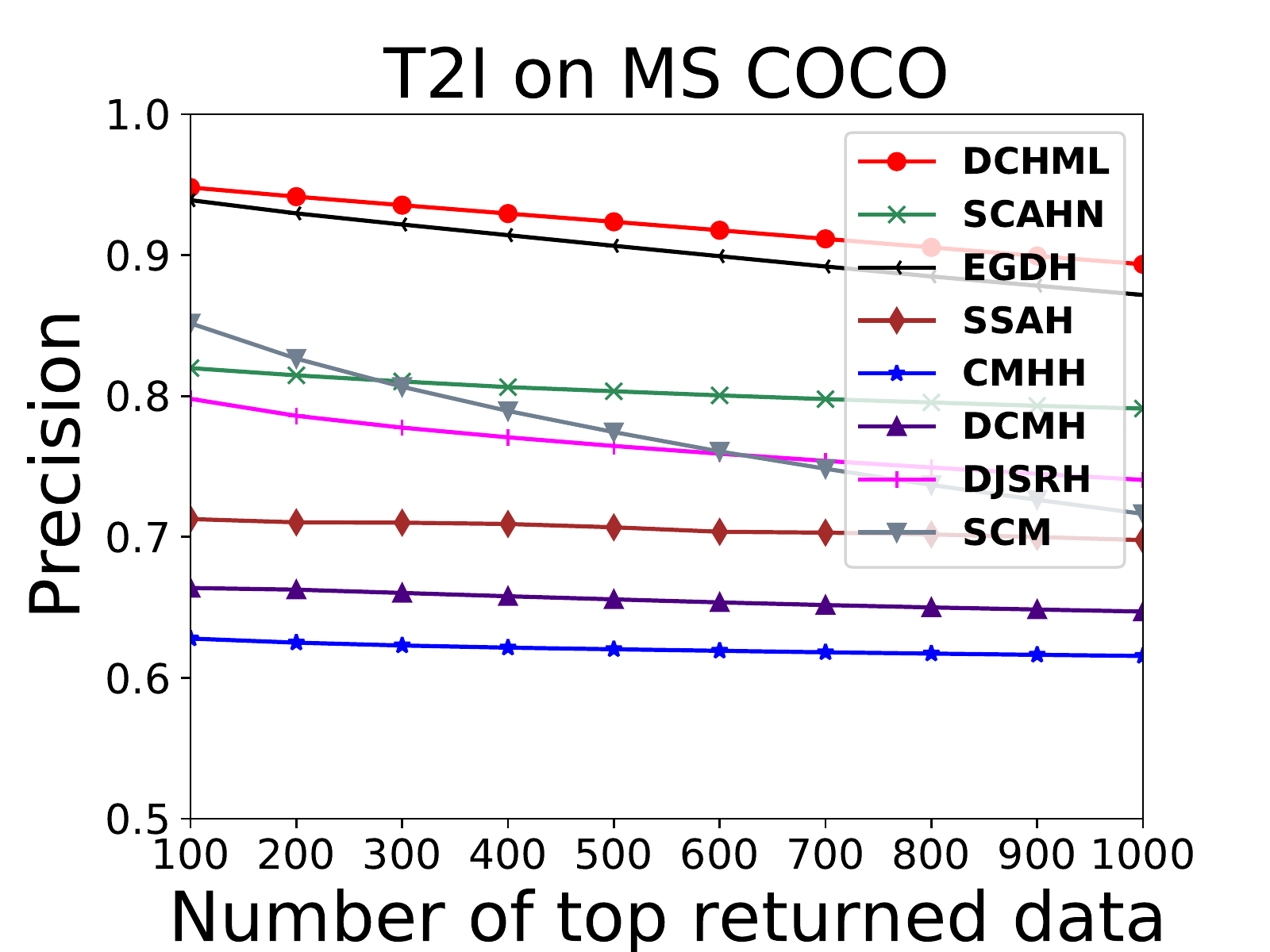}
		\end{minipage}%
	}%
	\subfigure[]{ 
		\begin{minipage}[]{0.33\textwidth}
			\centering
			\includegraphics[width=\linewidth]{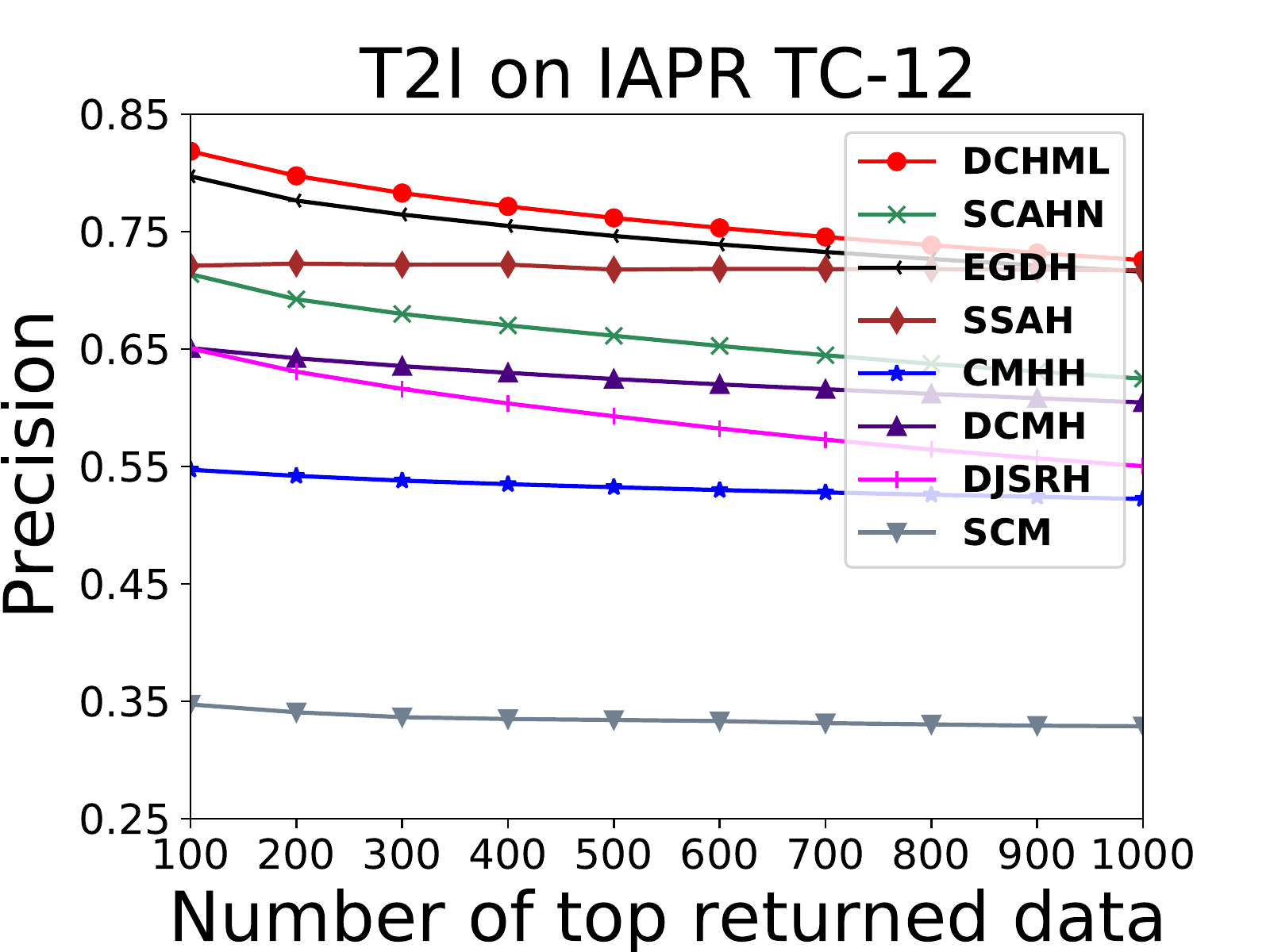}
		\end{minipage}%
	}%
	\subfigure[]{
		\begin{minipage}[]{0.33\textwidth}
			\centering
			\includegraphics[width=\linewidth]{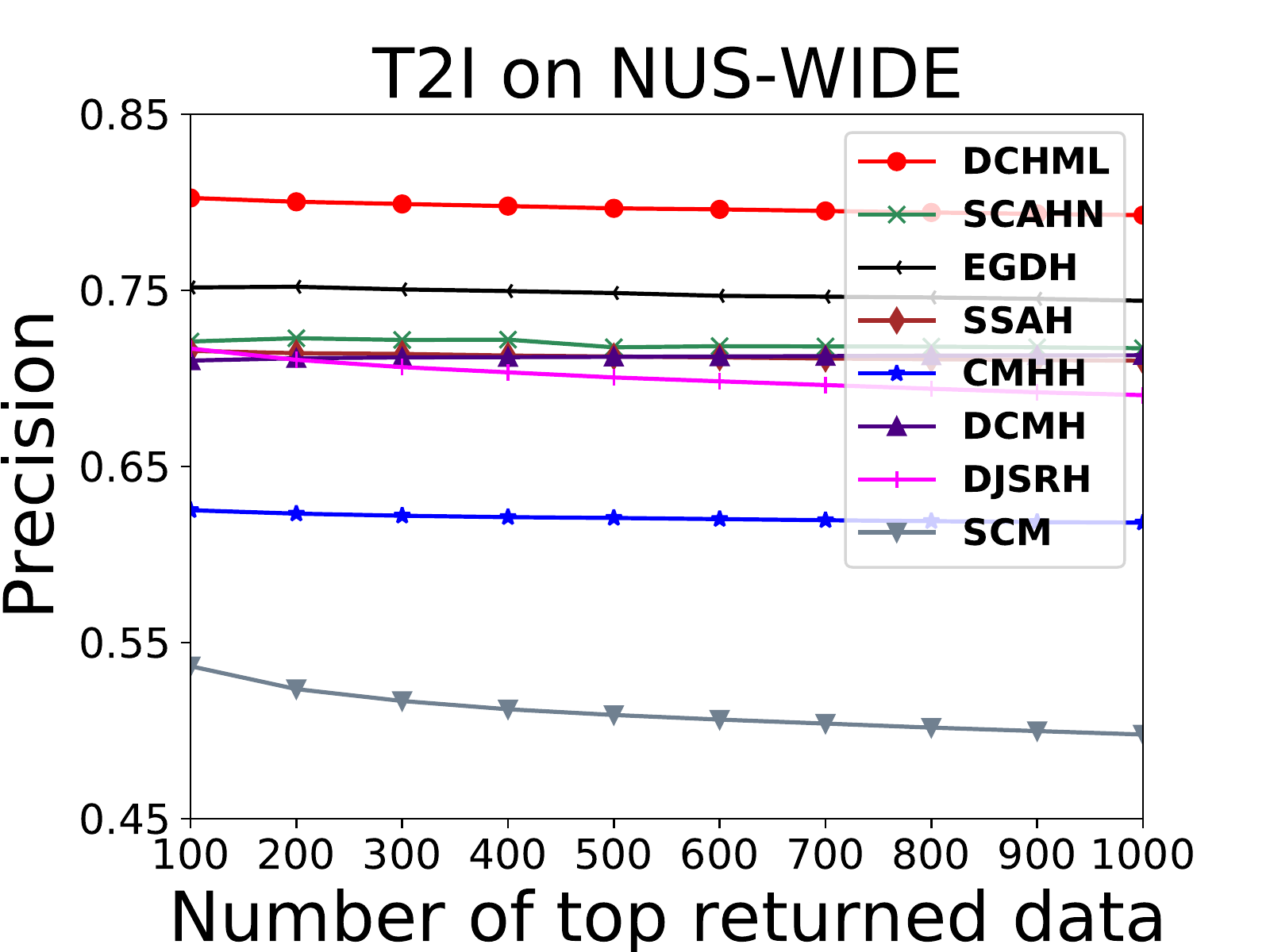}
		\end{minipage}%
	}%
	\caption{P@N curves of all methods over the three datasets}
	\label{fig_p}
\end{figure*}

\begin{figure*}[!t]
\centering
\subfigure[]{
	\begin{minipage}[]{0.33\textwidth}
		\centering
		\includegraphics[width=\linewidth]{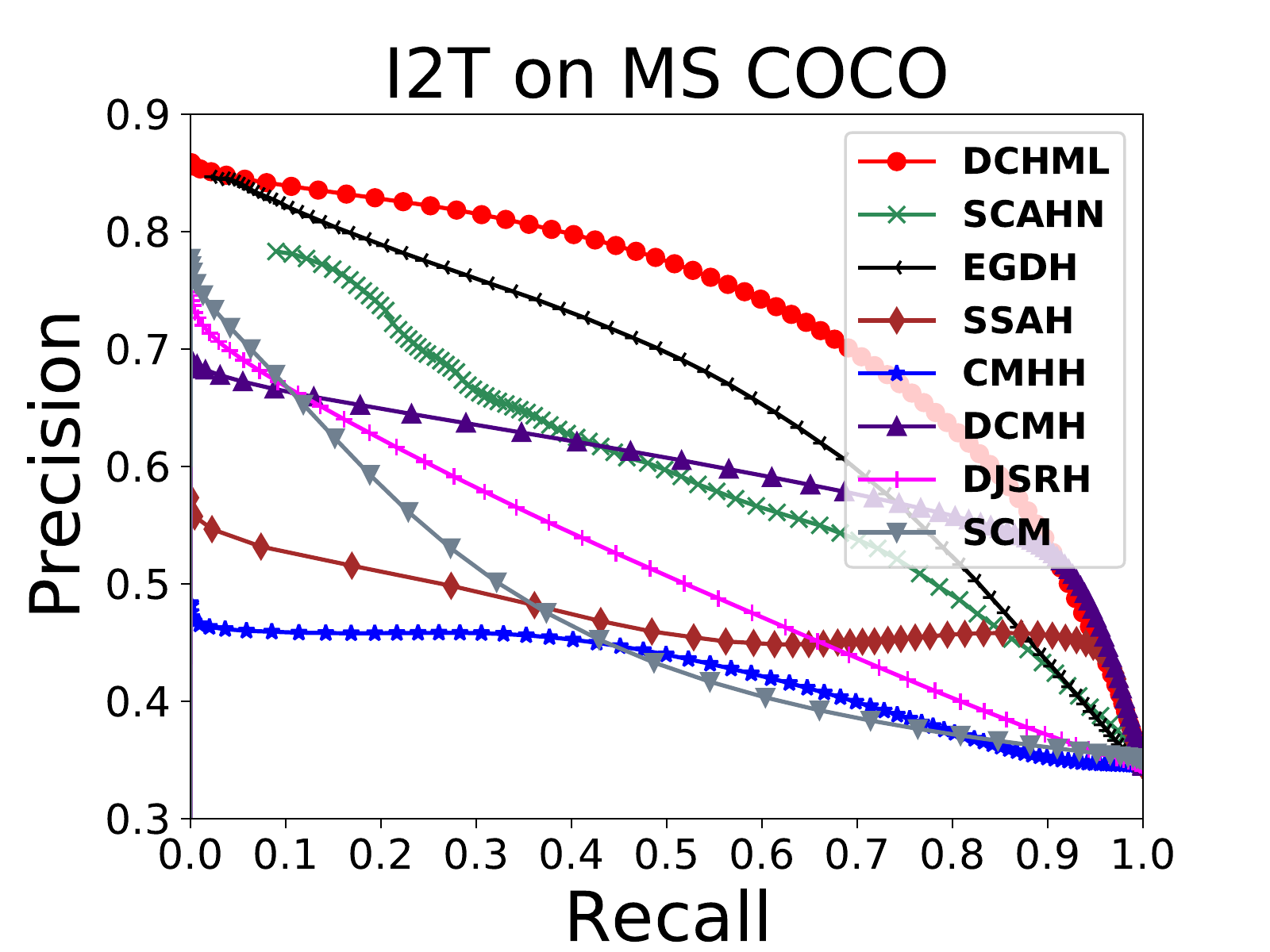}
	\end{minipage}%
}%
\subfigure[]{ 
	\begin{minipage}[]{0.33\textwidth}
		\centering
		\includegraphics[width=\linewidth]{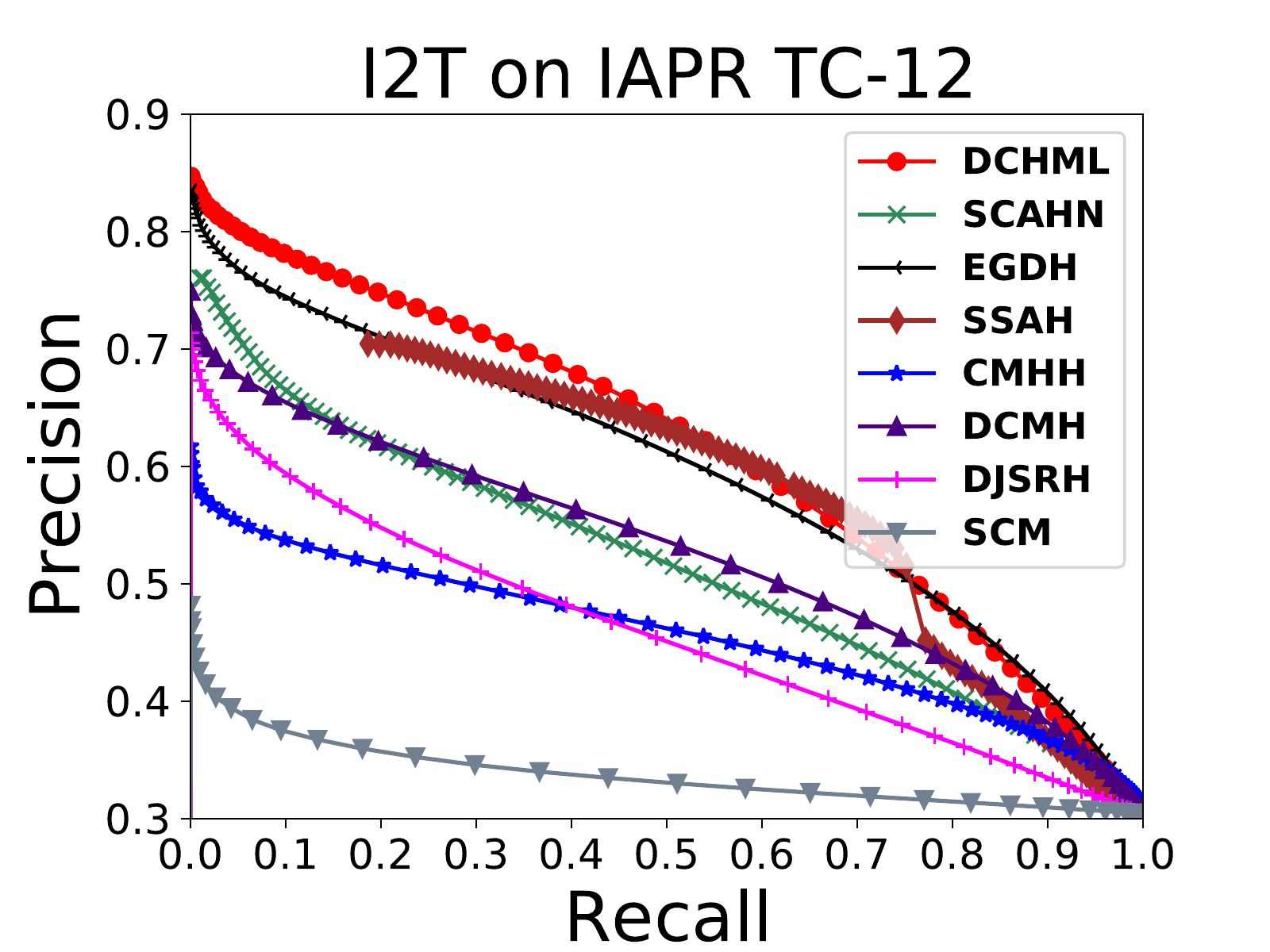}
	\end{minipage}%
}%
\subfigure[]{
	\begin{minipage}[]{0.33\textwidth}
		\centering
		\includegraphics[width=\linewidth]{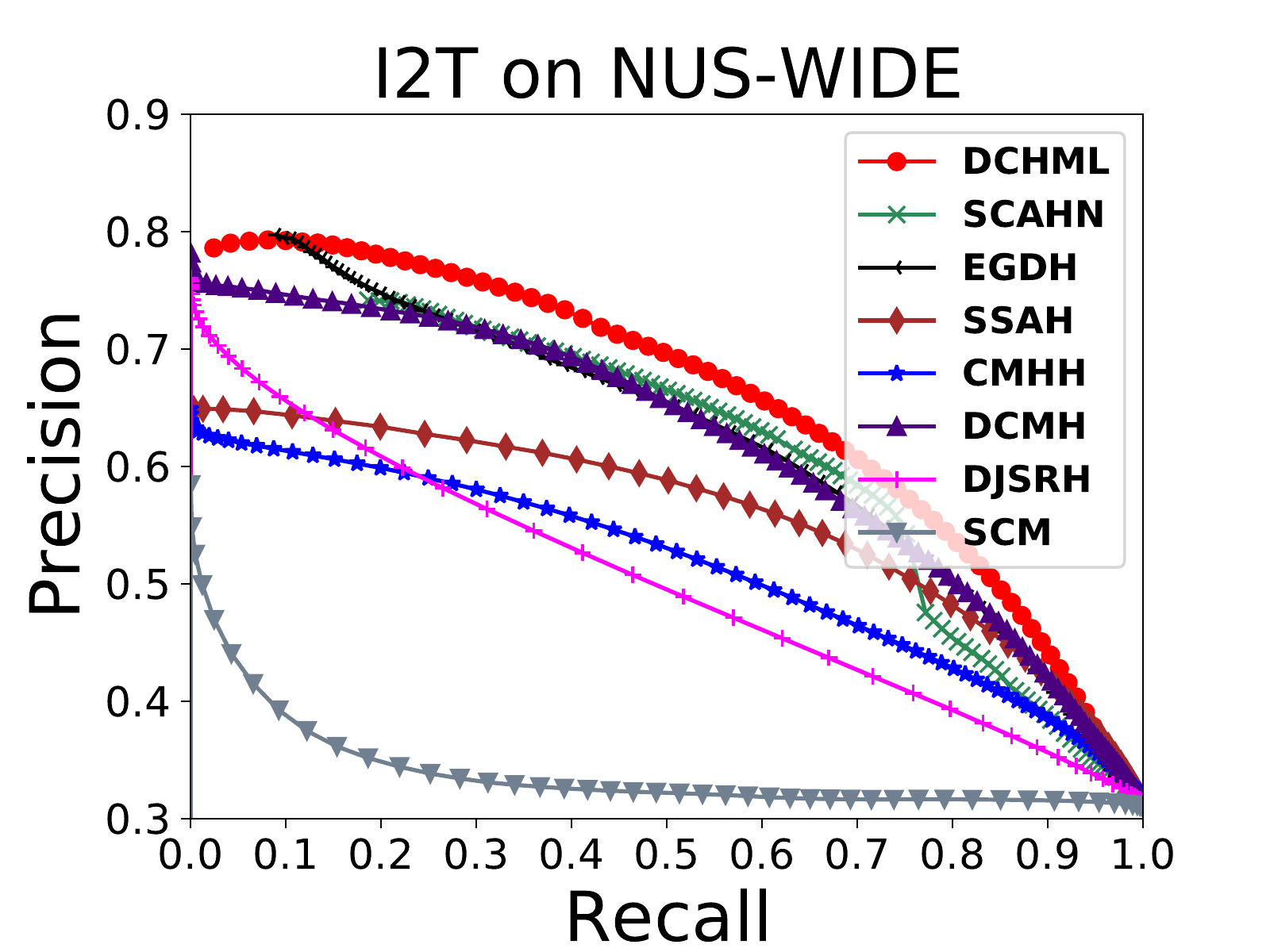}
	\end{minipage}%
}%
\quad
\subfigure[]{
	\begin{minipage}[]{0.33\textwidth}
		\centering
		\includegraphics[width=\linewidth]{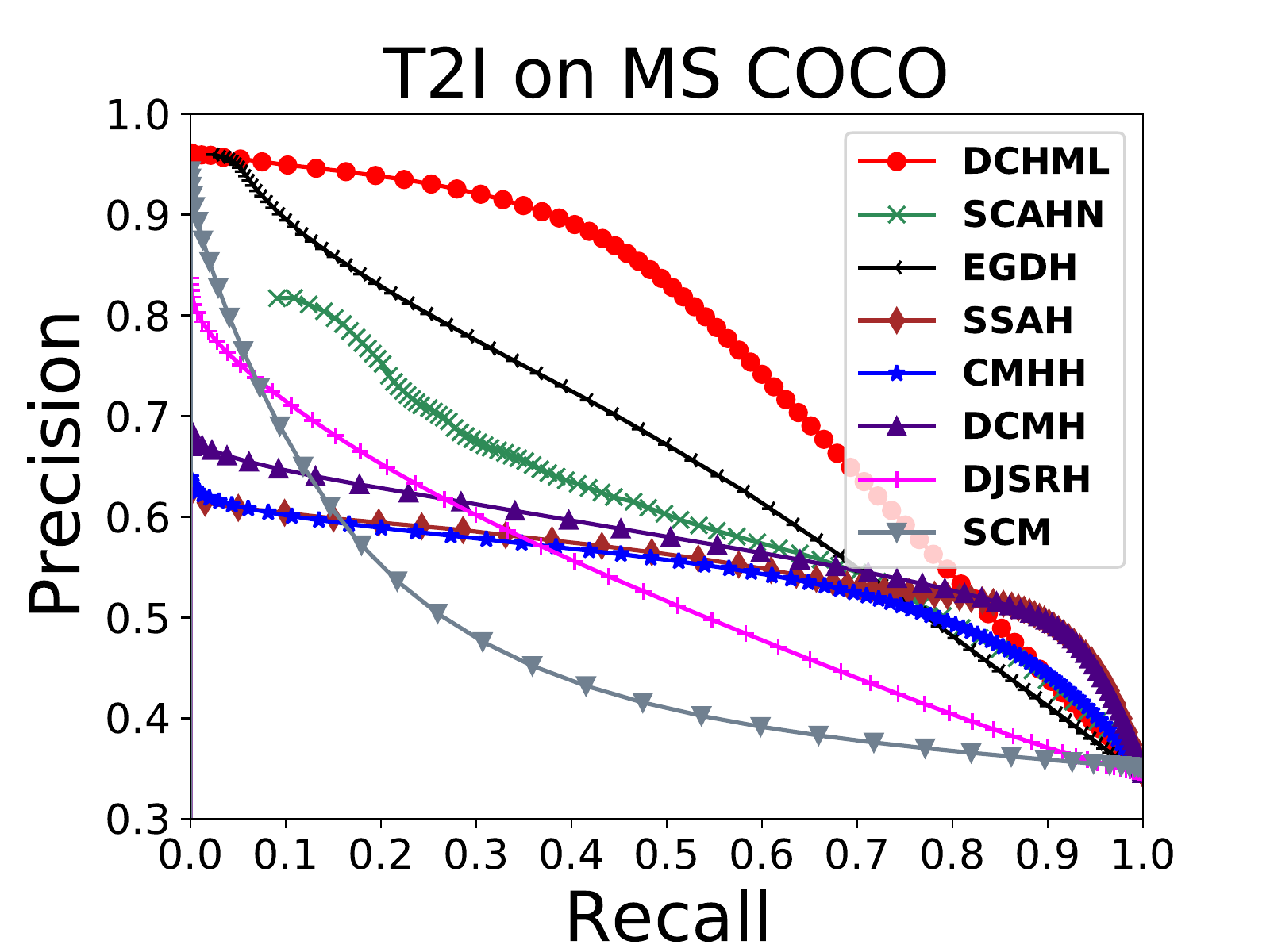}
	\end{minipage}%
}%
\subfigure[]{ 
	\begin{minipage}[]{0.33\textwidth}
		\centering
		\includegraphics[width=\linewidth]{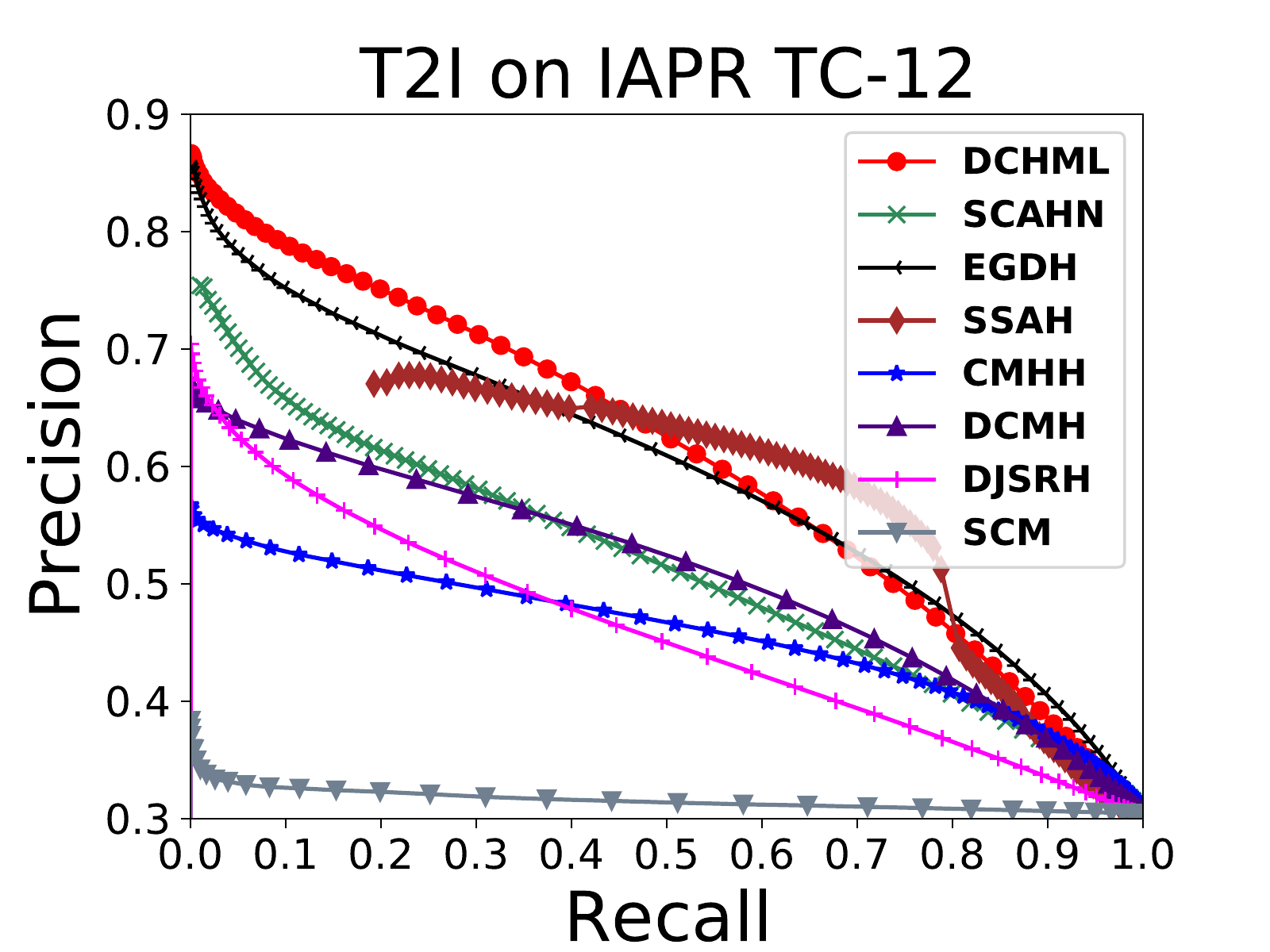}
	\end{minipage}%
}%
\subfigure[]{
	\begin{minipage}[]{0.33\textwidth}
		\centering
		\includegraphics[width=\linewidth]{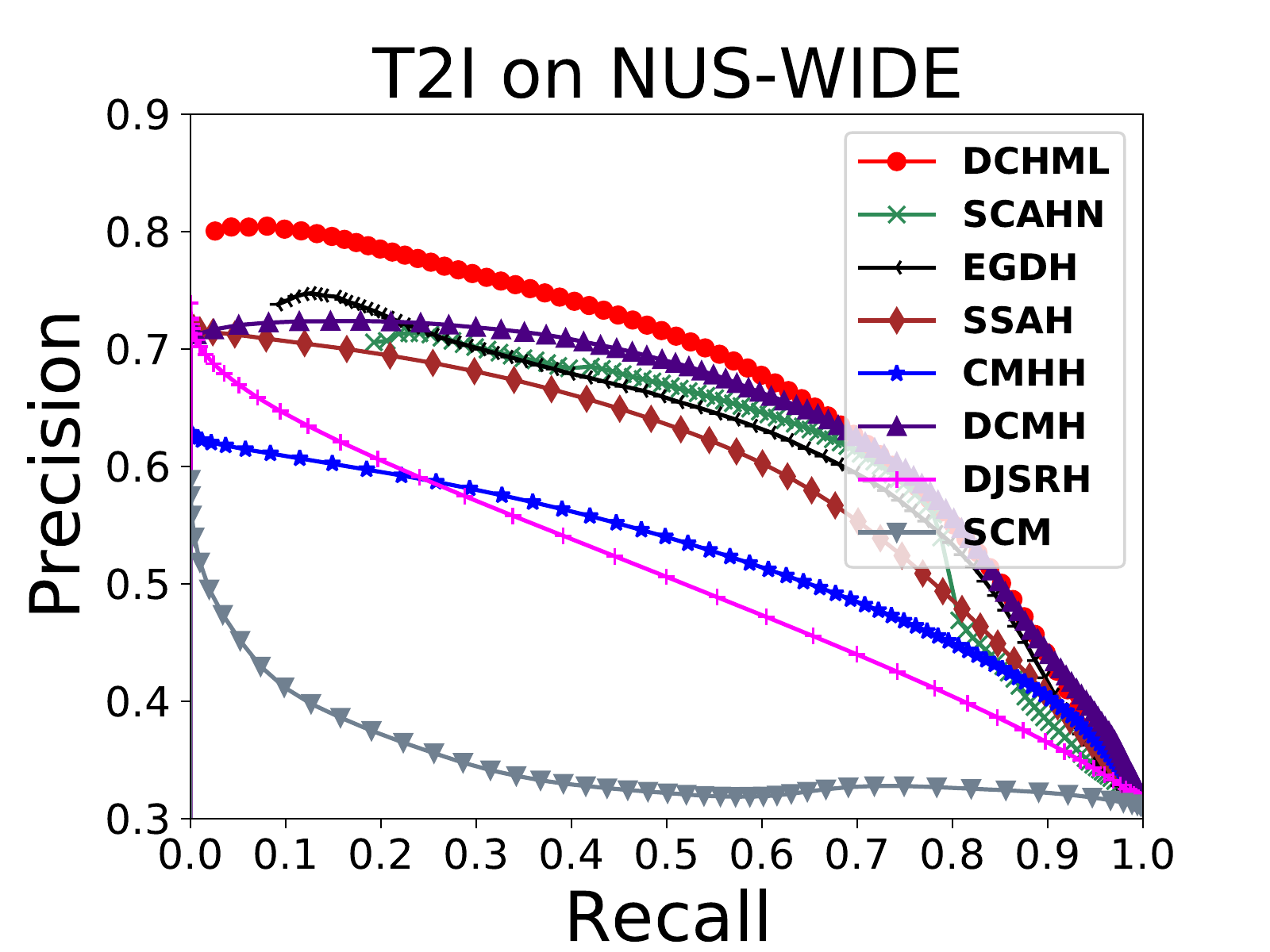}
	\end{minipage}%
}%
\caption{Precision-recall curves of all the methods over the three datasets}
\label{fig_pr}
\end{figure*}

\begin{figure*}[]
	\centering
	\subfigure[Loss function value]{
		\begin{minipage}[t]{0.33\textwidth}
			\centering
			\includegraphics[width=1\linewidth]{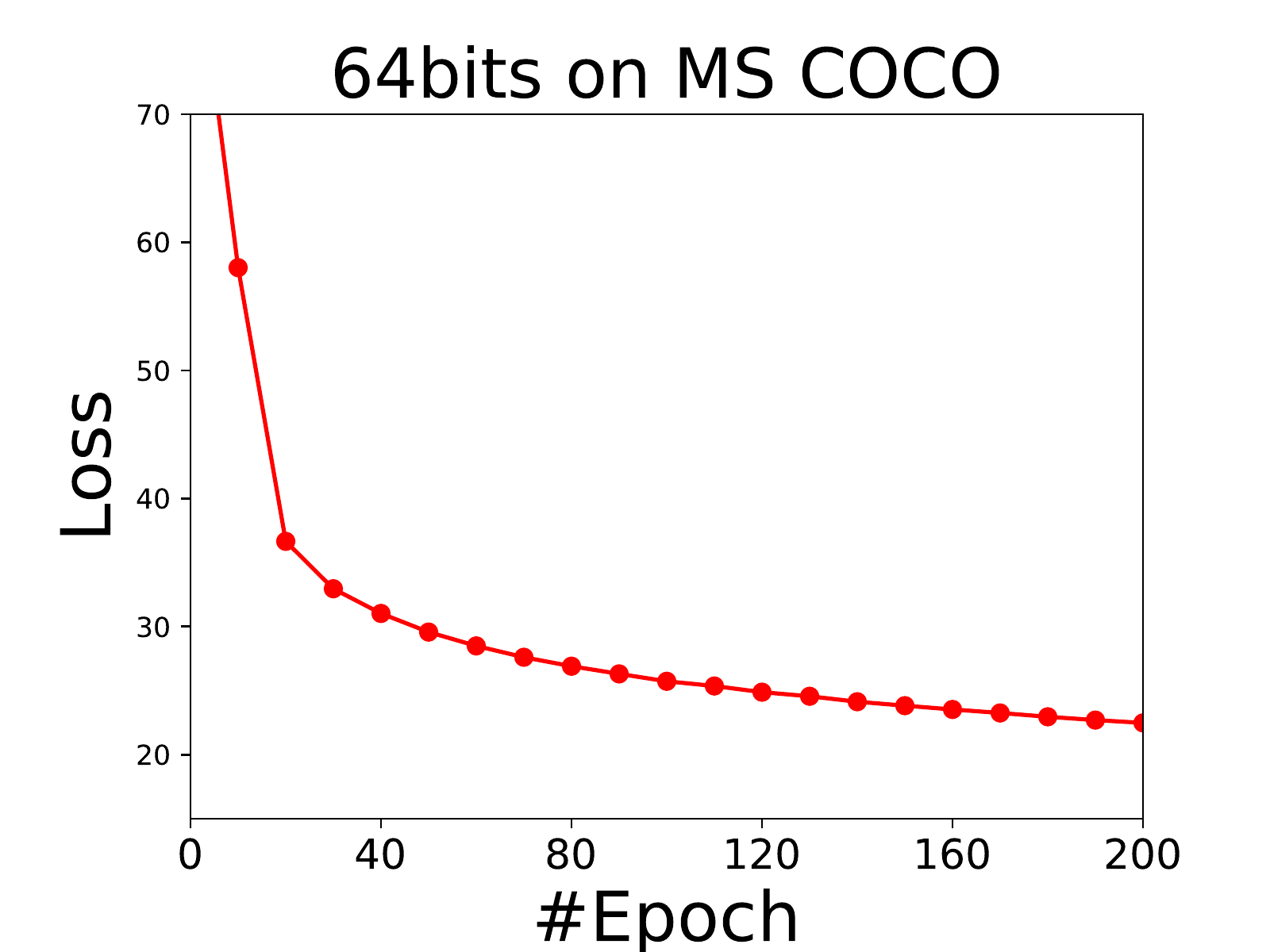}
			\label{loss64}
		\end{minipage}%
	}%
	\subfigure[MAP]{
		\begin{minipage}[t]{0.33\textwidth}
			\centering
			\includegraphics[width=1\linewidth]{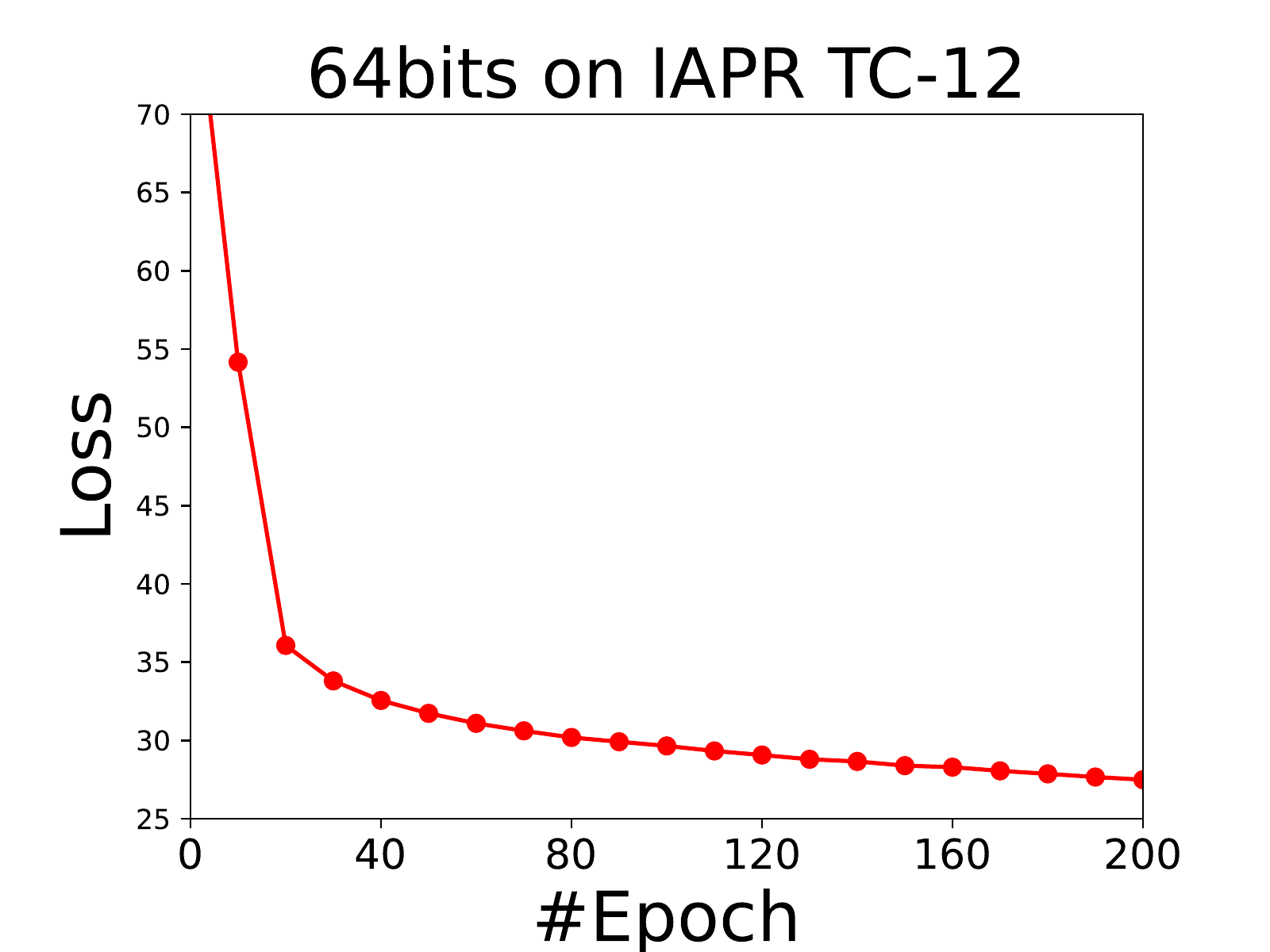}
			\label{map64}
		\end{minipage}%
	}%
	\subfigure[Loss function value]{
	\begin{minipage}[t]{0.33\textwidth}
		\centering
		\includegraphics[width=1\linewidth]{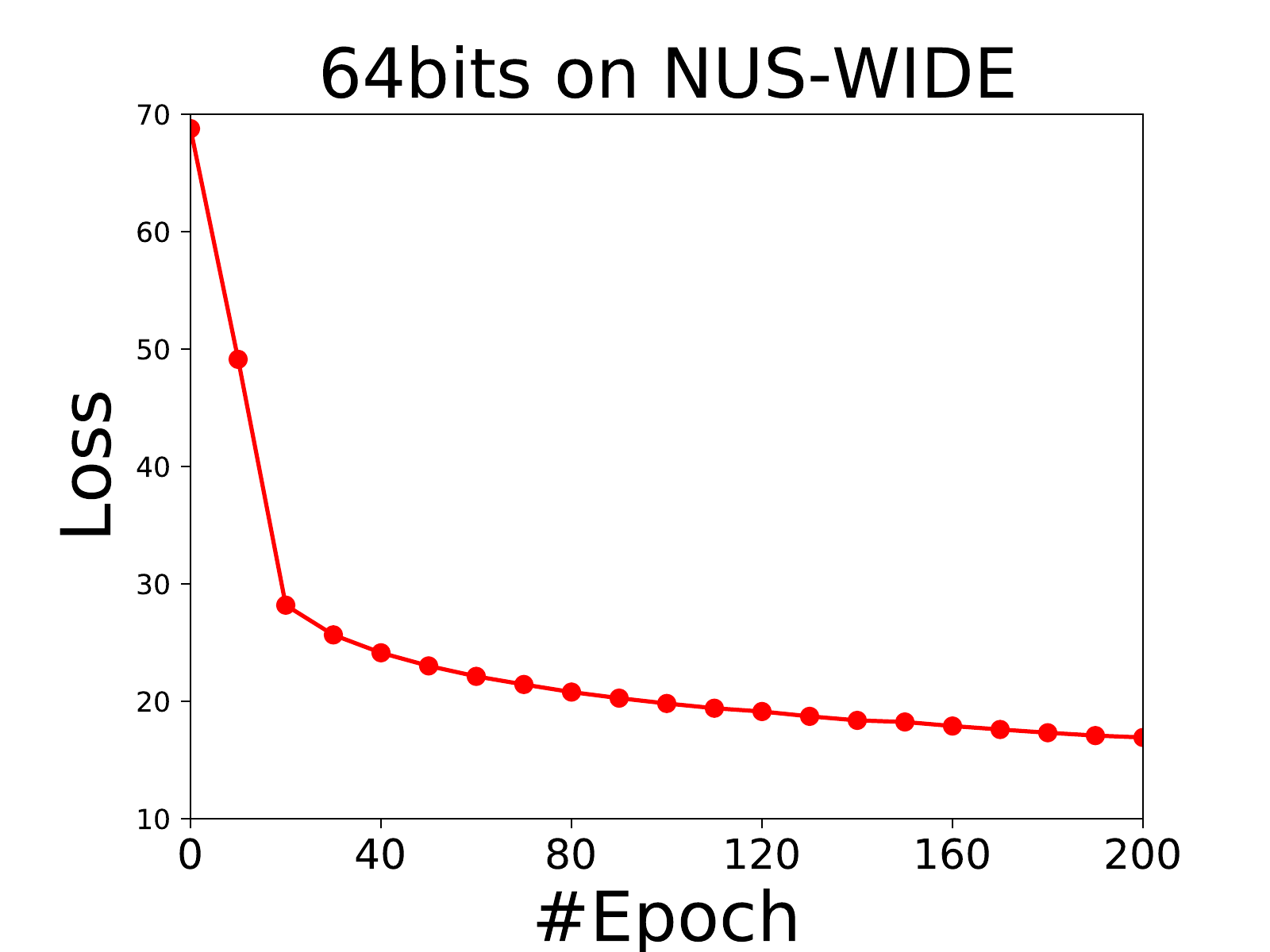}
		\label{loss64_iaprtc}
	\end{minipage}%
}%
\quad
\subfigure[MAP]{
	\begin{minipage}[t]{0.33\textwidth}
		\centering
		\includegraphics[width=1\linewidth]{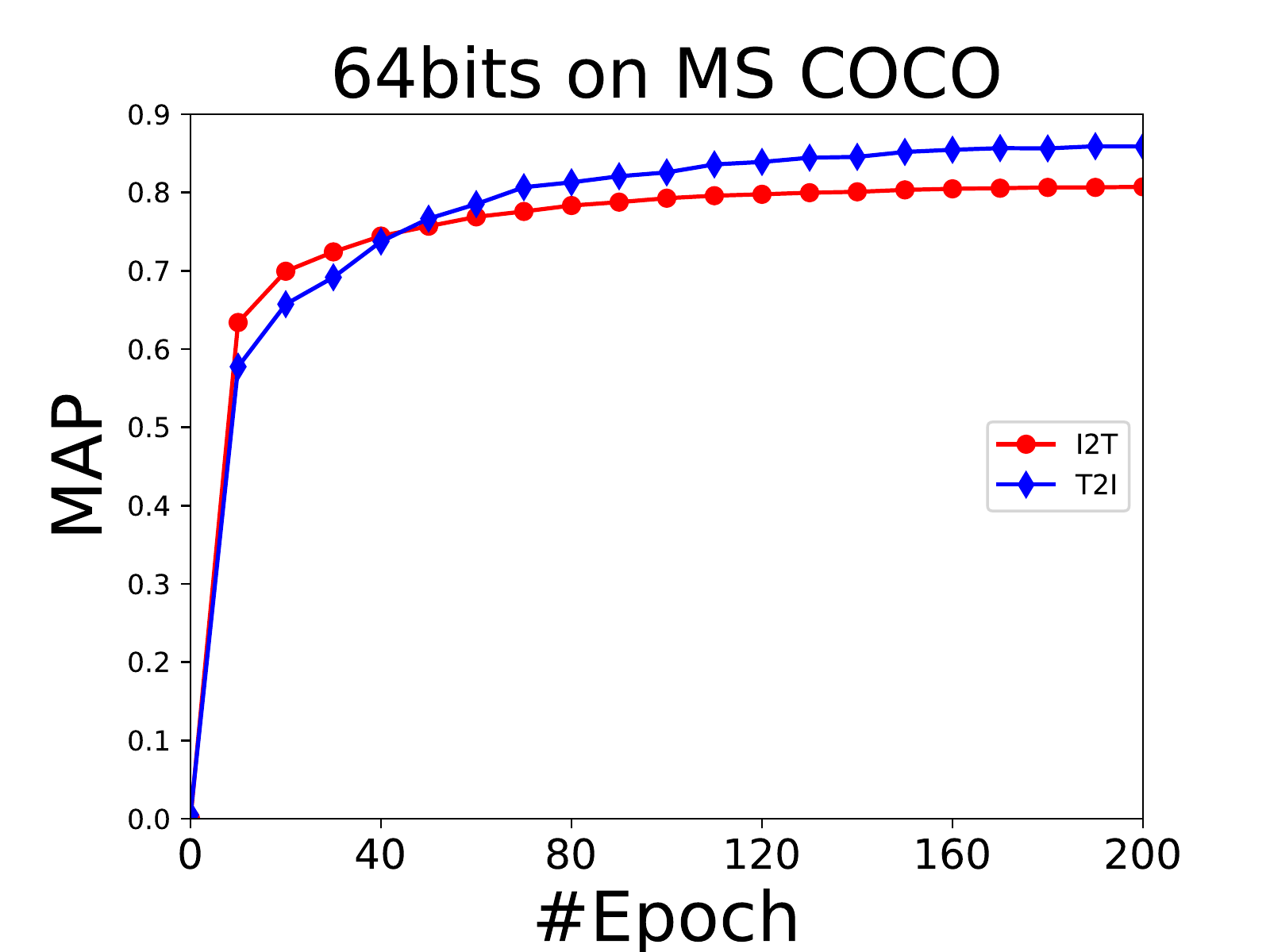}
		\label{map64_iaprtc}
	\end{minipage}%
}%
	\subfigure[Loss function value]{
	\begin{minipage}[t]{0.33\textwidth}
		\centering
		\includegraphics[width=1\linewidth]{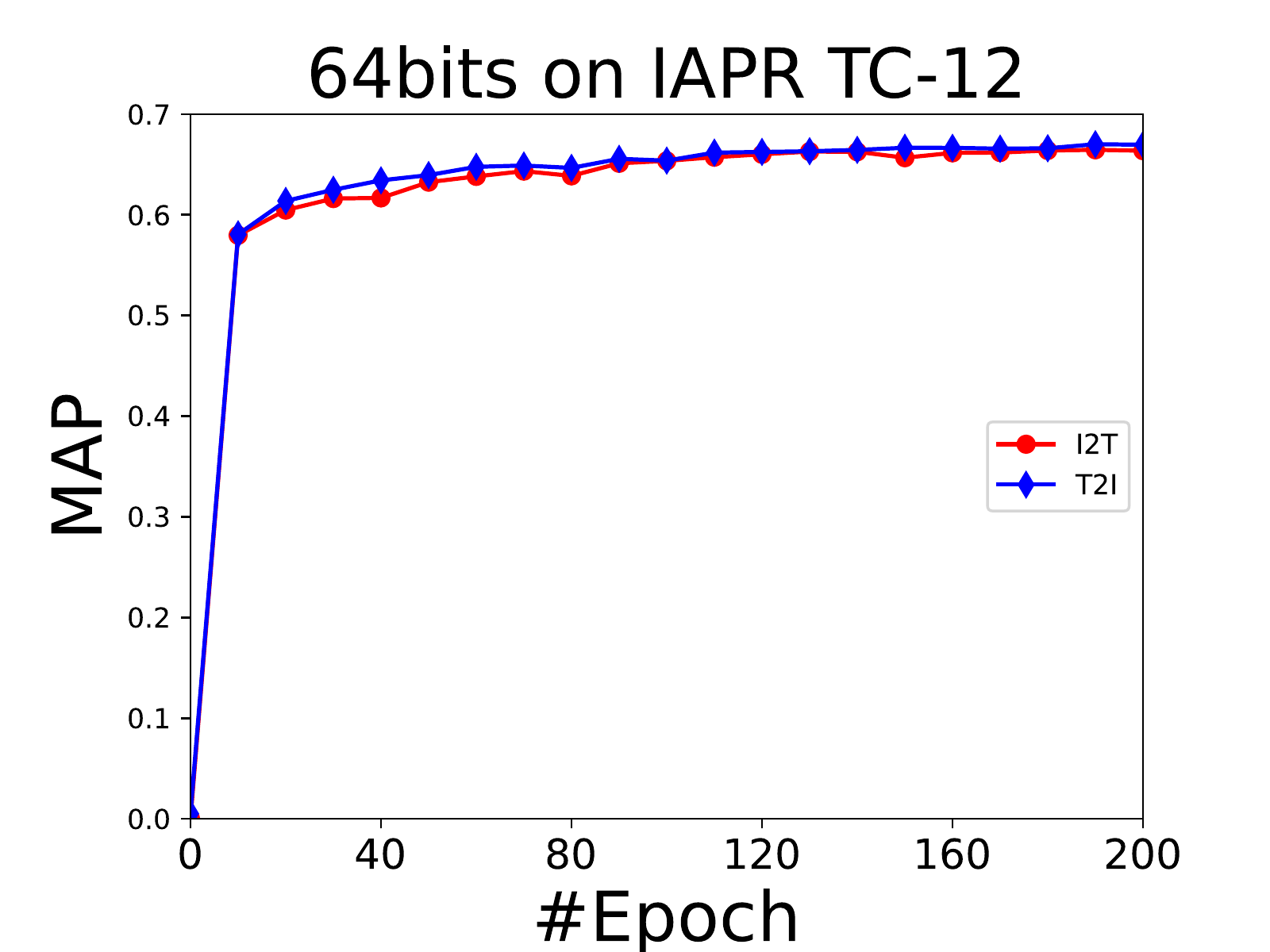}
		\label{loss64_nus}
	\end{minipage}%
}%
\subfigure[MAP]{
	\begin{minipage}[t]{0.33\textwidth}
		\centering
		\includegraphics[width=1\linewidth]{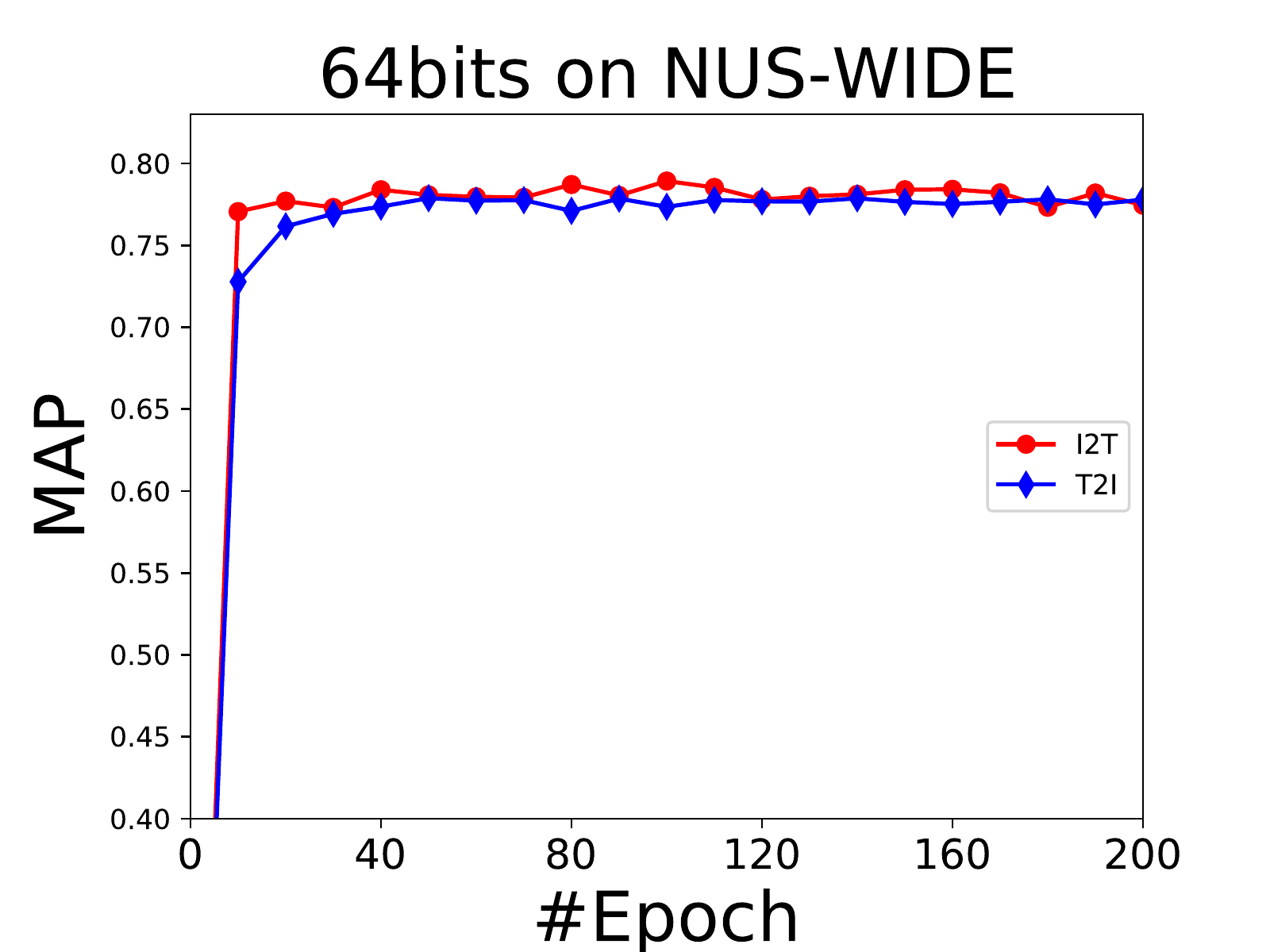}
		\label{map64_nus}
	\end{minipage}%
}%
	\caption{Loss function value and MAP of DCHML over the three datasets.}
	\label{loss_map}
\end{figure*}

\begin{table*}[]
	\begin{minipage}{\textwidth}
		\setlength{\abovecaptionskip}{0pt}
		\setlength{\belowcaptionskip}{0pt}
		\caption{MAP of Hamming Ranking for Different Number of Bits on the Three Datasets.}
		\label{map_vall}
		\resizebox{\linewidth}{!}{
			\begin{tabular}{|l|l||l|l|l|l||l|l|l|l||l|l|l|l|}
				\hline
				\multirow{2}{*}{Task} & \multirow{2}{*}{Method}                 & \multicolumn{4}{c|}{NUS-WIDE}  & \multicolumn{4}{c||}{MS COCO}                                & \multicolumn{4}{c||}{IAPR TC-12}                                                      \\ \cline{3-14} 
				&                         & 32bits & 64bits & 96bits & 128bits & 32bits & 64bits & 96bits & 128bits & 32bits & 64bits & 96bits & 128bits         \\ \hline \hline
				\multirow{4}{*}{$T \rightarrow I$}  
				& DCHML$_Q$                    & 0.755          & 0.775          & 0.779           & 0.782        & 0.832          & 0.859          & 0. 867         & 0.868          & 0.594            & 0.663          & 0.689     &0.701       \\ 
				& DCHML$_P$                   & 0.624          & 0.639            & 0.631          & 0.614          & 0.683         & 0.707 &0.653     & 0.663  &0.495  &0.528    & 0.562   &0.587\\  
				& DCHML                 & \textbf{0.763}        & \textbf{0.783} & \textbf{0.782} & \textbf{0.790}   & \textbf{0.840} & \textbf{0.865} & \textbf{0.870} & \textbf{0.872} & \textbf{0.608} & \textbf{0.675} & \textbf{0.693} & \textbf{0.704} \\ \hline \hline
				\multirow{4}{*}{$I \rightarrow T$}  
				& DCHML$_Q$                    & 0.767          & 0.772          & 0.786          & 0.779            & 0.795          & 0.831          & 0.815       & 0.817          & 0.594            & 0.664          & 0.688    &0.702      \\ 
				& DCHML$_P$                      & 0.612          & 0.636          & 0.623          & 0.616          & 0.695        & 0.716   &0.537    & 0.559 &0.476   &0.542     &0.570   &0.598    \\
				& DCHML                  & \textbf{0.773} & \textbf{0.787} & \textbf{0.794} & \textbf{0.786}   & \textbf{0.798} & \textbf{0.817} & \textbf{0.816} & \textbf{0.819} & \textbf{0.605} & \textbf{0.672} & \textbf{0.695} & \textbf{0.706}\\ \hline
		\end{tabular}}
	\end{minipage}
\end{table*}

\subsubsection{Hash Lookup Protocol}
When considering the lookup protocol, we compute the PR curve for the returned points given any Hamming radius. The PR curve can be obtained by varying the Hamming radius from 0 to $k$ with a step-size of 1. The PR curves on 128-bits for $``$I2T$"$ and $``$T2I$"$ tasks over the three datasets are shown in Figure \ref{fig_pr}, respectively. It can be found that DCHML can outperform state-of-the-art baselines for $``$I2T$"$ and $``$T2I$"$ tasks over all the three datasets. For example, in Figure \ref{fig_pr} (a) -(f), the PR curves of DCHML are higher than all the ones of baselines on the whole. These results demonstrate that the proposed DCHML generates hash codes for similar datapoints in a smaller Hamming radius.

\subsection{Discussion}

\subsubsection{Convergence Analysis} The convergence of loss function value and MAP of the proposed DCHML with hash code length being 64-bits are shown in Figure \ref{loss_map}. From the figure, it can be observed: (1) As shown in Figure \ref{loss_map} (a), (b), (c), the loss function value can convergence after 120 epochs on all the three datasets. It indicates our proposed \textit{margin-dynamic-softmax loss} function is convergent. (2) Combined with the results in Figure \ref{loss_map} (a) and (d), Figure \ref{loss_map} (b) and (e), and Figure \ref{loss_map} (c) and (f), it can be found when conducting experiments on all the three dataset,  the values of MAP increase with the values of loss function decreasing, and finally reach a steady-state. It indicates that our
proposed \textit{margin-dynamic-softmax loss} function can be used as a loss function to optimize  hashing networks to generate hash  codes with semantic similarity and semantic label information preserved well.

\begin{figure*} 
	\subfigure[]{
		\begin{minipage}[t]{0.33\textwidth}
			\centering
			\includegraphics[width=\linewidth]{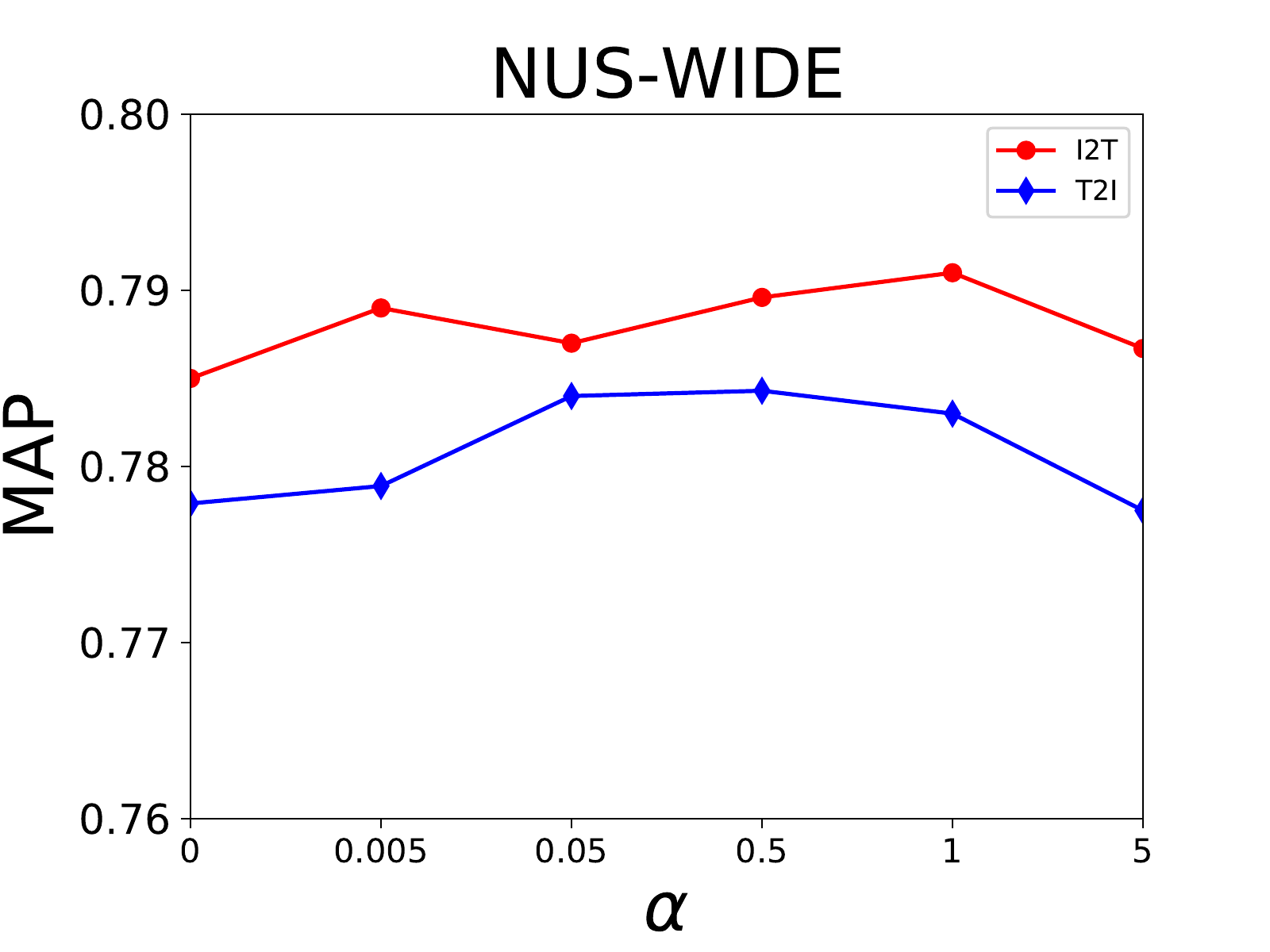}
		\end{minipage}%
	}%
	\subfigure[]{
		\begin{minipage}[t]{0.33\textwidth}
			\centering
			\includegraphics[width=\linewidth]{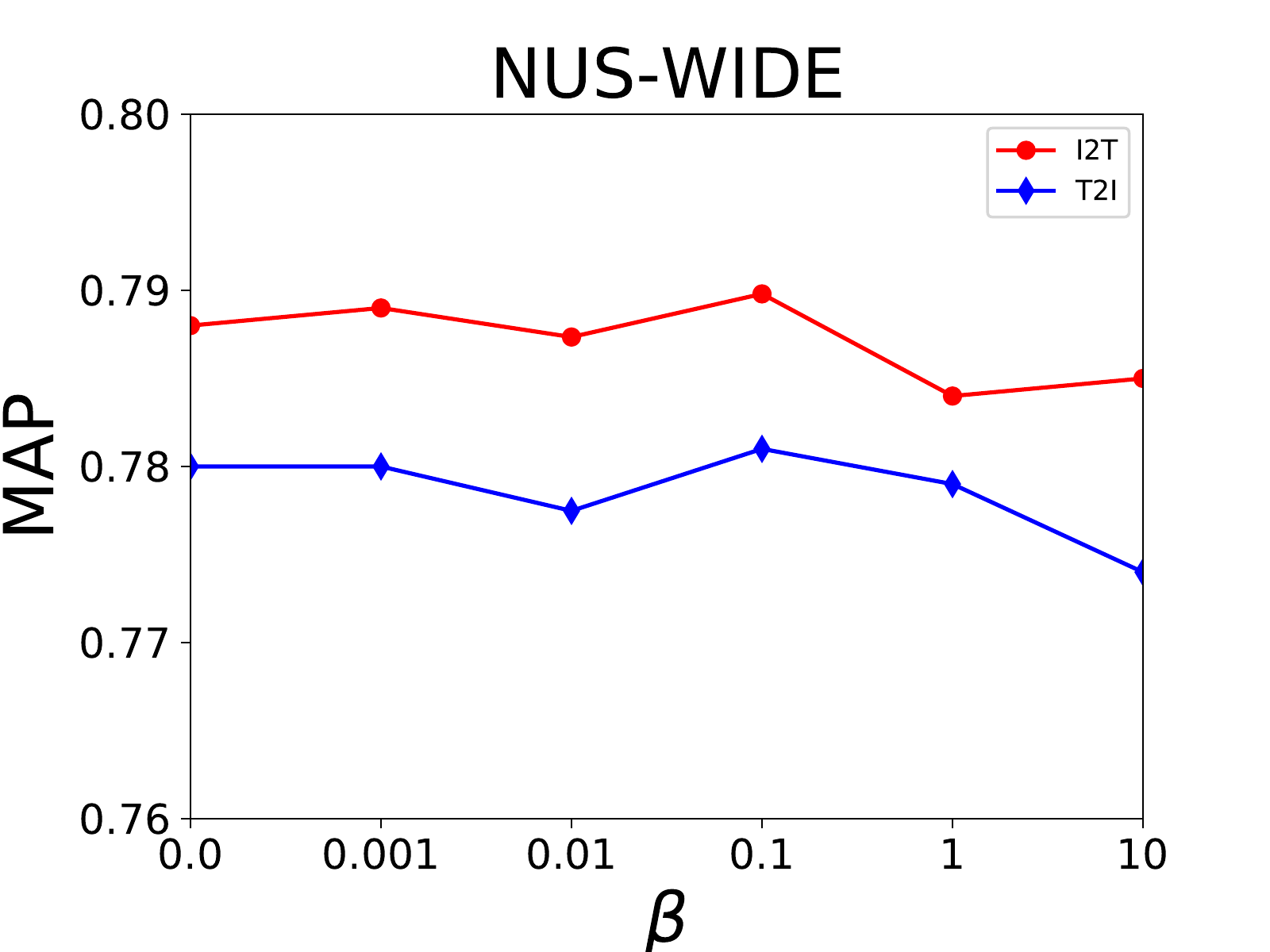}
		\end{minipage}%
	}%
	\subfigure[]{
		\begin{minipage}[t]{0.33\textwidth}
			\centering
			\includegraphics[width=\linewidth]{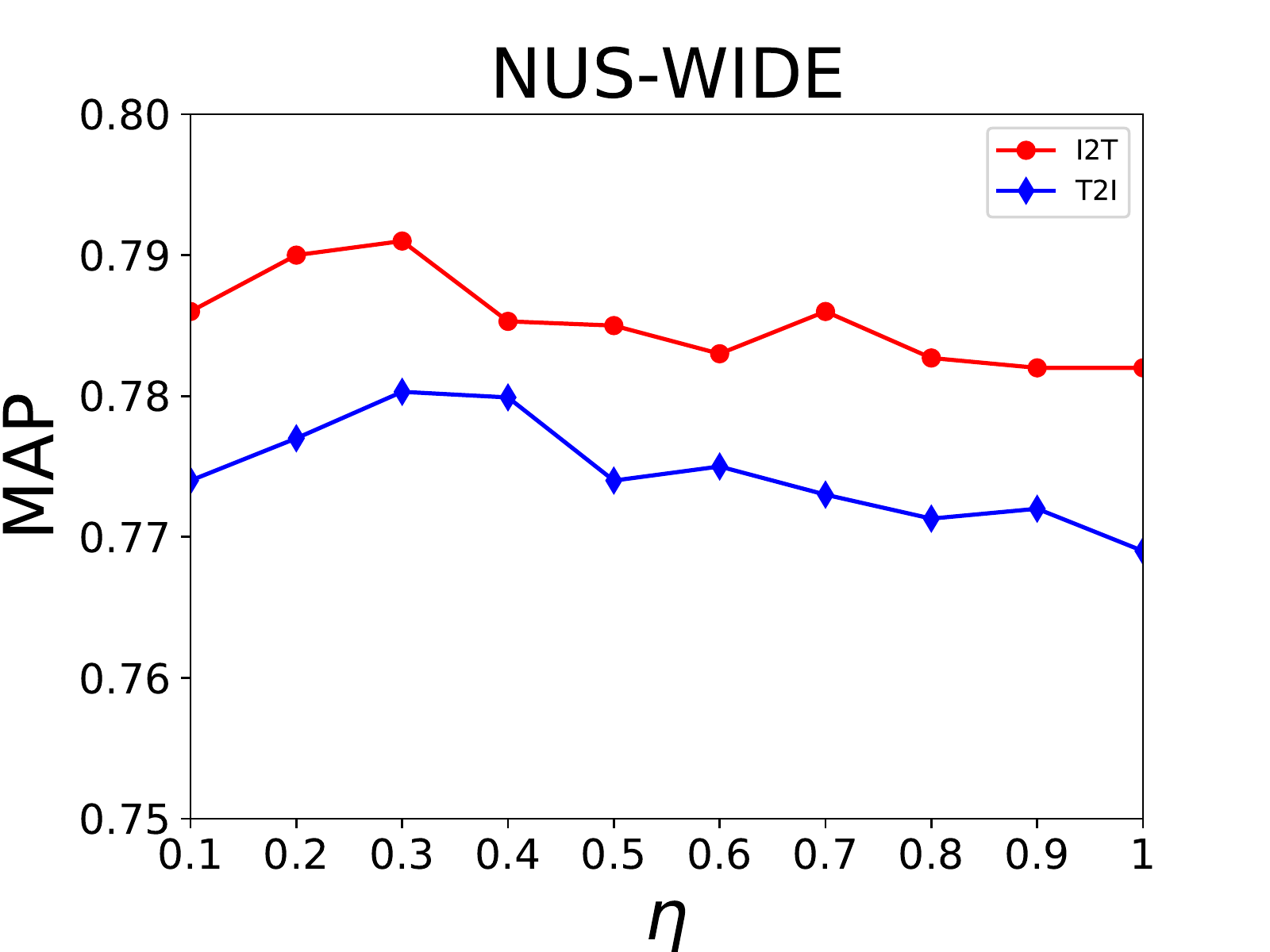}
		\end{minipage}%
	}%
	\quad
	\subfigure[]{
		\begin{minipage}[t]{0.33\textwidth}
			\centering
			\includegraphics[width=\linewidth]{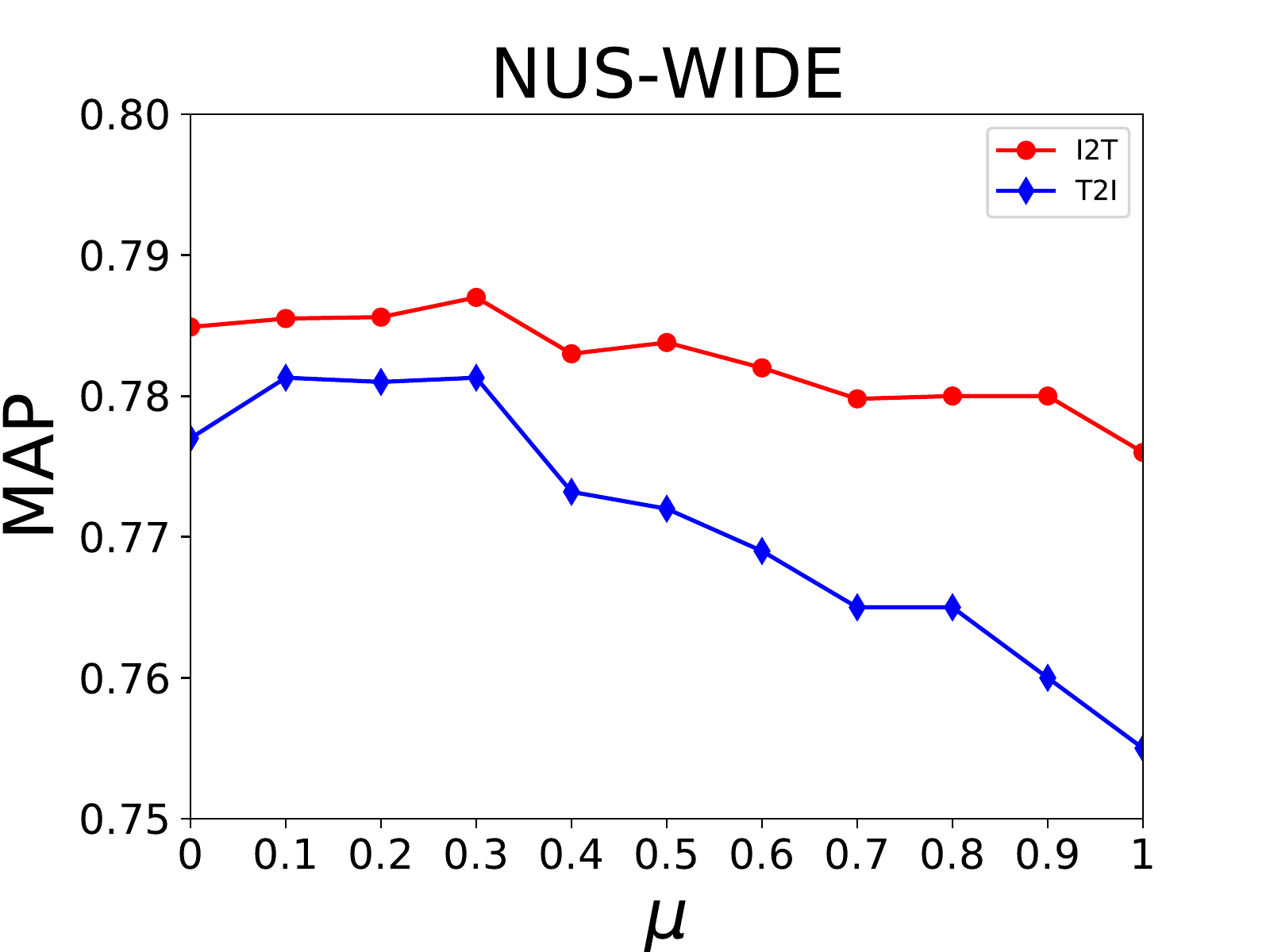}
		\end{minipage}%
	}%
	\subfigure[]{
		\begin{minipage}[t]{0.33\textwidth}
			\centering
			\includegraphics[width=\linewidth]{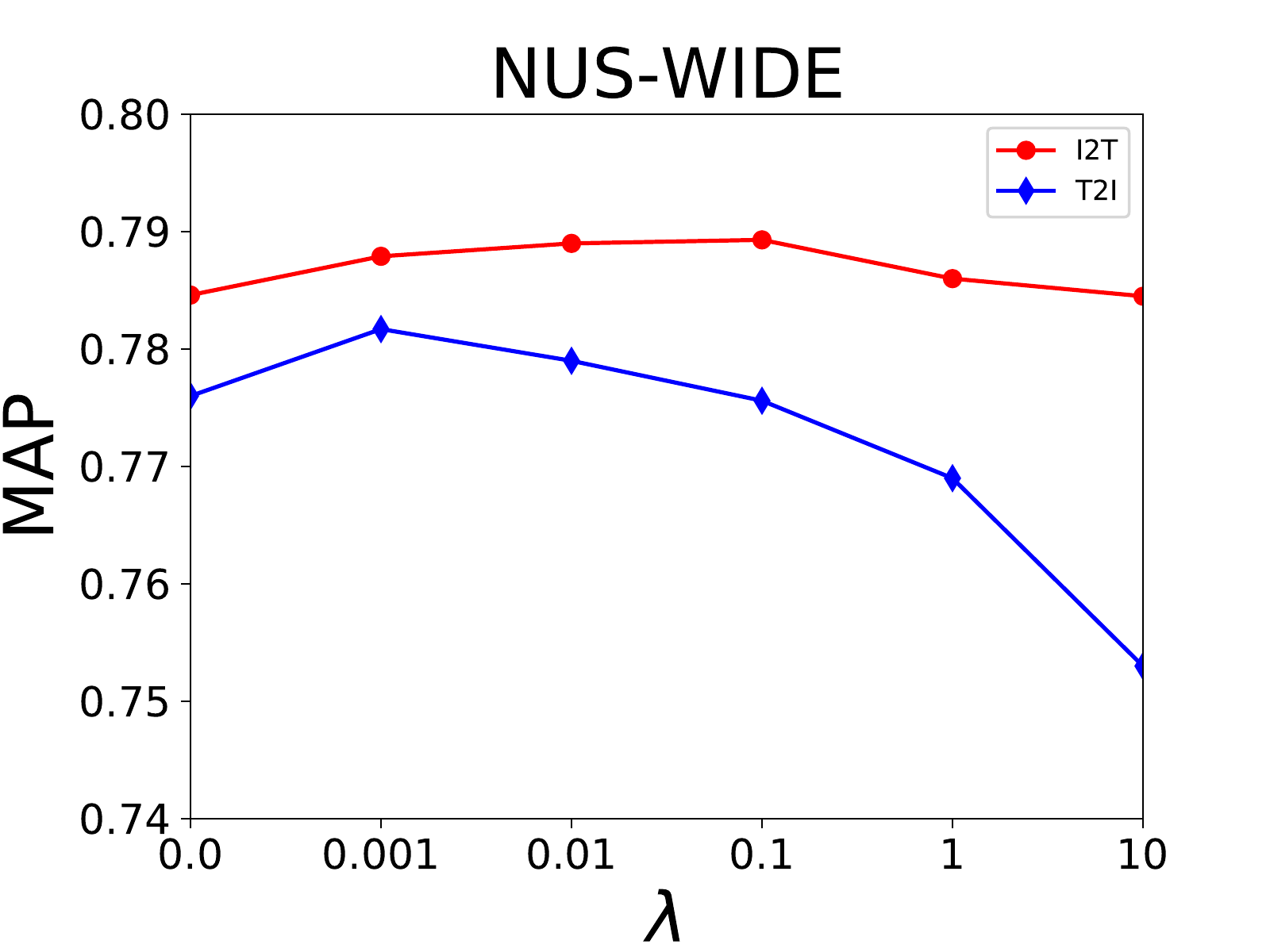}
		\end{minipage}%
	}%
	\subfigure[]{
		\begin{minipage}[t]{0.33\textwidth}
			\centering
			\includegraphics[width=\linewidth]{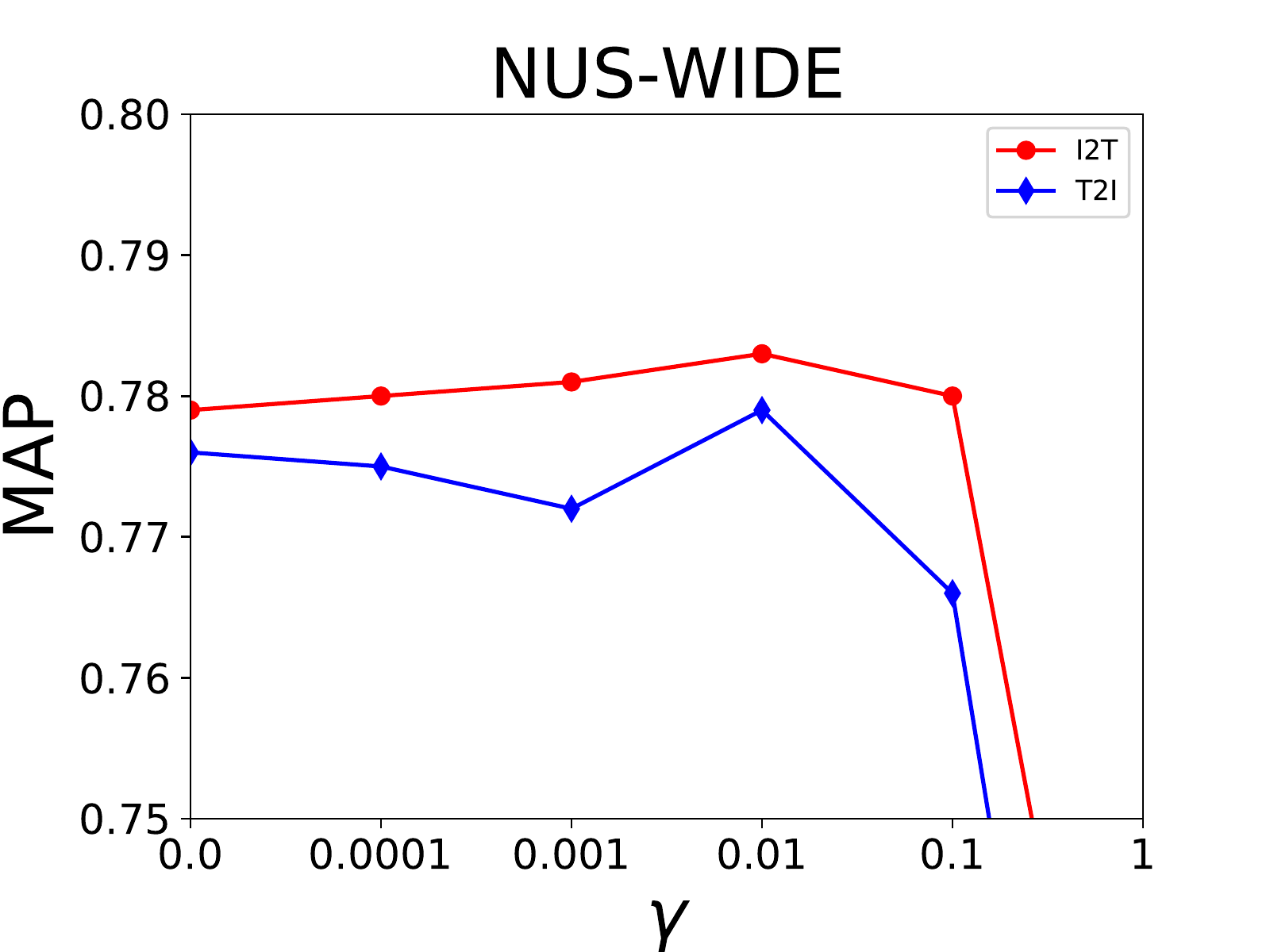}
		\end{minipage}%
	}%
	\quad
	\caption{A sensitivity analysis of the hyper-parameters over NUS-WIDE dataset.}
	\label{fig_par_nus}
\end{figure*}

\begin{figure*} 
	\subfigure[]{
		\begin{minipage}[t]{0.33\textwidth}
			\centering
			\includegraphics[width=\linewidth]{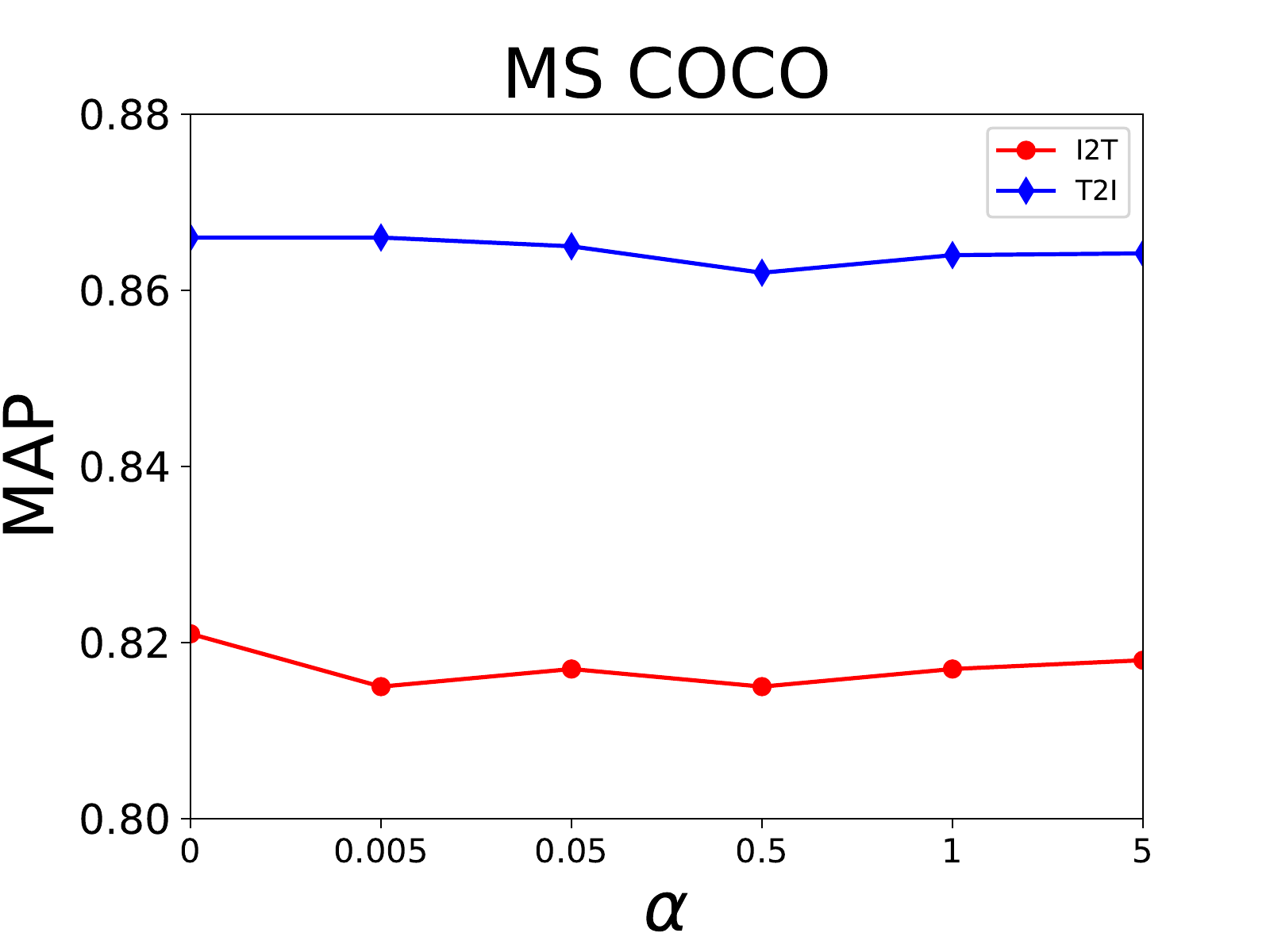}
		\end{minipage}%
	}%
	\subfigure[]{
		\begin{minipage}[t]{0.33\textwidth}
			\centering
			\includegraphics[width=\linewidth]{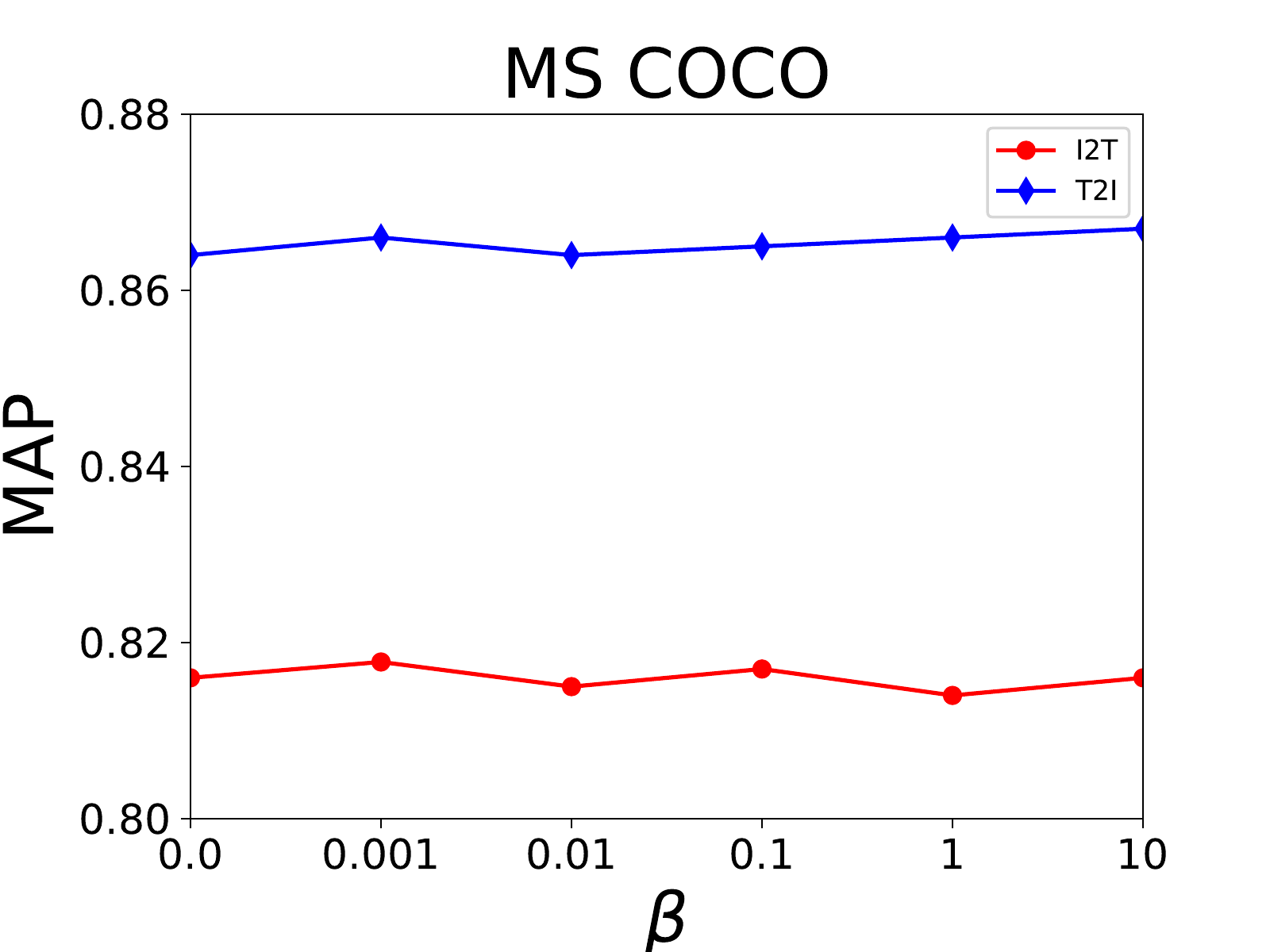}
		\end{minipage}%
	}%
	\subfigure[]{
		\begin{minipage}[t]{0.33\textwidth}
			\centering
			\includegraphics[width=\linewidth]{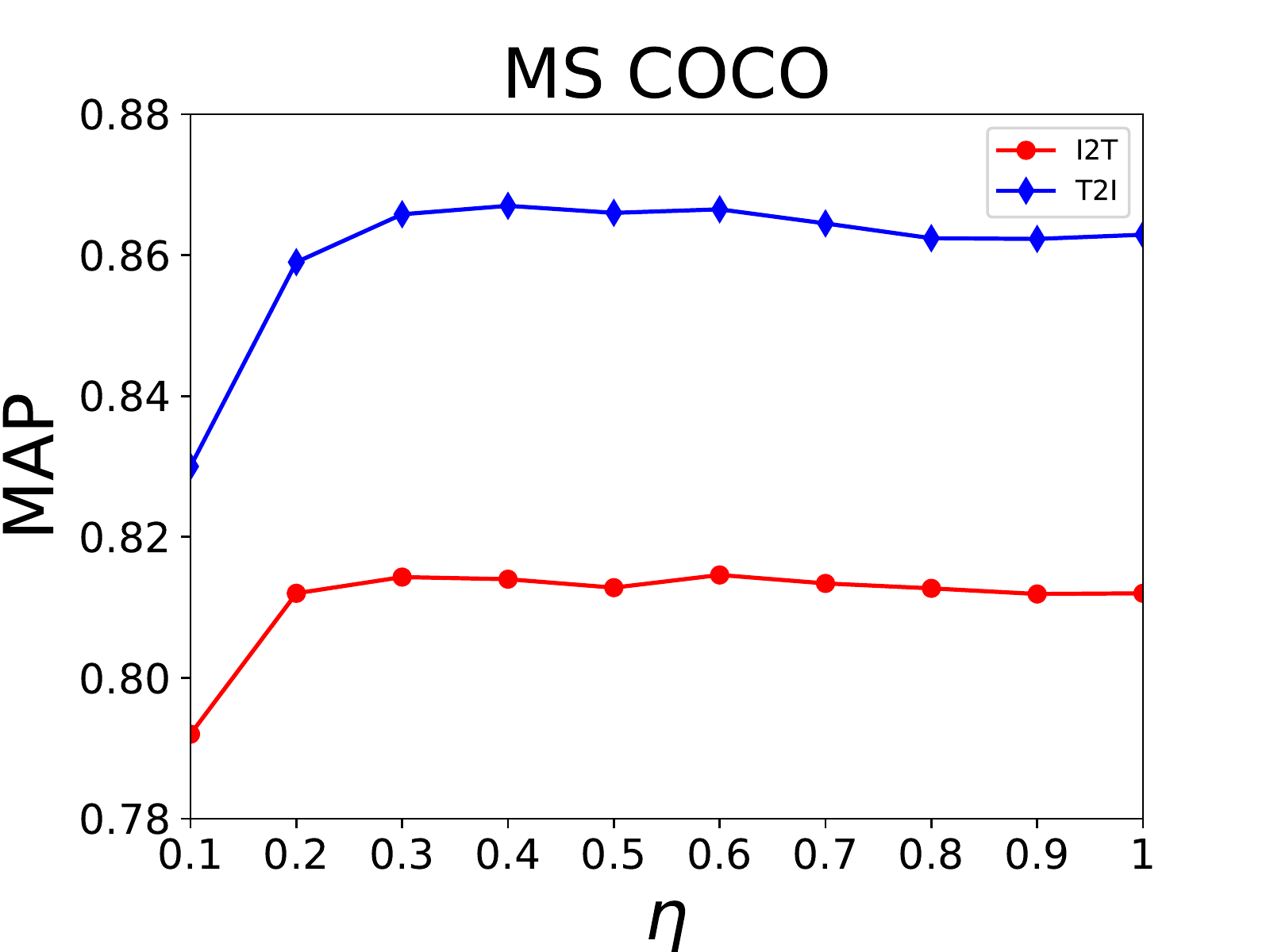}
		\end{minipage}%
	}%
	\quad
	\subfigure[]{
		\begin{minipage}[t]{0.33\textwidth}
			\centering
			\includegraphics[width=\linewidth]{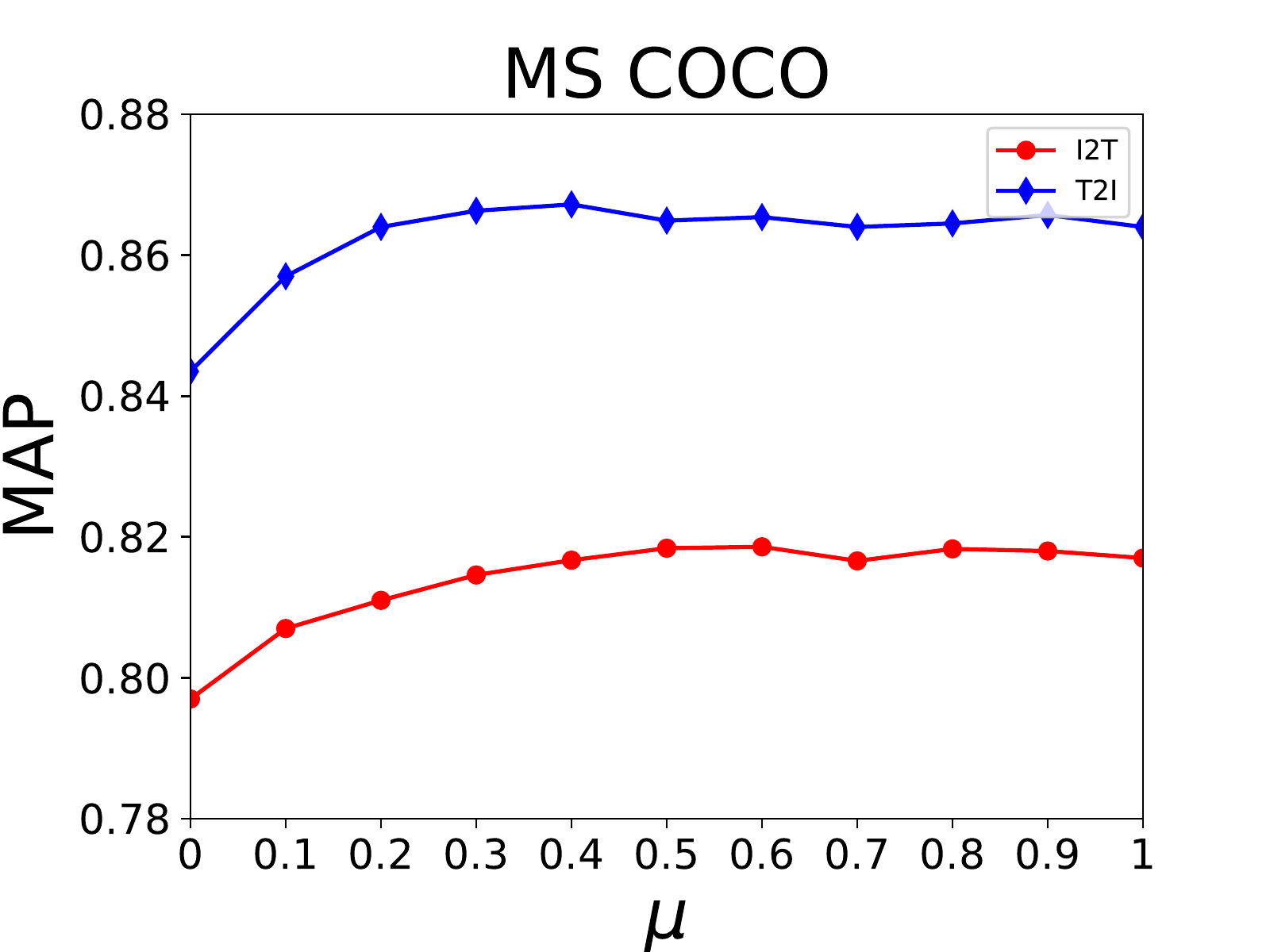}
		\end{minipage}%
	}%
	\subfigure[]{
		\begin{minipage}[t]{0.33\textwidth}
			\centering
			\includegraphics[width=\linewidth]{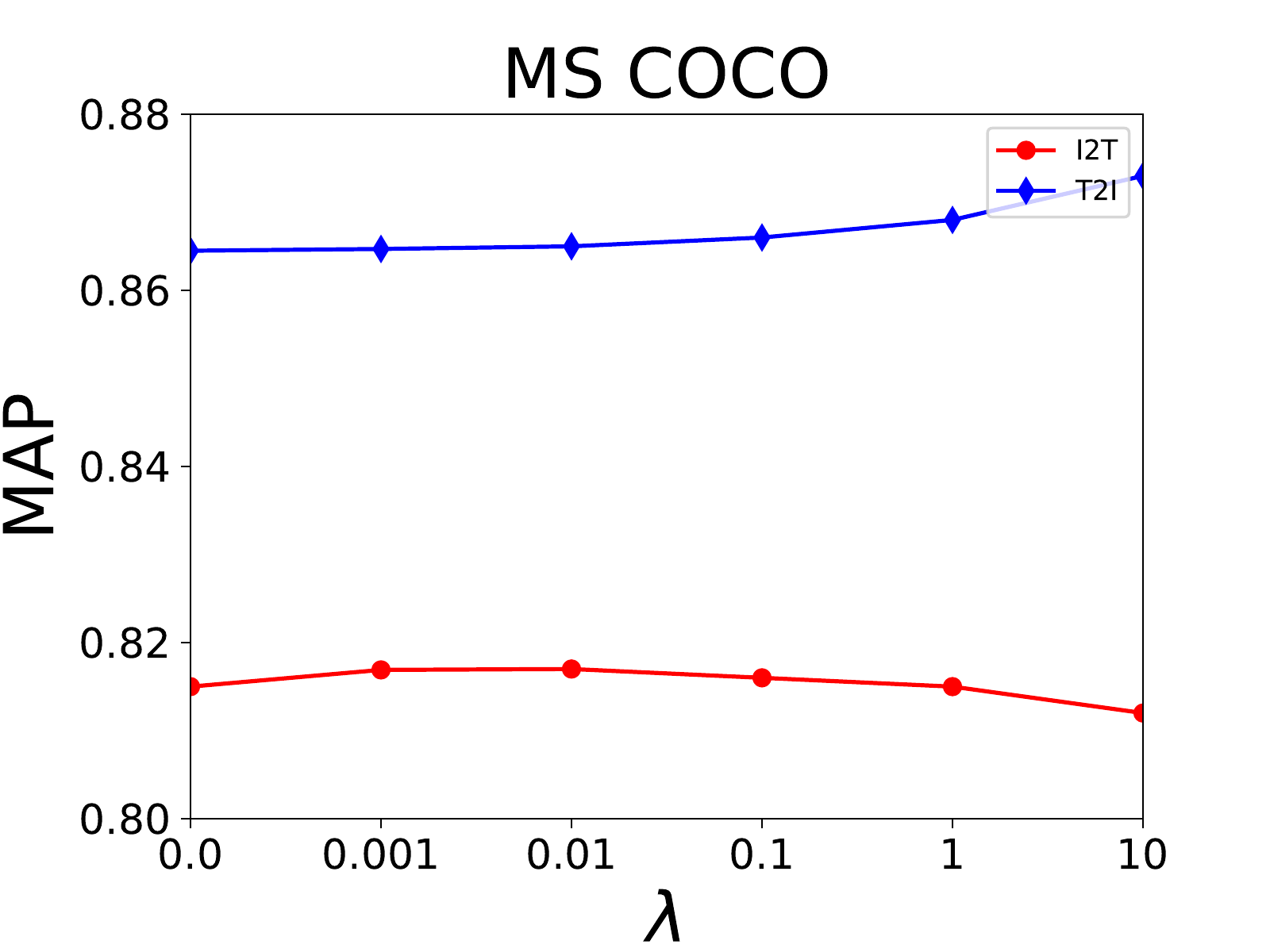}
		\end{minipage}%
	}%
	\subfigure[]{
		\begin{minipage}[t]{0.33\textwidth}
			\centering
			\includegraphics[width=\linewidth]{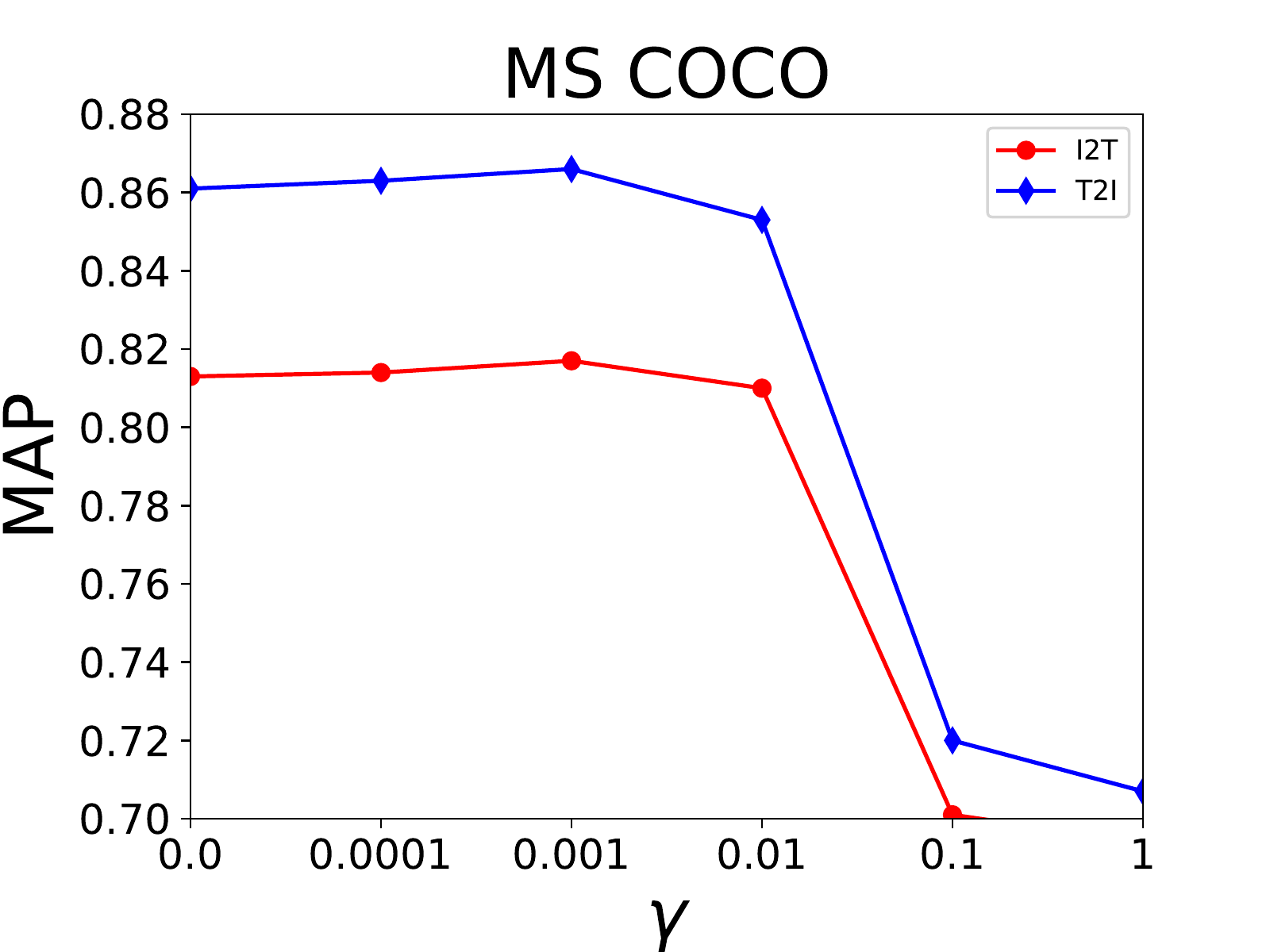}
		\end{minipage}%
	}%
	\quad
	\caption{A sensitivity analysis of the hyper-parameters over MS COCO dataset.}
	\label{fig_par_coco}
\end{figure*}

\begin{figure*} 
\subfigure[]{
	\begin{minipage}[t]{0.33\textwidth}
		\centering
		\includegraphics[width=\linewidth]{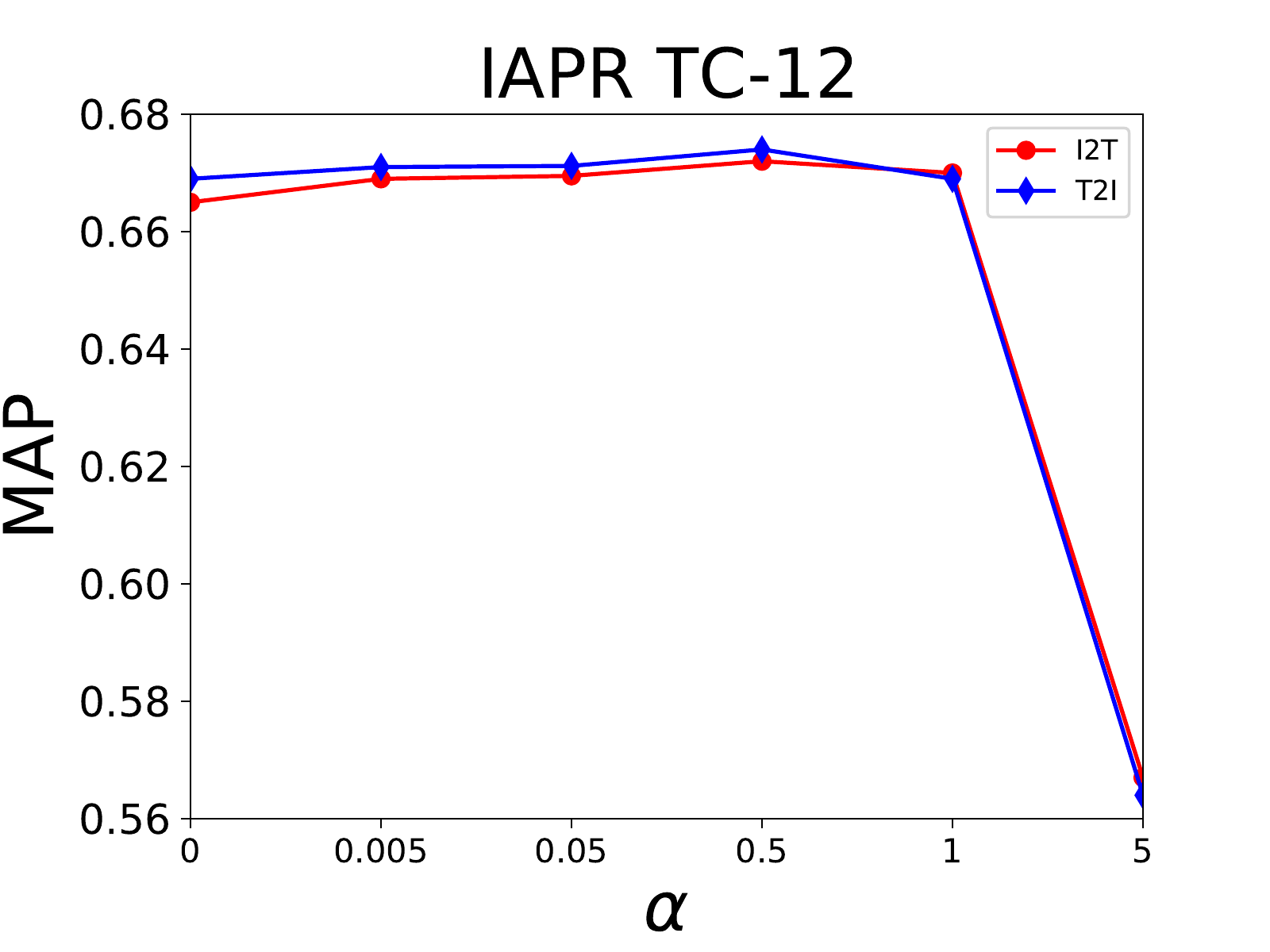}
	\end{minipage}%
}%
\subfigure[]{
	\begin{minipage}[t]{0.33\textwidth}
		\centering
		\includegraphics[width=\linewidth]{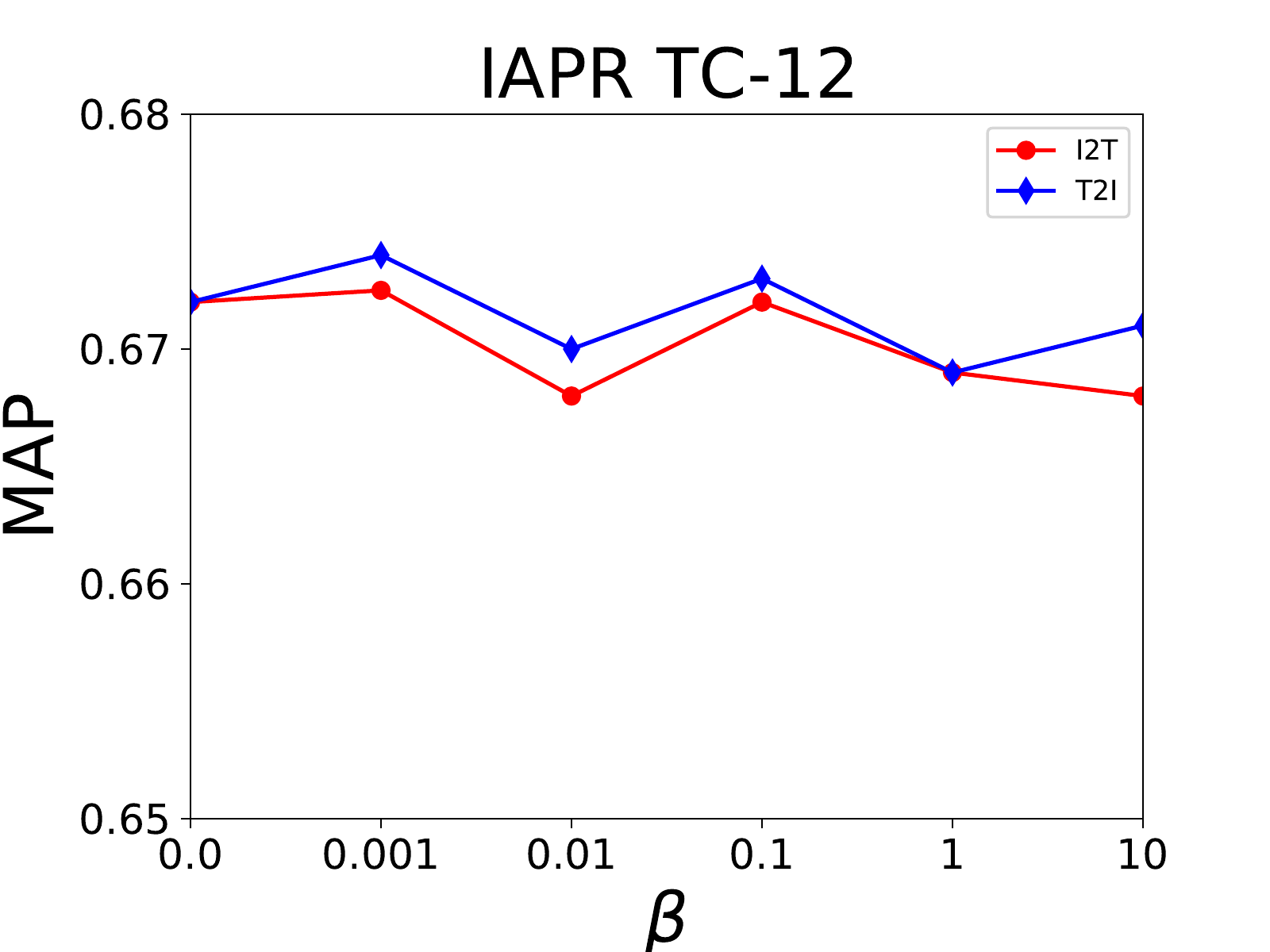}
	\end{minipage}%
}%
\subfigure[]{
	\begin{minipage}[t]{0.33\textwidth}
		\centering
		\includegraphics[width=\linewidth]{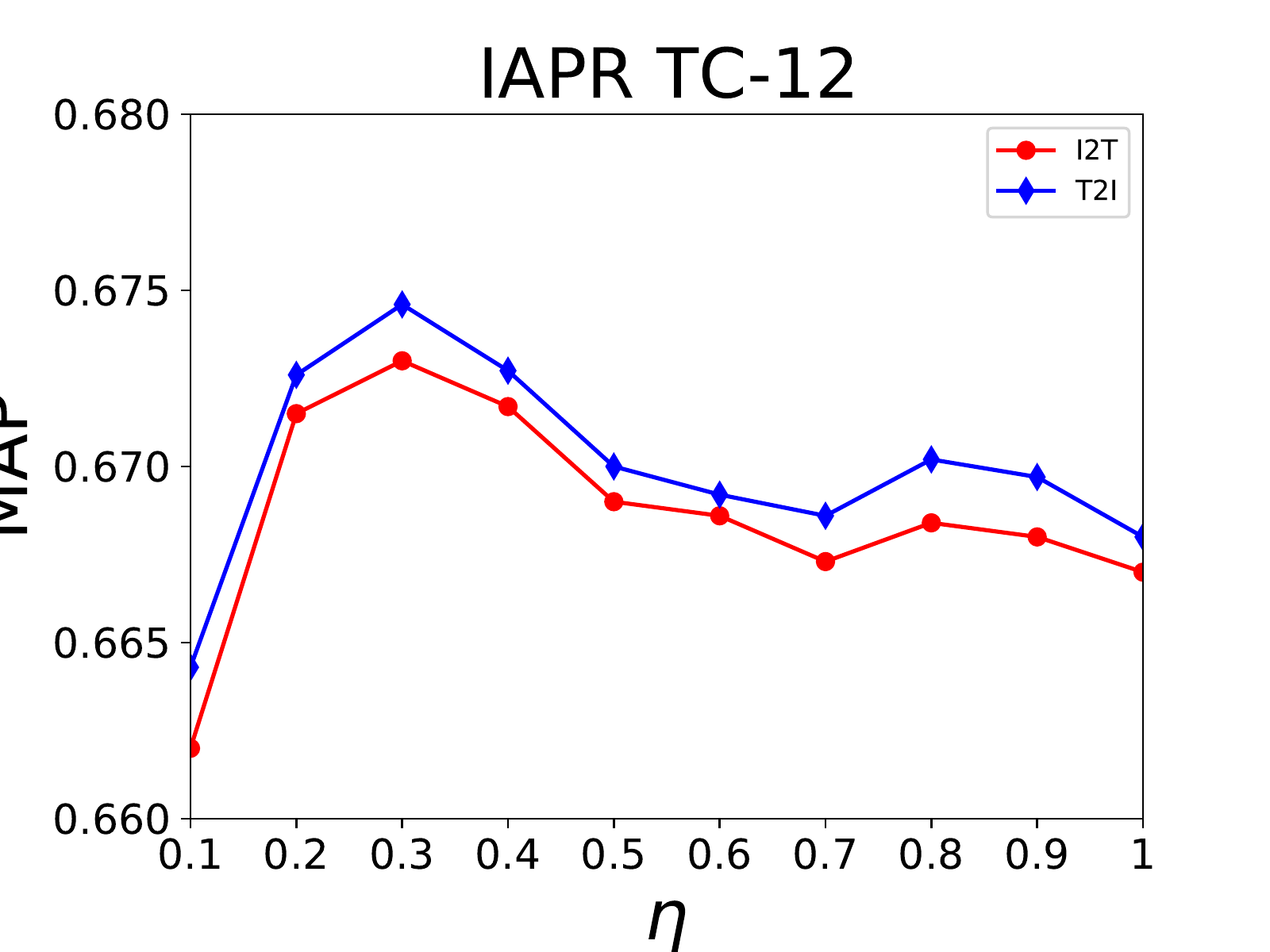}
	\end{minipage}%
}%
\quad
\subfigure[]{
	\begin{minipage}[t]{0.33\textwidth}
		\centering
		\includegraphics[width=\linewidth]{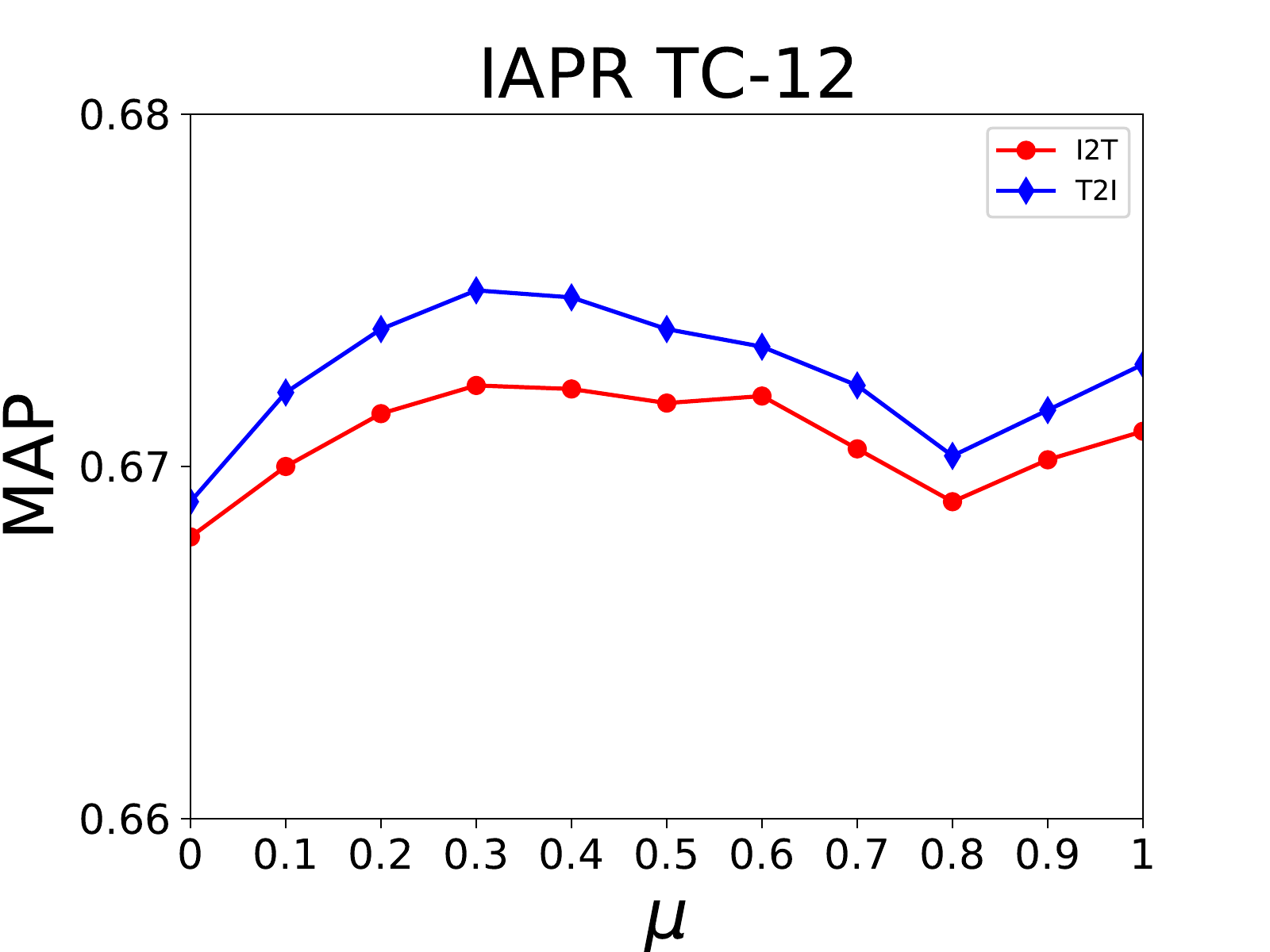}
	\end{minipage}%
}%
\subfigure[]{
	\begin{minipage}[t]{0.33\textwidth}
		\centering
		\includegraphics[width=\linewidth]{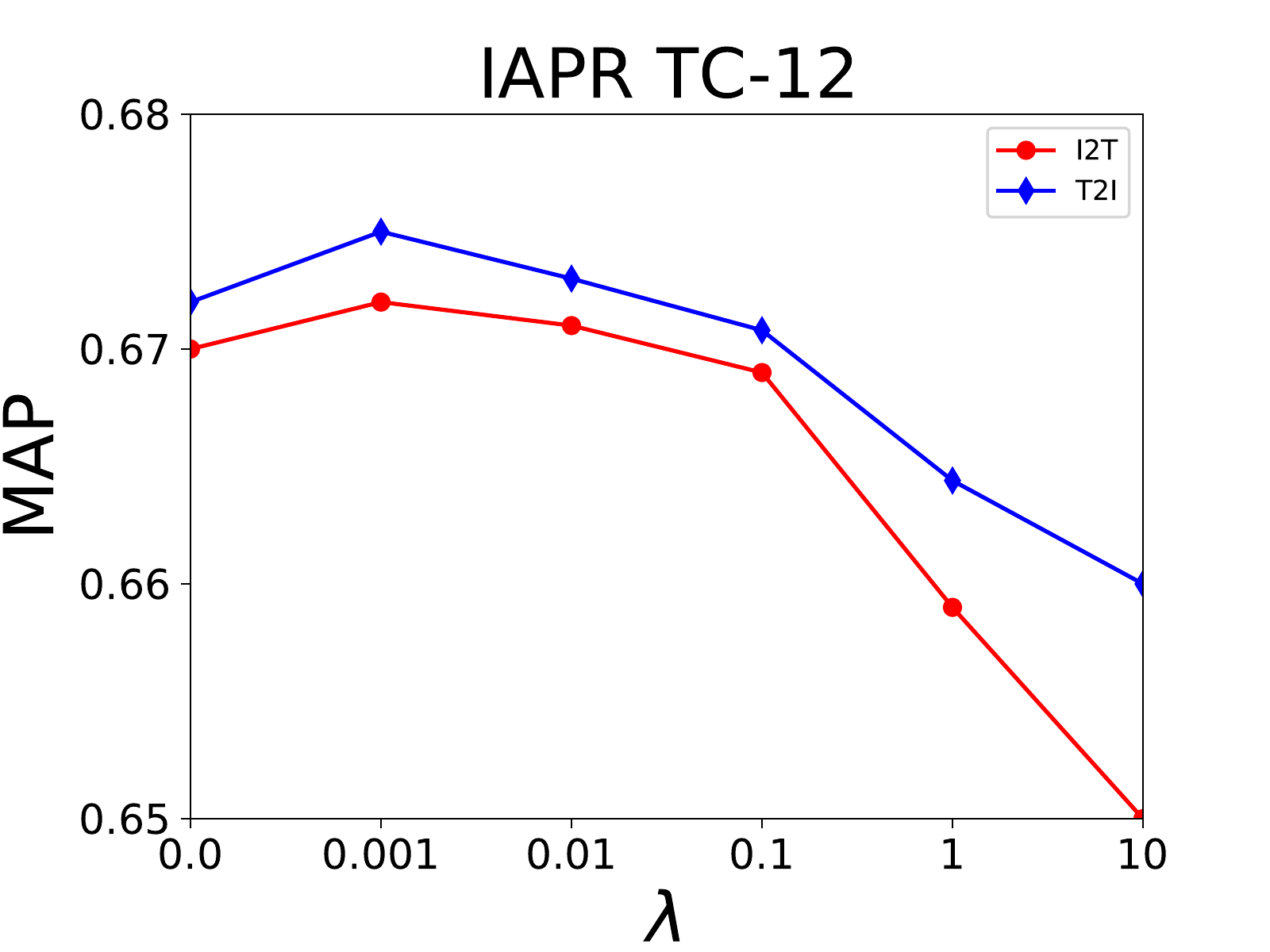}
	\end{minipage}%
}%
\subfigure[]{
	\begin{minipage}[t]{0.33\textwidth}
		\centering
		\includegraphics[width=\linewidth]{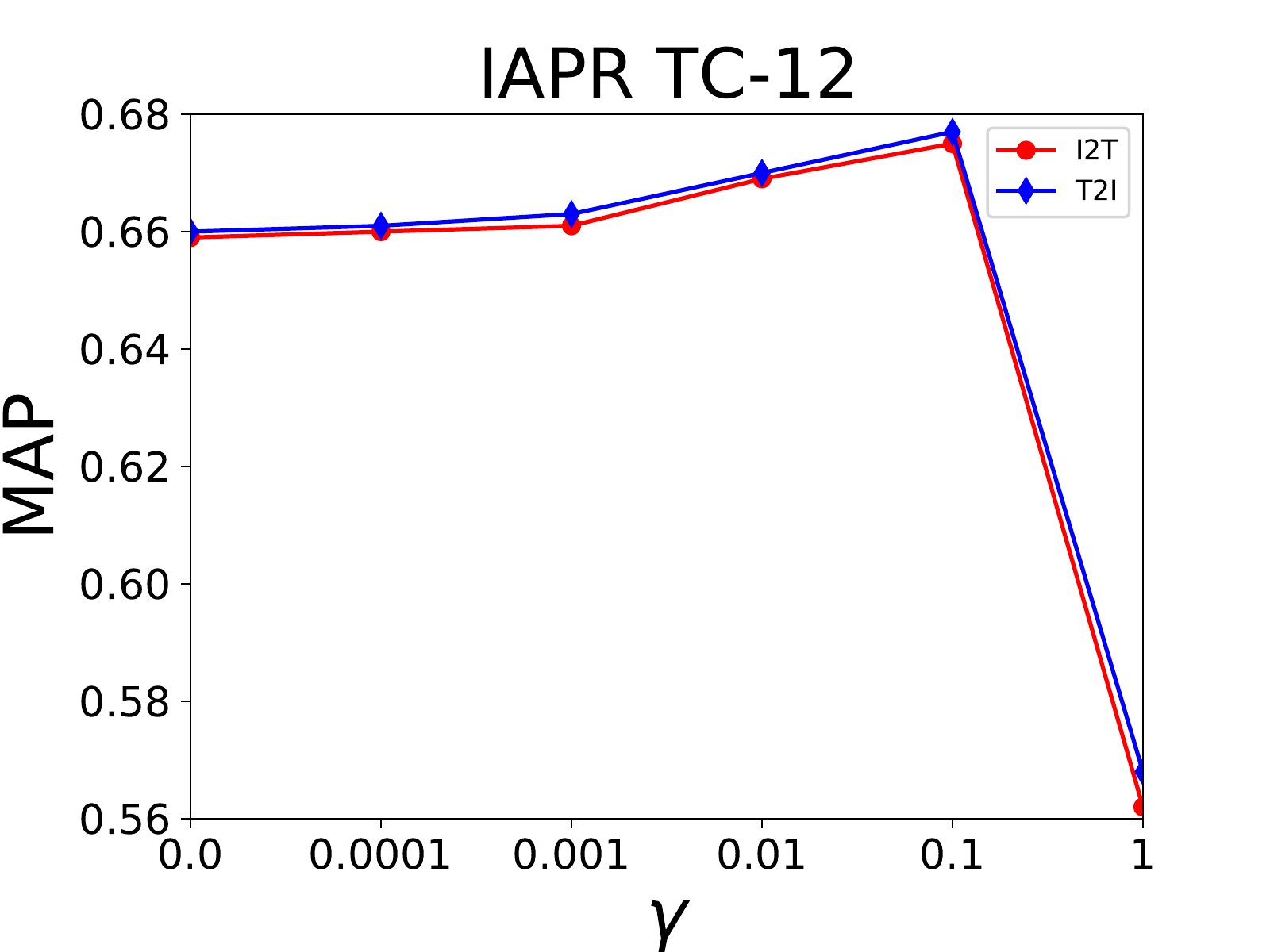}
	\end{minipage}%
}%
\quad
\caption{A sensitivity analysis of the hyper-parameters over IAPR TC-12 dataset.}
\label{fig_par_iaprtc}
\end{figure*}

\subsubsection{Ablation Study}
We investigate two DCHML variants: (1) DCHML$_{Q}$ is a DCHML variant without quantization loss, i.e., $\gamma = 0$;  (2) DCHML$_{P}$ is a DCHML variant replacing the $\mathcal{L}_{S}^m$ with the typical pairwise loss $\mathcal{L}_{P}^m$ which is similar to \cite{li2018self} and defined as follow:
\begin{equation}
\begin{aligned}
\min\limits_{\boldsymbol{W}^m}\mathcal{L}^m_P &=- \sum\limits_{i=1}^n \sum\limits_{j = 1} ^ c(s_{ij}\Omega_{ij}^m - log(1 + e^{\Omega_{ij}}))^m \\
& \ \ \ \ + \gamma\sum\limits_{i=1}^n\left\|\boldsymbol{\hat{b}}_i^m - \boldsymbol{c}_i \right\|_F^2.
\end{aligned}
\label{lp}
\end{equation}
where $\Omega_{ij}^m= \frac{1}{2}\boldsymbol{\hat{b}}_i^{mT}\boldsymbol{g}_j$; $\boldsymbol{g}_j$ is the proxy hash code of category$j$; $\boldsymbol{\hat{b}}_i^m = \mathcal{H}^m(\boldsymbol{x}_i^m;\boldsymbol{W}^m)$ is the output of modality-specific hashing network  with a datapoint $\boldsymbol{x}_i^m$ as input, and $\boldsymbol{W}^m$ represents the set of parameters in the hashing network of modality $m$;  $s_{ij}$ is the similarity between a datapoint $\boldsymbol{x}_i^m$ ($m=v\ or\ t$) and category $j$, i.e., when datapoint $\boldsymbol{x}_i^m$ belong to category $j$, $s_{ij}=1$, otherwise $s_{ij}=0$; and $\boldsymbol{c}_i = sgn(\boldsymbol{\hat{b}}_i^v + \boldsymbol{\hat{b}}_i^t)$. 

The MAP results of DCHML and its variants with different hash code length on the three datasets are shown in Table \ref{map_vall}.  From Table \ref{map_vall}, it can be observed that: (1) It will increase the retrieval performance by employing the quantization loss. These results indicate that it is  necessary  to control the quantization error though a quantization loss. 
(3) Our proposed \textit{margin-dynamic-softmax loss} is better than the traditional pairwise loss. For example, For instance, compared with DCMH$_P$, the MAP results of DCHML for $``$I2T$"$ have an average increase of 16.3\%, 18.5\% and 12.3\% on datasets NUS-WIDE, MS COCO and IAPR TC-12, respectively. These results demonstrate that our proposed \textit{margin-dynamic-softmax loss} can make the learned hash codes preserve semantic structures information sufficiently to further improve the cross-modal retrieval performance.

\subsubsection{Sensitivity to Hyper-parameters}
\label{sp}
We study the influence of the hyper-parameters $\alpha,\beta,\eta, \mu$, $\lambda$ and $\gamma$  with the hash code length being 64-bits on all the three datasets, and the results are shown in the Figure \ref{fig_par_nus}, \ref{fig_par_coco} and \ref{fig_par_iaprtc}. For the hyper-parameter $\alpha$, to investigate its influence, we evaluate the MAP values of our proposed DCHML by varying $\alpha$ from $0$ to $5$ with the other hyper-parameters fixed, and the results on the three datasets are shown in the Figure \ref{fig_par_nus} (a), \ref{fig_par_coco} (a) and \ref{fig_par_iaprtc} (a), respectively. It can be found that DCHML achieves good performance with $\alpha$ being 0.05 for all the three datasets. To investigate the influence of hyper-parameter $\beta$, we evaluate the MAP values of DCHML by varying $\beta$ from $0$ to $10$ with the other hyper-parameters fixed, and the results on the three datasets are shown in the Figure \ref{fig_par_nus} (b), \ref{fig_par_coco} (b) and \ref{fig_par_iaprtc} (b), respectively. It can be found that DCHML achieves good performance with $\beta$ being 0.1 for all the three datasets. Moreover, to investigate the influence of hyper-parameter $\eta$, we evaluate the MAP values of DCHML by varying $\eta$ from $0.1$ to $1$ with the other hyper-parameters fixed, and the results on the three datasets are shown in the Figure \ref{fig_par_nus} (c), \ref{fig_par_coco} (c) and \ref{fig_par_iaprtc} (c), respectively. It can be found that DCHML achieves good performance with $\eta$ being 0.3 for all the three datasets. For the hyper-parameter $\mu$, to investigate its influence, we evaluate the MAP values of our proposed DCHML by varying $\mu$ from $0$ to $1$ with the other hyper-parameters fixed, and the results on the three datasets are shown in the Figure \ref{fig_par_nus} (d), \ref{fig_par_coco} (d) and \ref{fig_par_iaprtc} (d), respectively. It can be found that DCHML achieves good performance with $\alpha$ being 0.3 for all the three datasets. To investigate the influence of hyper-parameter $\lambda$, we evaluate the MAP values of DCHML by varying $\lambda$ from $0$ to $10$ with the other hyper-parameters fixed, and the results on the three datasets are shown in the Figure \ref{fig_par_nus} (e), \ref{fig_par_coco} (e) and \ref{fig_par_iaprtc} (e), respectively. It can be found that DCHML achieves good performance with $\lambda$ being 0.001 for all the three datasets. Finally, to investigate the influence of hyper-parameter $\gamma$, we evaluate the MAP values of DCHML by varying $\gamma$ from $0$ to $1$ with the other hyper-parameters fixed, and the results on the three datasets are shown in the Figure \ref{fig_par_nus} (f), \ref{fig_par_coco} (f) and \ref{fig_par_iaprtc} (f), respectively. It can be found that DCHML achieves good performance with $\lambda$ being 0.01, 0.001 and 0.1 on NUS-WIDE, MS COCO and IAPR TC-12, respectively. 

\section{Conclusion}
In this paper, we have proposed a novel Deep Cross-Modal Hashing via Margin-dynamic-softmax Loss, called DCHML. Without defining the similarity between datapoints, DCHML can generate high-quality hash codes with cross-modal similarity and semantic label information preserved sufficiently by minimizing a novel \textit{margin-dynamic-softmax loss}. Extensive experiments on three widely used benchmark datasets have demonstrated that the proposed DCHML method outperforms the state-of-the-art baselines in the cross-modal retrieval task.


%

\appendices


\section*{Acknowledgment}
The work is supported by National Key R\&D Plan (No. 2018YFB1005100), NSFC (No. 61772076, 61751201 and 61602197), NSFB (No. Z181100008918002), and the funds of Beijing Advanced Innovation Center for Language Resources (No. TYZ19005).

\ifCLASSOPTIONcaptionsoff
  \newpage
\fi



\bibliographystyle{IEEEtran}
\bibliography{DCHML}

\begin{thebibliography}{10}
\providecommand{\url}[1]{#1}
\csname url@samestyle\endcsname
\providecommand{\newblock}{\relax}
\providecommand{\bibinfo}[2]{#2}
\providecommand{\BIBentrySTDinterwordspacing}{\spaceskip=0pt\relax}
\providecommand{\BIBentryALTinterwordstretchfactor}{4}
\providecommand{\BIBentryALTinterwordspacing}{\spaceskip=\fontdimen2\font plus
\BIBentryALTinterwordstretchfactor\fontdimen3\font minus
  \fontdimen4\font\relax}
\providecommand{\BIBforeignlanguage}[2]{{%
\expandafter\ifx\csname l@#1\endcsname\relax
\typeout{** WARNING: IEEEtran.bst: No hyphenation pattern has been}%
\typeout{** loaded for the language `#1'. Using the pattern for}%
\typeout{** the default language instead.}%
\else
\language=\csname l@#1\endcsname
\fi
#2}}
\providecommand{\BIBdecl}{\relax}
\BIBdecl

\bibitem{da2018nonlinear}
C.~Da, G.~Meng, S.~Xiang, K.~Ding, S.~Xu, Q.~Yang, and C.~Pan, ``Nonlinear
  asymmetric multi-valued hashing,'' \emph{IEEE transactions on pattern
  analysis and machine intelligence}, 2018.

\bibitem{hu2020creating}
H.~Hu, L.~Xie, R.~Hong, and Q.~Tian, ``Creating something from nothing:
  Unsupervised knowledge distillation for cross-modal hashing,'' in
  \emph{Proceedings of the IEEE/CVF Conference on Computer Vision and Pattern
  Recognition}, 2020, pp. 3123--3132.

\bibitem{jiang2019discrete}
Q.-Y. Jiang and W.-J. Li, ``Discrete latent factor model for cross-modal
  hashing,'' \emph{IEEE Transactions on Image Processing}, 2019.

\bibitem{li2016linear}
K.~Li, G.-J. Qi, J.~Ye, and K.~A. Hua, ``Linear subspace ranking hashing for
  cross-modal retrieval,'' \emph{IEEE transactions on pattern analysis and
  machine intelligence}, vol.~39, no.~9, pp. 1825--1838, 2016.

\bibitem{liu2018fast}
X.~Liu, X.~Nie, W.~Zeng, C.~Cui, L.~Zhu, and Y.~Yin, ``Fast discrete
  cross-modal hashing with regressing from semantic labels,'' in \emph{2018 ACM
  Multimedia Conference on Multimedia Conference}.\hskip 1em plus 0.5em minus
  0.4em\relax ACM, 2018, pp. 1662--1669.

\bibitem{kumar2011learning}
S.~Kumar and R.~Udupa, ``Learning hash functions for cross-view similarity
  search,'' in \emph{Twenty-Second International Joint Conference on Artificial
  Intelligence}, 2011.

\bibitem{song2013inter}
J.~Song, Y.~Yang, Y.~Yang, Z.~Huang, and H.~T. Shen, ``Inter-media hashing for
  large-scale retrieval from heterogeneous data sources,'' in \emph{Proceedings
  of the 2013 ACM SIGMOD International Conference on Management of Data}, 2013,
  pp. 785--796.

\bibitem{ding2014collective}
G.~Ding, Y.~Guo, and J.~Zhou, ``Collective matrix factorization hashing for
  multimodal data,'' in \emph{Proceedings of the IEEE conference on computer
  vision and pattern recognition}, 2014, pp. 2075--2082.

\bibitem{su2019deep}
S.~Su, Z.~Zhong, and C.~Zhang, ``Deep joint-semantics reconstructing hashing
  for large-scale unsupervised cross-modal retrieval,'' in \emph{Proceedings of
  the IEEE International Conference on Computer Vision}, 2019, pp. 3027--3035.

\bibitem{cao2017collective}
Y.~Cao, M.~Long, J.~Wang, and S.~Liu, ``Collective deep quantization for
  efficient cross-modal retrieval,'' in \emph{Thirty-First AAAI Conference on
  Artificial Intelligence}, 2017.

\bibitem{cao2016correlation}
Y.~Cao, M.~Long, J.~Wang, and H.~Zhu, ``Correlation autoencoder hashing for
  supervised cross-modal search,'' in \emph{Proceedings of the 2016 ACM on
  International Conference on Multimedia Retrieval}.\hskip 1em plus 0.5em minus
  0.4em\relax ACM, 2016, pp. 197--204.

\bibitem{chen2018dual}
Z.-D. Chen, W.-J. Yu, C.-X. Li, L.~Nie, and X.-S. Xu, ``Dual deep neural
  networks cross-modal hashing,'' in \emph{Thirty-Second AAAI Conference on
  Artificial Intelligence}, 2018.

\bibitem{lin2020semantic}
Q.~Lin, W.~Cao, Z.~He, and Z.~He, ``Semantic deep cross-modal hashing,''
  \emph{Neurocomputing}, 2020.

\bibitem{liong2018deep}
V.~E. Liong, J.~Lu, L.-Y. Duan, and Y.~Tan, ``Deep variational and structural
  hashing,'' \emph{IEEE transactions on pattern analysis and machine
  intelligence}, 2018.

\bibitem{tu2020deep}
R.-C. Tu, X.-L. Mao, B.~Ma, Y.~Hu, T.~Yan, W.~Wei, and H.~Huang, ``Deep
  cross-modal hashing with hashing functions and unified hash codes jointly
  learning,'' \emph{IEEE Transactions on Knowledge and Data Engineering}, 2020.

\bibitem{li2018self}
C.~Li, C.~Deng, N.~Li, W.~Liu, X.~Gao, and D.~Tao, ``Self-supervised
  adversarial hashing networks for cross-modal retrieval,'' in
  \emph{Proceedings of the IEEE conference on computer vision and pattern
  recognition}, 2018, pp. 4242--4251.

\bibitem{shen2017deep}
Y.~Shen, L.~Liu, L.~Shao, and J.~Song, ``Deep binaries: Encoding semantic-rich
  cues for efficient textual-visual cross retrieval,'' in \emph{Proceedings of
  the IEEE International Conference on Computer Vision}, 2017, pp. 4097--4106.

\bibitem{shi2019equally}
Y.~Shi, X.~You, F.~Zheng, S.~Wang, and Q.~Peng, ``Equally-guided discriminative
  hashing for cross-modal retrieval.'' in \emph{IJCAI}, 2019, pp. 4767--4773.

\bibitem{huang2016class}
L.-K. Huang and S.~J. Pan, ``Class-wise supervised hashing with label embedding
  and active bits.'' in \emph{IJCAI}, 2016, pp. 1585--1591.

\bibitem{luo2019discrete}
X.~Luo, P.-F. Zhang, Z.~Huang, L.~Nie, and X.-S. Xu, ``Discrete hashing with
  multiple supervision,'' \emph{IEEE Transactions on Image Processing},
  vol.~28, no.~6, pp. 2962--2975, 2019.

\bibitem{wang2015semantic}
D.~Wang, X.~Gao, X.~Wang, and L.~He, ``Semantic topic multimodal hashing for
  cross-media retrieval,'' in \emph{Twenty-fourth international joint
  conference on artificial intelligence}, 2015.

\bibitem{xu2017learning}
X.~Xu, F.~Shen, Y.~Yang, H.~T. Shen, and X.~Li, ``Learning discriminative
  binary codes for large-scale cross-modal retrieval,'' \emph{IEEE Transactions
  on Image Processing}, vol.~26, no.~5, pp. 2494--2507, 2017.

\bibitem{wang2017robust}
D.~Wang, Q.~Wang, and X.~Gao, ``Robust and flexible discrete hashing for
  cross-modal similarity search,'' \emph{IEEE Transactions on Circuits and
  Systems for Video Technology}, vol.~28, no.~10, pp. 2703--2715, 2017.

\bibitem{zhang2014large}
D.~Zhang and W.-J. Li, ``Large-scale supervised multimodal hashing with
  semantic correlation maximization,'' in \emph{Twenty-Eighth AAAI Conference
  on Artificial Intelligence}, 2014.

\bibitem{cao2018cross}
Y.~Cao, B.~Liu, M.~Long, and J.~Wang, ``Cross-modal hamming hashing,'' in
  \emph{Proceedings of the European Conference on Computer Vision (ECCV)},
  2018, pp. 202--218.

\bibitem{erin2017cross}
V.~Erin~Liong, J.~Lu, Y.-P. Tan, and J.~Zhou, ``Cross-modal deep variational
  hashing,'' in \emph{Proceedings of the IEEE International Conference on
  Computer Vision}, 2017, pp. 4077--4085.

\bibitem{luo2018sdmch}
X.~Luo, X.-Y. Yin, L.~Nie, X.~Song, Y.~Wang, and X.-S. Xu, ``Sdmch: Supervised
  discrete manifold-embedded cross-modal hashing.'' in \emph{IJCAI}, 2018, pp.
  2518--2524.

\bibitem{wang2017adversarial}
B.~Wang, Y.~Yang, X.~Xu, A.~Hanjalic, and H.~T. Shen, ``Adversarial cross-modal
  retrieval,'' in \emph{Proceedings of the 25th ACM international conference on
  Multimedia}.\hskip 1em plus 0.5em minus 0.4em\relax ACM, 2017, pp. 154--162.

\bibitem{wang2018label}
D.~Wang, X.~Gao, X.~Wang, and L.~He, ``Label consistent matrix factorization
  hashing for large-scale cross-modal similarity search,'' \emph{IEEE
  transactions on pattern analysis and machine intelligence}, vol.~41, no.~10,
  pp. 2466--2479, 2018.

\bibitem{jiang2017deep}
Q.-Y. Jiang and W.-J. Li, ``Deep cross-modal hashing,'' in \emph{Proceedings of
  the IEEE Conference on Computer Vision and Pattern Recognition}, 2017, pp.
  3232--3240.

\bibitem{xu2019graph}
R.~Xu, C.~Li, J.~Yan, C.~Deng, and X.~Liu, ``Graph convolutional network
  hashing for cross-modal retrieval.'' in \emph{IJCAI}, 2019, pp. 982--988.

\bibitem{krizhevsky2012imagenet}
A.~Krizhevsky, I.~Sutskever, and G.~E. Hinton, ``Imagenet classification with
  deep convolutional neural networks,'' in \emph{Advances in neural information
  processing systems}, 2012, pp. 1097--1105.

\bibitem{tumls3rduh}
R.-C. Tu, X.-L. Mao, and W.~Wei, ``Mls3rduh: Deep unsupervised hashing via
  manifold based local semantic similarity structure reconstructing,'' in
  \emph{Proceedings of the Twenty-Ninth International Joint Conference on
  Artificial Intelligence}, 2020.

\bibitem{qian2019softtriple}
Q.~Qian, L.~Shang, B.~Sun, J.~Hu, H.~Li, and R.~Jin, ``Softtriple loss: Deep
  metric learning without triplet sampling,'' in \emph{Proceedings of the IEEE
  International Conference on Computer Vision}, 2019, pp. 6450--6458.

\bibitem{tu2021partial}
R.-C. Tu, X.-L. Mao, J.-N. Guo, W.~Wei, and H.~Huang, ``Partial-softmax loss
  based deep hashing,'' \emph{Proceedings of The Web Conference 2021}, 2021.

\bibitem{boyd2004convex}
S.~Boyd and L.~Vandenberghe, \emph{Convex optimization}.\hskip 1em plus 0.5em
  minus 0.4em\relax Cambridge university press, 2004.

\bibitem{chua2009nus}
T.-S. Chua, J.~Tang, R.~Hong, H.~Li, Z.~Luo, and Y.~Zheng, ``Nus-wide: a
  real-world web image database from national university of singapore,'' in
  \emph{Proceedings of the ACM international conference on image and video
  retrieval}.\hskip 1em plus 0.5em minus 0.4em\relax ACM, 2009, p.~48.

\bibitem{lin2014microsoft}
T.-Y. Lin, M.~Maire, S.~Belongie, J.~Hays, P.~Perona, D.~Ramanan,
  P.~Doll{\'a}r, and C.~L. Zitnick, ``Microsoft coco: Common objects in
  context,'' in \emph{European conference on computer vision}.\hskip 1em plus
  0.5em minus 0.4em\relax Springer, 2014, pp. 740--755.

\bibitem{escalante2010segmented}
H.~J. Escalante, C.~A. Hern{\'a}ndez, J.~A. Gonzalez, A.~L{\'o}pez-L{\'o}pez,
  M.~Montes, E.~F. Morales, L.~E. Sucar, L.~Villase{\~n}or, and M.~Grubinger,
  ``The segmented and annotated iapr tc-12 benchmark,'' \emph{Computer Vision
  and Image Understanding}, vol. 114, no.~4, pp. 419--428, 2010.

\bibitem{wang2020self}
X.~Wang, X.~Zou, E.~M. Bakker, and S.~Wu, ``Self-constraining and
  attention-based hashing network for bit-scalable cross-modal retrieval,''
  \emph{Neurocomputing}, vol. 400, pp. 255--271, 2020.

\bibitem{russakovsky2015imagenet}
O.~Russakovsky, J.~Deng, H.~Su, J.~Krause, S.~Satheesh, S.~Ma, Z.~Huang,
  A.~Karpathy, A.~Khosla, M.~Bernstein \emph{et~al.}, ``Imagenet large scale
  visual recognition challenge,'' \emph{International journal of computer
  vision}, vol. 115, no.~3, pp. 211--252, 2015.

\bibitem{glorot2010understanding}
X.~Glorot and Y.~Bengio, ``Understanding the difficulty of training deep
  feedforward neural networks,'' in \emph{Proceedings of the thirteenth
  international conference on artificial intelligence and statistics}, 2010,
  pp. 249--256.

\end{thebibliography}
%

%

\begin{IEEEbiography}[{\includegraphics[width=1in,height=1.25in,clip,keepaspectratio]{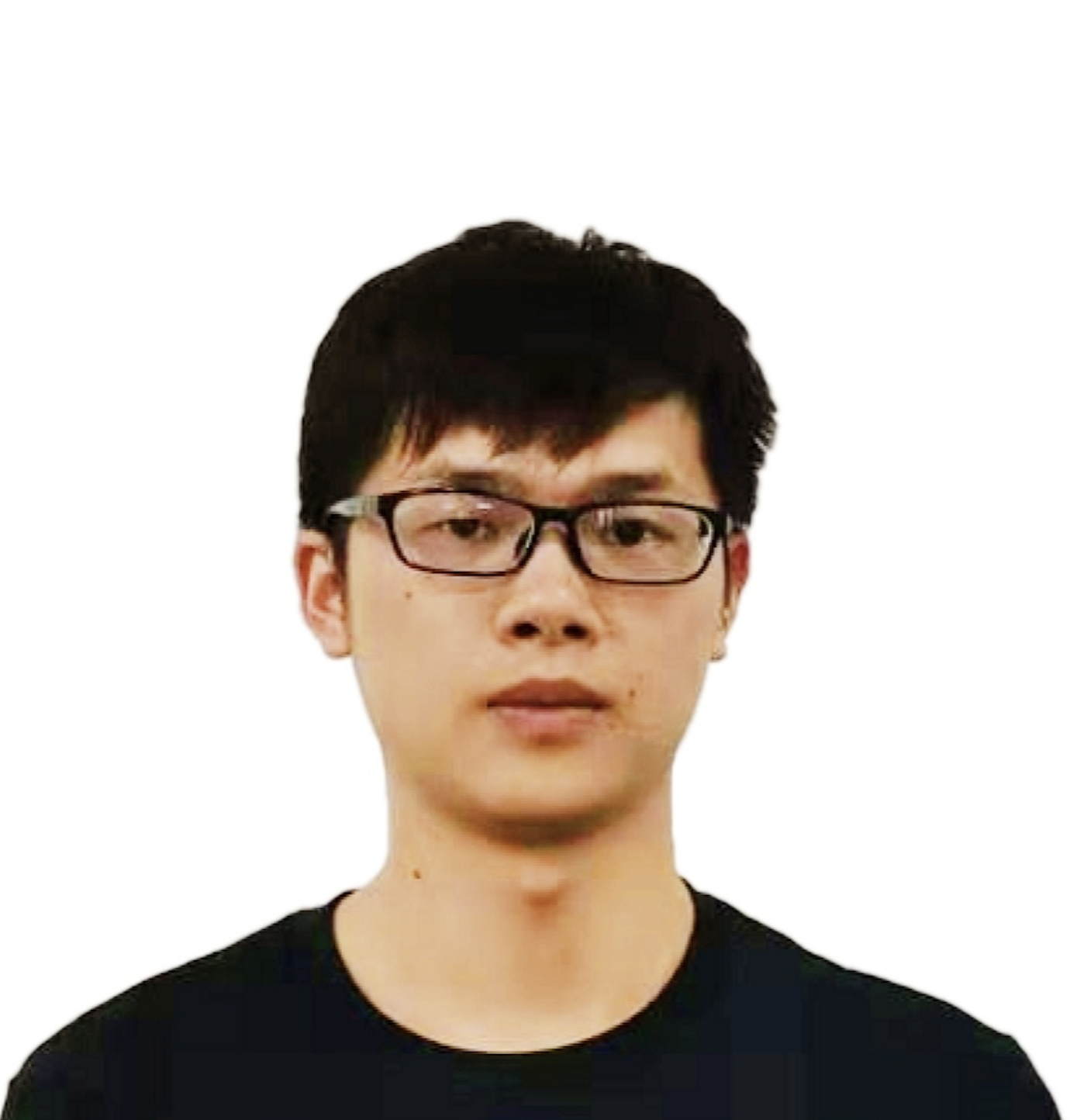}}]{Rong-Cheng Tu}
	received the bachelor's degree from Beijing Institute of Technology, China, in 2018. He is currently working toward the Ph.D in the Department of Computer Science and Technology, Beijing Institute of Technology, China. His research interests are in deep learning and learning to hash.
\end{IEEEbiography}

\begin{IEEEbiography}[{\includegraphics[width=1in,height=1.25in,clip,keepaspectratio]{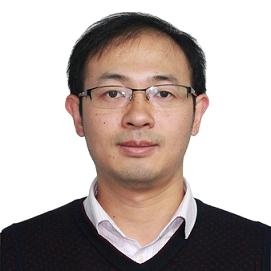}}]{Xian-Ling Mao}
	received the Ph.D. science degree from Peking University, China, in 2012. He is currently an Associate Professor with the Department of Computer Science and Technology, Beijing Institute of Technology, China. His major research interests include deep learning, machine learning, information retrieval, natural language processing, artificial intelligence and network data mining.
\end{IEEEbiography}

\begin{IEEEbiography}[{\includegraphics[width=1in,height=1.25in,clip,keepaspectratio]{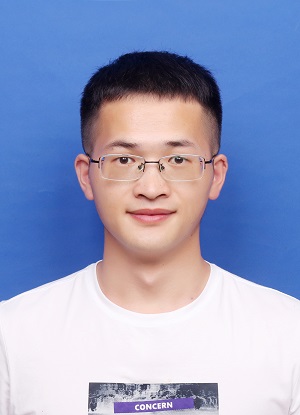}}]{Rongxin Tu} 
	is currently a graduate student in the school of Information Management, Jiangxi University of Finance and Economics, Nanchang, China. His research interests include multimedia security, image processing, etc.
\end{IEEEbiography}

\begin{IEEEbiography}[{\includegraphics[width=1in,height=1.25in,clip,keepaspectratio]{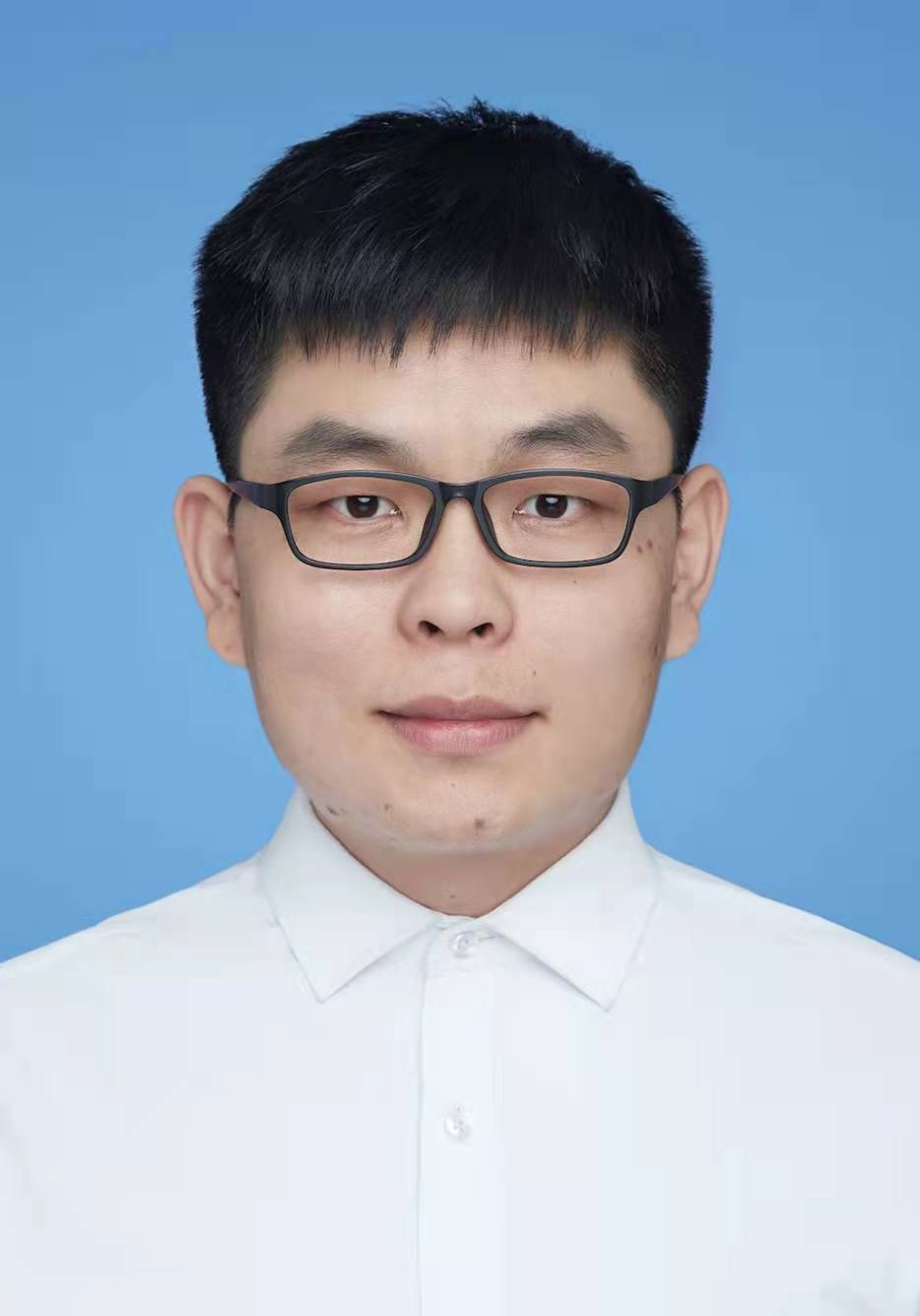}}]{Binbin Bian}
	is currently a assistant researcher in Artificial Intelligence and Big Data Research Center, Beijing Academy of Science and Technology, and a Ph.D. candidate of Beijing Institute of Technology. His research interests span information retrieval, machine learning, deep learning, data mining, and natural language processing.
\end{IEEEbiography}

\begin{IEEEbiography}[{\includegraphics[width=1in,height=1.25in,clip,keepaspectratio]{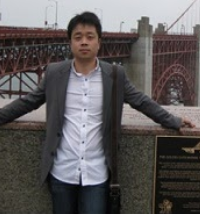}}]{Wei wei}
received the Ph.D. degree from the Huazhong University of Science and Technology, China, in 2012. He is currently an Associate Professor with School of Computer Science and Technology and the Director of Cognitive Computing and Intelligent Information Processing (CCIIP) Laboratory in Huazhong University of Science and Technology, China. His major research interests include information retrieval, natural language processing, artificial intelligence, data mining (text mining), statistics machine learning, social media analysis and mining recommender system.
\end{IEEEbiography}

\begin{IEEEbiography}[{\includegraphics[width=1in,height=1.25in,clip,keepaspectratio]{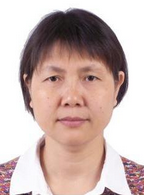}}]{Heyan Huang}
	received the bachelor's degree from Wuhan University of Surveying and Mapping, China, in 1983, the master's degree from  National University of Defense Technology, China, in 1986, and the Ph.D. degree from the Institute of Computing Technology, Chinese Academy of Sciences, China, in 1989. She is currently a professor and the Dean with the Department of Computer Science and Technology, Beijing Institute of Technology, China. Her major research interests include natural language processing, information content security, intelligent application system.
\end{IEEEbiography}



\end{document}